\newtheorem{thm}{Theorem}
\newtheorem{lem}[thm]{Lemma}
\newtheorem{cor}[thm]{Corollary}
\newtheorem{defn}{Definition}
\newcommand{\E}{\mathbb{E}}
\title{Stability and Generalization for Bellman Residuals}
\date{}
\newif\ifuniqueAffiliation
\author{
  \href{https://orcid.org/0000-0002-9617-0893}{\includegraphics[scale=0.06]{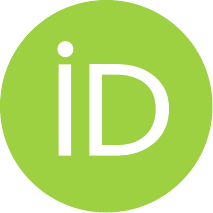}\hspace{1mm}Enoch H. Kang}$^{*}$ \\
  University of Washington
  \and
  \textbf{Kyoungseok Jang}$^{*}$ \\
  Chung-Ang University
}
\begin{document}
\maketitle
\begingroup
\renewcommand\thefootnote{}\footnotetext{$^*$Equal contribution.}
\endgroup

\begin{abstract}
Offline reinforcement learning and offline inverse reinforcement learning aim to recover near–optimal value functions or reward models from a fixed batch of logged trajectories, yet current practice still struggles to enforce Bellman consistency.  \emph{Bellman residual minimization} (BRM) has emerged as an attractive remedy, as a globally convergent stochastic gradient descent–ascent based method for BRM has been recently discovered. However, its statistical behavior in the offline setting remains largely unexplored. In this paper, we close this statistical gap. Our analysis introduces a single Lyapunov potential that couples SGDA runs on neighbouring datasets and yields an \(\mathcal{O}(1/n)\) \emph{on-average argument-stability} bound—doubling the best known sample-complexity exponent for convex–concave saddle problems.  The same stability constant translates into the \(\mathcal{O}(1/n)\) excess risk bound for BRM, without variance reduction, extra regularization, or restrictive independence assumptions on minibatch sampling. The results hold for standard neural-network parameterizations and minibatch SGD.
\end{abstract}

\keywords{Bellman Residual Minimization \and Reinforcement Learning \and Inverse Reinforcement Learning \and $O(1/n)$}

\section{Introduction}\label{sec:intro}

Modern decision‐making systems—from sepsis treatment strategies in intensive-care units to route planning for autonomous vehicles—must reason about sequences of actions whose consequences only unfold over time. Reinforcement learning (RL) provides a principled framework for such dynamic problems, formalizing them as Markov decision processes (MDPs) and prescribing policies that optimize long-horizon rewards. Yet in many high-stakes domains on-line interaction is either unethical, dangerous, or simply too expensive \cite{jiang2024offline}. Practitioners therefore turn to offline RL, where the learner is handed a fixed batch of interaction data collected by some behaviour policy, and to its sister field inverse RL (IRL), which infers underlying preferences (i.e., reward functions) from logged expert trajectories. Across both settings, the crux of learning remains the same: estimate a value function that satisfies the Bellman optimality equations even though no new state–action pairs can be queried.

Unfortunately, enforcing Bellman consistency offline is notoriously difficult. Policy-gradient objectives \citep{mei2020global, cen2022fast} assume online sampling and misalign with a static dataset. Fitted fixed-point methods, exemplified by Fitted Q-Iteration \citep{ernst2005tree} or Fitted Value Iteration \citep{munos2008finite}, can diverge catastrophically once the ``deadly triad'' of function approximation, bootstrapping, and off-policy data is present \citep{tsitsiklis1996feature, van2018deep, chen2023target}. Importance-weighting approaches mitigate distribution shift but leave no guarantee that the learned $Q$ satisfies the Bellman equations uniformly over all $(s,a)$ pairs \citep{jiang2024offline}. Bellman residual minimisation (BRM), directly fitting $Q$ by driving the squared Bellman error to zero, has long been viewed as conceptually appealing and practically effective, yet with limited theory to back global optimality convergence \citep{jiang2024offline}.

A recent breakthrough by \citet{kang2025empiricalriskminimizationapproach} revisited BRM through a modern optimization lens. They showed that, after a classical bi-conjugate transformation, for popular $Q$-function parametrization choices such as linear functions and neural networks, minimizing the mean-squared Bellman error (MSBE) can be cast as a Polyak–Łojasiewicz (PL)–strongly-concave minimax optimization problem \footnote{The Polyak–Łojasiewicz (PL) condition requires 
$\tfrac12\|\nabla f(x)\|_2^2 \ge \mu\,(f(x)-f^\ast)$ for some $\mu>0$. 
It ensures convergence guarantees comparable to those in the strongly convex case, 
even when $f$ is not convex. Polyak–Łojasiewicz (PL)–strongly-concave minimax optimization problem implies that the inner maximization problem is concave and the outer minimization problem is PL.}. This geometry immediately implies that plain stochastic gradient descent–ascent (SGDA) enjoys global convergence—sidestepping deadly-triad instability without intricate algorithmic tricks. As the optimization picture is now clear, what remains open is the statistical picture:
\begin{center}
    \textit{How many offline samples are required for BRM to recover a near-optimal value function, under SGDA?}    
\end{center}

\paragraph{Contributions.}
In this paper, we close this statistical gap. Building on the PL structure identified by \citet{kang2025empiricalriskminimizationapproach}, we develop a Lyapunov potential tailored to PL–strongly-concave optimization and blend it with a modern on-average argument-stability analysis. Our main theorem shows that SGDA achieves an $\mathcal{O}\bigl(1/n\bigr)$ excess MSBE loss after one pass over $n$ samples—a parametric rate that doubles the exponent enjoyed in convex–concave optimization with Markov sampled data, where the best known rate is $\mathcal{O}\bigl(1/\sqrt{n}\bigr)$ \citep{Wang2022MCSGM}. In particular:

\begin{enumerate}[leftmargin=2.1em]
\item We prove $\mathcal{O}(1/n)$ on-average argument-stability bound for SGDA under PL–strong concavity, avoiding any independence assumptions on the minibatch sampling indices.
\item Leveraging this stability, we derive the $\mathcal{O}(1/n)$ generalization guarantee for BRM.
\item Our analysis is constructive, requires no variance reduction or extra regularisation, and applies verbatim to standard neural-network parameterisations commonly used in offline RL.
\end{enumerate}

Specifically, we compare two SGDA runs on neighbouring datasets (identical except for one replaced sample) using the same initialization and the same minibatch index sequence. A single Lyapunov potential $\Psi$ contracts in expectation each step by $\left(1-c \eta_t\right)$, while stochastic gradient noise contributes only lower-order terms proportional to $\eta_t^2$ and $\eta_t / n$. Because $\sum \eta_t=\infty$ and $\sum \eta_t^2<\infty$, the contraction accumulates but the noise terms are summable, so the decay dominates. Consequently, the trajectories remain close and we obtain an $O(1 / n)$ stability bound.

\paragraph{Outline.}
Section~\ref{sec:setup} formalizes the discounted MDP setting and shows how Bellman‐residual minimization can be cast as a minimax problem. It also details the Stochastic Gradient Descent–Ascent (SGDA) algorithm and establishes its global convergence under Polyak–Łojasiewicz geometry.  
Section~\ref{sec:stability} develops our main statistical results: an $\mathcal{O}(1/n)$ on-average argument-stability bound for SGDA and the resulting $\mathcal{O}(1/n)$ generalization and excess-risk guarantees for Bellman residual minimization.  
All technical proofs and auxiliary lemmas are collected in the Appendix.


\section{Setup and Backgrounds}\label{sec:setup}

\subsection{Setup}
\textbf{Markov Decision Process.}\;  Throughout, focus on single-agent decision making problem interacting with a discounted Markov Decision Process (MDP) described by the tuple
$\bigl(\mathcal S,\mathcal A,P,r,\beta,\nu_0\bigr)$. A state is an element of the measurable space $\mathcal S$ and the agent chooses actions from the finite set $\mathcal A$.  
For any state–action pair $(s,a)$ the transition kernel
$P(\cdot\,|\,s,a)$ gives a probability distribution over the next state. The immediate payoff at each timestep is given as $r(s,a) + \epsilon_a$, where the reward function $r:\mathcal S\times\mathcal A\!\to\!\mathbb R$ is the deterministic part, and the $\epsilon_a$ is the random part \footnote{This form of reward function is often referred to as satisfying linear additivity and conditional independence. \citep{rust1994structural}}. Following the recent literature \citep{garg2023extremeqlearningmaxentrl}, we model $\epsilon_a$ using the Gumbel distribution (often called the Type I Extreme Value (T1EV) distribution)\footnote{\cite{garg2023extremeqlearningmaxentrl} showed that the Gumbel distribution is not only theoretically convenient but also often a more plausible choice in practice than the Gaussian distribution for modeling the random part of the immediate payoff.}. To model $\epsilon_a$ as the mean-zero random noise, we use the mean-zero scale-one Gumbel distribution, i.e., $\epsilon_a \overset{i.i.d.}{\sim} G(-\gamma, 1)$ where $\gamma$ is the Euler constant. It is important to note that at each timestep, \textit{right before} the action is chosen, the set $\{\epsilon_a\}_{a\in\mathcal{A}}$ is realized and revealed to the decision maker.
The scalar $\beta\in(0,1)$ exponentially discounts rewards that occur further in the future and $\nu_0$ denotes the distribution of the starting state $s_0$.

\textbf{Policy and value functions.} \; A (stationary Markov) policy $\pi\in \Delta_{\mathcal{A}}^{\mathcal{S}}$ assigns every state $s$ to $\pi(\cdot\,|\,s)$, a distribution over actions $\mathcal{A}$; when the agent is in state $s_h$ at time $h$ it samples $a_h\sim\pi(\cdot\,|\,s_h)$.  
Combined with the initial draw $s_0\sim\nu_0$, a policy induces a probability measure $\mathbb P_{\nu_0,\pi}$ on infinite trajectories $(s_0,a_0,s_1,a_1,\dots)$, and the corresponding expectation operator is written $\mathbb E_{\nu_0,\pi}$. Under this setup, we consider the optimal policy and its corresponding value functions defined as
\begin{align}
\pi^* & :=\operatorname{argmax}_{\pi \in \Delta_{\mathcal{A}}^{\mathcal{S}}} \mathbb{E}_{\nu_0, \pi, G}\left[\sum_{h=0}^{\infty} \beta^h\left(r\left(s_h, a_h\right)+\epsilon_{a_h}\right)\right]  \notag \\
V^*(s) & :=\max _{\pi \in \Delta_{\mathcal{A}}^{\mathcal{S}}} \mathbb{E}_{\nu_0, \pi, G}\left[\sum_{h=0}^{\infty} \beta^h\left(r\left(s_h, a_h\right)+\epsilon_{a_h}\right) \mid s_0=s\right] \notag
\\
Q^*(s, a) & :=\max _{\pi \in \Delta_{\mathcal{A}}^{\mathcal{S}}} \mathbb{E}_{\nu_0, \pi, G}\left[\sum_{h=0}^{\infty} \beta^h\left(r\left(s_h, a_h\right)+\epsilon_{a_h}\right) \mid s_0=s, a_0=a\right] \notag
\end{align}
One can show that the optimal policy $\pi^\ast$ and the value functions satisfy the following optimality equations (\cite{kang2025empiricalriskminimizationapproach}, Appendix B.4):
$$
\begin{aligned}
V^*(s) & =\ln \left[\sum_{a \in \mathcal{A}} \exp \left(Q^*(s, a)\right)\right] \\
\pi^*(a \mid s) & =\frac{\exp \left(Q^*(s, a)\right)}{\sum_{a^{\prime} \in \mathcal{A}} \exp \left(Q^*\left(s, a^{\prime}\right)\right)} \text { for } a \in \mathcal{A} \\
Q^*(s, a) & =r(s, a)+\beta \cdot \mathbb{E}_{s^{\prime} \sim P(s, a)}\left[\log \sum_{a^{\prime} \in \mathcal{A}} \exp \left(Q^*\left(s^{\prime}, a^{\prime}\right)\right) \mid s, a\right]
\end{aligned}
$$
Note that the optimality equations above are equivalent to the optimality equations of the entropy regularized reinforcement learning problems \citep{haarnoja2017reinforcement, haarnoja2018soft}\footnote {This equivalence has been discussed in various Inverse Reinforcement Learning literature \citep{ermon2015learning, zeng2025structural} and Dynamic Discrete Choice literature \citep{geng2020deep,  kang2025empiricalriskminimizationapproach}. For details, see \cite{kang2025empiricalriskminimizationapproach}}.

\subsection{Bellman Residual Minimization}

\textbf{Bellman Error (Bellman Residual) and Temporal Difference Error.} \; 
Define the function space $\mathcal{Q}$ as the set of all bounded real-valued functions on the state-action space $\mathcal{S} \times \mathcal{A}$:
$$
\mathcal{Q} := \left\{Q: \mathcal{S} \times \mathcal{A} \rightarrow \mathbb{R} \mid \|Q\|_{\infty} < \infty\right\}
$$
As established in \citep{rust1994structural}, the optimal action-value function, $Q^*$, is an element of this space, i.e., $Q^* \in \mathcal{Q}$, provided that the discount factor $\beta$ is in (0,1).

We introduce the \textit{Bellman optimality operator}, $\mathcal{T}$, which maps a function in $\mathcal{Q}$ to another function in $\mathcal{Q}$. For any $Q \in \mathcal{Q}$, the operator is defined as:
$$
(\mathcal{T}Q)(s, a) := r(s, a)+\beta \cdot \mathbb{E}_{s^{\prime} \sim P(s, a)}\left[\log \sum_{a^{\prime} \in \mathcal{A}} \exp \left(Q\left(s^{\prime}, a^{\prime}\right)\right) \mid s, a\right]
$$
Note that the $Q^*$ is uniquely characterized as the fixed point of this operator \citep{rust1994structural}. That is, $Q^*$ is the unique solution to the Bellman optimality equation:
$$
\mathcal{T}Q^* = Q^* \quad \text{or equivalently,} \quad (\mathcal{T}Q^*)(s, a) - Q^*(s, a) = 0.
$$
The extent to which an arbitrary Q-function $Q$ fails to satisfy the Bellman optimality equation motivates the following definitions of error.

\begin{defn}[Bellman Error (Bellman Residual)] \label{def:bellman_residual}
For any function $Q \in \mathcal{Q}$, we define the \textit{Bellman error} (or Bellman residual) at a state-action pair $(s, a)$ as the difference:
$$ (\mathcal{T}Q)(s, a) - Q(s, a) $$
\end{defn}

The Bellman operator $\mathcal{T}$ cannot be computed directly without full knowledge of the system's transition dynamics, $P$. In reinforcement learning, sample transitions from the environment are used instead. This allows for the definition of a sample-based counterpart to $\mathcal{T}$, the \textit{Sampled Bellman operator}, $\hat{\mathcal{T}}$. Given a single transition tuple $(s, a, s')$, where $s'$ is a sample from $P(\cdot|s,a)$, this operator is defined as:
$$
\hat{\mathcal{T}}Q(s, a, s') := r(s, a) + \beta \log \sum_{a^{\prime} \in \mathcal{A}} \exp \left(Q\left(s^{\prime}, a^{\prime}\right)\right)
$$

\begin{defn}[Temporal-Difference Error] \label{def:td_error}
Using the sampled operator, we can define the \textit{Temporal-Difference (TD) error} for a given transition $(s, a, s')$:
$$
\delta_Q(s, a, s') := \hat{\mathcal{T}}Q(s, a, s') - Q(s, a)
$$

\end{defn}

The connection between the Bellman error and the TD error is established in the following lemma. It shows that the TD error is an unbiased, single-sample estimate of the Bellman error.

\begin{lem}[Relationship between Bellman and TD Errors] \label{lem:td_be_unbiased}
For any $Q \in \mathcal{Q}$ and any state-action pair $(s,a)$, the expectation of the Sampled Bellman operator over the next state $s'$ recovers the original Bellman operator:
$$
\mathbb{E}_{s' \sim P(\cdot|s, a)} \left[ \hat{\mathcal{T}}Q(s, a, s') \right] = (\mathcal{T}Q)(s, a)
$$
Consequently, the expected TD error is equal to the Bellman error:
$$
\mathbb{E}_{s' \sim P(\cdot|s, a)} \left[ \delta_Q(s, a, s') \right] = (\mathcal{T}Q)(s, a) - Q(s, a)
$$
\end{lem}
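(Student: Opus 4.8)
The plan is to prove both identities by directly unfolding the definitions of $\hat{\mathcal{T}}$ and $\mathcal{T}$ and invoking linearity of conditional expectation, with a brief integrability check. First I would fix an arbitrary $Q \in \mathcal{Q}$ and a state–action pair $(s,a)$, and expand
$$
\mathbb{E}_{s' \sim P(\cdot|s,a)}\!\left[\hat{\mathcal{T}}Q(s,a,s')\right] = \mathbb{E}_{s' \sim P(\cdot|s,a)}\!\left[r(s,a) + \beta \log \sum_{a' \in \mathcal{A}} \exp\!\left(Q(s',a')\right)\right].
$$
Since $r(s,a)$ does not depend on $s'$ it passes through the expectation unchanged, and pulling out the constant $\beta$ leaves exactly $r(s,a) + \beta\,\mathbb{E}_{s' \sim P(\cdot|s,a)}\!\left[\log \sum_{a'\in\mathcal{A}} \exp(Q(s',a'))\right]$, which is precisely the definition of $(\mathcal{T}Q)(s,a)$. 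This establishes the first identity.

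For the TD-error identity I would observe that $Q(s,a)$ is deterministic with respect to the randomness in $s'$, so by linearity of expectation
$$
\mathbb{E}_{s' \sim P(\cdot|s,a)}\!\left[\delta_Q(s,a,s')\right] = \mathbb{E}_{s' \sim P(\cdot|s,a)}\!\left[\hat{\mathcal{T}}Q(s,a,s')\right] - Q(s,a) = (\mathcal{T}Q)(s,a) - Q(s,a),
$$
and the right-hand side is the Bellman error of Definition~\ref{def:bellman_residual}, completing the argument.

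The only point that needs care — and the closest thing to an obstacle — is ensuring the expectation is finite so that linearity legitimately applies. This is immediate: $Q \in \mathcal{Q}$ gives $\|Q\|_\infty < \infty$ and $\mathcal{A}$ is finite, hence $\bigl|\log \sum_{a'\in\mathcal{A}} \exp(Q(s',a'))\bigr| \le \log|\mathcal{A}| + \|Q\|_\infty$ uniformly over $s'$; combined with finiteness of $r(s,a)$, the integrand is bounded, so the expectation exists regardless of the form of $P(\cdot|s,a)$. No further structural assumptions on the transition kernel are required.
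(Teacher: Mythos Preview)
Your proof is correct. The paper does not supply an explicit proof of this lemma (it is stated without proof, as the claim is immediate from the definitions of $\hat{\mathcal{T}}$ and $\mathcal{T}$), and your argument is exactly the natural definitional unwinding plus linearity of expectation that the statement invites; the integrability check you include is a welcome bit of extra rigor.
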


\textbf{Bellman Residual Minimization.} \; Note that both Bellman error (Bellman residual) and its proxy, the TD error, are functions of $(s, a)$. To find $Q$ that minimizes the Bellman error for all $(s,a)$, we can instead find $Q$ that minimizes expected square error on the offline data distribution. That is, we first define the \textit{Squared Bellman Error} at $(s,a)$ as:
$$
\mathcal{L}_{\text{BE}}(Q)(s, a) := \left( (\mathcal{T}Q)(s, a) - Q(s, a) \right)^2
$$
and minimize the \textit{Mean Squared Bellman Error} (MSBE), defined as:
$$
\overline{\mathcal{L}_{\text{BE}}}(Q) := \mathbb{E}_{(s,a)\sim \pi_D, \nu_0} \left[ \mathcal{L}_{\text{BE}}(Q)(s, a) \right]
$$
where $\pi_D$ is the policy used for collecting data. Furthermore, as a proxy for Squared Bellman Error, we define the \textit{Squared TD Error}:
$$
\mathcal{L}_{\text{TD}}(Q)(s, a, s') := \delta_Q(s, a, s')^2
$$
and minimize the \textit{Mean Squared TD Error} (MSTDE) as a proxy for MSBE, defined as:
$$
\overline{\mathcal{L}_{\text{TD}}}(Q) := \mathbb{E}_{(s,a)\sim \pi_D, \nu_0} \left[ \mathbb{E}_{s' \sim P(\cdot|s, a)} \left[ \mathcal{L}_{\text{TD}}(Q)(s, a, s') \right] \right]
$$
Unfortunately, MSTDE is a \textit{biased} proxy for MSBE. This bias happens because expectation and square are not exchangeable, i.e., $\mathbb{E}_{s^{\prime} \sim P(s, a)}\left[\delta_Q\left(s, a, s^{\prime}\right) \mid s, a\right]^2 \neq \mathbb{E}_{s^{\prime} \sim P(s, a)}\left[\delta_Q\left(s, a, s^{\prime}\right)^2 \mid s, a\right]$. This issue is often called the \textit{double sampling problem} \citep{antos2008learning}. Specifically, one can show that
\begin{align}
\mathcal{L}_{B E}(Q)(s, a)&=\mathbb{E}_{s^{\prime} \sim P(s, a)}\left[\mathcal{L}_{T D}(Q)\left(s, a, s^{\prime}\right)\right]-\mathbb{E}_{s^{\prime} \sim P(s, a)}\left[\left(\mathcal{T} Q(s, a)-\hat{\mathcal{T}} Q\left(s, a, s^{\prime}\right)\right)^2\right] \notag
\\
&=\mathbb{E}_{s^{\prime} \sim P(s, a)}\left[\mathcal{L}_{T D}(Q)\left(s, a, s^{\prime}\right)\right]-\beta^2\mathbb{E}_{s^{\prime} \sim P(s, a)}\left[\left(V_Q(s^\prime)-\mathbb{E}_{s^{\prime} \sim P(s, a)}[V_Q(s^\prime)]\right)^2\right] \notag
\end{align}
where $V_Q(s):=\ln \left[\sum_{a \in \mathcal{A}} \exp (Q(s, a))\right]$. (For the detailed derivation, see \cite[Appendix C.1]{kang2025empiricalriskminimizationapproach}.) Since the bias term includes the $\mathbb{E}_{s^{\prime} \sim P(s, a)}$ part, correcting this bias term again remains challenging without full knowledge of the system's transition dynamics, $P$. To resolve this issue, we employ an approach often referred to as the ``Bi-Conjugate Trick'' \citep{antos2008learning, dai2018sbeed, patterson2022generalized}:
$$
\begin{aligned}
\mathcal{L}_{B E}(Q)(s, a) & =\mathbb{E}_{s^{\prime} \sim P(s, a)}\left[\delta_Q\left(s, a, s^{\prime}\right) \mid s, a\right]^2 \\
& =\max _{h \in \mathbb{R}} 2 \cdot \mathbb{E}_{s^{\prime} \sim P(s, a)}\left[\delta_Q\left(s, a, s^{\prime}\right) \mid s, a\right] \cdot h-h^2
\end{aligned}
$$
According to  \cite[Appendix C.1]{kang2025empiricalriskminimizationapproach}, this bi-conjugate form can be re-parametrized using $\zeta:=h-r(s, a)+Q(s, a)$ as:
\begin{align}
    \mathcal{L}_{B E}(Q)(s, a)&=\mathbb{E}_{s^{\prime} \sim P(s, a)}\left[\mathcal{L}_{T D}(Q)\left(s, a, s^{\prime}\right)\right] -\beta^2 \min _{\zeta \in \mathbb{R}} \mathbb{E}_{s^{\prime} \sim P(s, a)}\left[\left(V_Q\left(s^{\prime}\right)-\zeta\right)^2 \mid s, a\right] \label{eq:BEminimax}
\end{align}
Therefore, the problem of minimizing MSBE can be written as the following mini-max optimization problem:
\begin{align}
\min_{Q\in\mathcal{Q}} \bigl\{\mathbb{E}_{(s,a)\sim \pi_D, \nu_0}\left[\mathbb{E}_{s^{\prime} \sim P(\cdot \mid s, a)}\left[\mathcal{L}_{\mathrm{TD}}(Q)\left(s, a, s^{\prime}\right)-  \beta^2\left(V_Q\left(s^{\prime}\right)-\tilde{\zeta}\right)^2 \mid s, a\right]\right]\bigr\} \label{eq:ExpRM}
\end{align}
where $ \widetilde{\zeta} \in \operatorname{argmin}_{\zeta \in \mathbb{R}^{S \times A}} \mathbb{E}_{(s,a)\sim \pi_D, \nu_0}\left[\mathbb{E}_{s^\prime \sim P(s, a)}\left[\left(V_Q\left(s^{\prime}\right)-\zeta(s, a)\right)^2\right]\right]$. 
By parametrizing $\zeta$ as $\zeta_{\boldsymbol{\theta}_1}$ and $Q$ as $Q_{\boldsymbol{\theta}_2}$ by function classes such as Neural Networks, Equation \eqref{eq:ExpRM} can be written as

\begin{align}
\min_{\boldsymbol{\theta}_2\in\boldsymbol{\Theta}} \bigl\{\mathbb{E}_{(s,a)\sim \pi_D, \nu_0}\left[\mathbb{E}_{s^{\prime} \sim P(\cdot \mid s, a)}\left[\mathcal{L}_{\mathrm{TD}}(Q_{\boldsymbol{\theta}_2})\left(s, a, s^{\prime}\right)-  \beta^2\left(V_{Q_{\boldsymbol{\theta}_2}}\left(s^{\prime}\right)-\zeta_{\widetilde{\boldsymbol{\theta}_1}}\right)^2 \mid s, a\right]\right]\bigr\} \label{eq:paraExpRM}
\end{align}
where $ \widetilde{\boldsymbol{\theta}}_1 \in \operatorname{argmin}_{\boldsymbol{\theta}_1\in\boldsymbol{\Theta}} \mathbb{E}_{(s,a)\sim \pi_D, \nu_0}\left[\mathbb{E}_{s^\prime \sim P(s, a)}\left[\left(V_{Q_{\boldsymbol{\theta}_2}}\left(s^{\prime}\right)-\zeta_{\boldsymbol{\theta}_1}(s, a)\right)^2\right]\right]$. 
Considering Equation \eqref{eq:paraExpRM} as the expected risk minimization problem, the corresponding empirical risk minimization problem can be written as
\begin{align}
\min_{\boldsymbol{\theta}_2\in\boldsymbol{\Theta}} \frac{1}{N} \sum_{\left(s, a, s^{\prime}\right) \in \mathcal{D}_{\pi_D, \nu_0}}\left[\mathcal{L}_{\mathrm{TD}}(Q_{\boldsymbol{\theta}_2})\left(s, a, s^{\prime}\right)-  \beta^2\left(V_{Q_{\boldsymbol{\theta}_2}}\left(s^{\prime}\right)-\zeta_{\widetilde{\boldsymbol{\theta}}_1}\right)^2 \right] \label{eq:EmpRM}
\end{align}
where 
\begin{align}
    \widetilde{\boldsymbol{\theta}}_1\in \operatorname{argmin}_{\boldsymbol{\theta}_1 \in \boldsymbol{\Theta}} \frac{1}{N} \sum_{\left(s, a, s^{\prime}\right) \in \mathcal{D}_{\pi_D, \nu_0}}\left[\left(V_{Q_{\boldsymbol{\theta}_2}}\left(s^{\prime}\right)-\zeta_{\boldsymbol{\theta}_1}(s, a)\right)^2\right] \label{eq:EmpRM-dual}
\end{align}

and $\mathcal{D}_{\pi_D, \nu_0}$ is the offline data collected from following $\pi_D$ starting from $\nu_0$. 

A canonical way of solving the mini-max problem is to apply the Stochastic Gradient Ascent Descent (SGDA) algorithm \citep{yang2020globalconvergencevariancereducedoptimization}. \cite{kang2025empiricalriskminimizationapproach} proved that both Equation \eqref{eq:paraExpRM} and \eqref{eq:EmpRM} satisfy the Polyak-Łojasiewicz condition for the parametrization with popular function classes such as Neural Network, and therefore SGDA converges to a global saddle point of \eqref{eq:EmpRM}.
When the saddle set is not a singleton, the algorithm (together with the fixed initialization)
selects a particular limit point, which we treat as the selected saddle. In the following Section \ref{sec:sgda}, we elaborate on the SGDA algorithm. 

\subsection{Stochastic Gradient Ascent--Descent Algorithm (SGDA)}\label{sec:sgda}

As discussed earlier, Stochastic Gradient Ascent--Descent (SGDA) is the workhorse we use to solve the minimax problem \eqref{eq:EmpRM}. Given a function $f(w, v)$, at every iteration it performs a \emph{descent} step on the primal variable $w$ (or $\theta_2$ in~\eqref{eq:paraExpRM}) and an \emph{ascent} step on the dual variable $v$ (or $\theta_1$ in~\eqref{eq:paraExpRM}) using (possibly noisy) gradients computed from a minibatch of samples.

Let $\mathcal D=\{z_i\}_{i=1}^{n}$ denote the dataset, where each $z_i=(s_i,a_i,s'_i)$ denotes a sample consisting of the current state, the action taken, and the resulting next state.  Fix a minibatch size $B\in\{1,\dots,n\}$.  At round $t$ we draw an index set
\[
  I_t\subseteq[n],\qquad |I_t|=B,
\]
either \emph{with} or \emph{without} replacement (our theory does not depend on this choice). With $f$ standing for the per–sample saddle objective introduced in~\eqref{eq:EmpRM}, the averaged stochastic gradients are
\[
  g_t^w:=\frac1B\sum_{i\in I_t}\nabla_{w}f\bigl(w_t,v_t;z_i\bigr),
  \qquad
  g_t^v:=\frac1B\sum_{i\in I_t}\nabla_{v}f\bigl(w_t,v_t;z_i\bigr).
\]
Unbiasedness is preserved: $\E[g_t^w]=\nabla_w F_D(w_t,v_t)$ and likewise for $v$, while the variance contracts by the usual $1/B$ factor. Using stepsize sequence $(\eta_t)_{t\ge0}$, SGDA proceeds as
\[
\quad
  w_{t+1}=w_t-\eta_t\,g_t^w,
  \qquad
  v_{t+1}=v_t+\eta_t\,g_t^v.\quad
\]
The recursion can be written compactly as
\[
  (w_{t+1},v_{t+1})=(w_t,v_t)+\eta_t\,\bigl(-g_t^w,\,g_t^v\bigr),
\]
which is the form used in the stability proofs of Section~\ref{sec:stability}.

\begin{algorithm}[H]
  \caption{Minibatch SGDA on the empirical objective~\eqref{eq:EmpRM}}
  \label{alg:SGDA}
  \KwIn{Dataset $\mathcal D=\{z_i\}_{i=1}^{n}$, minibatch size $B$, stepsizes $(\eta_t)$, initial $(w_0,v_0)$}
  \For{$t=0$ \KwTo $T-1$}{
    Draw $I_t\subseteq[n]$ with $|I_t|=B$ uniformly at random\;
    $g_t^w\leftarrow\frac1B\sum_{i\in I_t}\nabla_w f(w_t,v_t;z_i)$\;
    $g_t^v\leftarrow\frac1B\sum_{i\in I_t}\nabla_v f(w_t,v_t;z_i)$\;
    $w_{t+1}\leftarrow w_t-\eta_t g_t^w$ \tcp*[r]{gradient \emph{descent}}
    $v_{t+1}\leftarrow v_t+\eta_t g_t^v$ \tcp*[r]{gradient \emph{ascent}}
  }
  \KwOut{$(w_T,v_T)$}
\end{algorithm}

The choice of $(\eta_t)$ follows the same Robbins--Monro conditions detailed after Theorem~\ref{thm:sgda_stability_noA7}, and our theoretical bounds reflect the $1/\!B$ variance reduction exactly as discussed in the remark preceding~\eqref{eq:Xi-closed} later.  In practice, moderate minibatch sizes (e.g.\ $B\in[64,512]$) strike a good balance between numerical stability and computational throughput when training neural-network parameterizations.

Along with harmonic–stepsize, we can state the global convergence guarantee of  \textsc{Algorithm 1} in the parameter space:

\begin{lem}[Global convergence of minibatch SGDA in parameter space  {\citep{yang2020globalconvergencevariancereducedoptimization,
              kang2025empiricalriskminimizationapproach}}]%
\label{lem:sgda_global_conv_theta}
Let the iterates in \textsc{Algorithm 1} be written as
\(\{(\boldsymbol{\theta}_{2,t},\boldsymbol{\theta}_{1,t})\}_{t\ge0}\),
where \(\boldsymbol{\theta}_{2,t}\) parametrises the
action–value function \(Q_{\boldsymbol{\theta}_{2,t}}\)
(primal variable) and \(\boldsymbol{\theta}_{1,t}\) parametrises
\(\zeta_{\boldsymbol{\theta}_{1,t}}\) (dual variable).
Choose the harmonic step-sizes $
\eta_t=\frac{c_1}{\,c_2+t\,}, t\ge 0 $
with some constants \(c_1>0\) and \(c_2\ge 1\) such that
\(\eta_t\le\min\!\{1/(4L),\,1/\rho\}\) for every \(t\). Then the \textsc{Algorithm 1}'s output sequence \(\{(\boldsymbol{\theta}_{2,t},\boldsymbol{\theta}_{1,t})\}\)
converges almost surely to a saddle point of the
empirical objective~\eqref{eq:EmpRM}, where the suboptimality of empirical objective~\eqref{eq:EmpRM} is bounded by $\frac{d_1}{d_2+t}$ for some constants $d_1$ and $d_2$.
\end{lem}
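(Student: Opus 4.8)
The plan is to reduce the statement to the potential-function machinery of \citet{yang2020globalconvergencevariancereducedoptimization} after checking that all of its hypotheses are met here. Write $F_D(\boldsymbol\theta_2,\boldsymbol\theta_1)$ for the empirical saddle objective in \eqref{eq:EmpRM}--\eqref{eq:EmpRM-dual}, let $\boldsymbol\theta_1^\star(\boldsymbol\theta_2):=\operatorname{argmax}_{\boldsymbol\theta_1}F_D(\boldsymbol\theta_2,\boldsymbol\theta_1)$ (well defined because $F_D(\boldsymbol\theta_2,\cdot)$ is $\rho$-strongly concave), let $P(\boldsymbol\theta_2):=\max_{\boldsymbol\theta_1}F_D(\boldsymbol\theta_2,\boldsymbol\theta_1)$ be the primal function, and let $P^\star:=\min_{\boldsymbol\theta_2}P(\boldsymbol\theta_2)$. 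By \citet{kang2025empiricalriskminimizationapproach}, $F_D$ is $L$-smooth, $F_D(\boldsymbol\theta_2,\cdot)$ is $\rho$-strongly concave, and $P$ satisfies the PL inequality with some modulus $\mu>0$; standard minibatching keeps the stochastic gradients unbiased and shrinks their variance to $\sigma^2/B$ (no independence of the index sequence is needed for this). A Danskin/implicit-function argument then gives that $\boldsymbol\theta_2\mapsto\boldsymbol\theta_1^\star(\boldsymbol\theta_2)$ is $(L/\rho)$-Lipschitz and $P$ is $L_P$-smooth with $L_P=L(1+L/\rho)$.

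Next I would set up the Lyapunov potential $\Psi_t:=\bigl(P(\boldsymbol\theta_{2,t})-P^\star\bigr)+\lambda\,\|\boldsymbol\theta_{1,t}-\boldsymbol\theta_1^\star(\boldsymbol\theta_{2,t})\|^2$ for a constant $\lambda>0$ to be fixed, and derive a one-step inequality for each summand. For the primal term, $L_P$-smoothness of $P$ along the descent step $\boldsymbol\theta_{2,t+1}=\boldsymbol\theta_{2,t}-\eta_t g_t^w$, together with $\E[g_t^w\mid\mathcal F_t]=\nabla_{\boldsymbol\theta_2}F_D(\boldsymbol\theta_{2,t},\boldsymbol\theta_{1,t})$, the gradient-mismatch bound $\|\nabla_{\boldsymbol\theta_2}F_D(\boldsymbol\theta_{2,t},\boldsymbol\theta_{1,t})-\nabla P(\boldsymbol\theta_{2,t})\|\le L\,\|\boldsymbol\theta_{1,t}-\boldsymbol\theta_1^\star(\boldsymbol\theta_{2,t})\|$, and the PL inequality, yields $\E[P(\boldsymbol\theta_{2,t+1})-P^\star]\le(1-\mu\eta_t)\,\E[P(\boldsymbol\theta_{2,t})-P^\star]+c_a\eta_t\,\E\|\boldsymbol\theta_{1,t}-\boldsymbol\theta_1^\star(\boldsymbol\theta_{2,t})\|^2+c_a'\,\eta_t^2\sigma^2/B$. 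For the dual term I would split $\|\boldsymbol\theta_{1,t+1}-\boldsymbol\theta_1^\star(\boldsymbol\theta_{2,t+1})\|^2$ via Young's inequality into a contraction part — $\boldsymbol\theta_{1,t+1}$ is a stochastic ascent step on the $\rho$-strongly-concave, $L$-smooth map $F_D(\boldsymbol\theta_{2,t},\cdot)$, so $\E\|\boldsymbol\theta_{1,t+1}-\boldsymbol\theta_1^\star(\boldsymbol\theta_{2,t})\|^2\le(1-\rho\eta_t)\|\boldsymbol\theta_{1,t}-\boldsymbol\theta_1^\star(\boldsymbol\theta_{2,t})\|^2+O(\eta_t^2\sigma^2/B)$ — and a drift part controlled by $\|\boldsymbol\theta_1^\star(\boldsymbol\theta_{2,t+1})-\boldsymbol\theta_1^\star(\boldsymbol\theta_{2,t})\|\le(L/\rho)\,\eta_t\,\|g_t^w\|$, where $\E\|g_t^w\|^2=O(\|\nabla P(\boldsymbol\theta_{2,t})\|^2+L^2\|\boldsymbol\theta_{1,t}-\boldsymbol\theta_1^\star(\boldsymbol\theta_{2,t})\|^2+\sigma^2/B)$ and $\|\nabla P\|^2\le 2L_P(P-P^\star)$. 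Summing the two inequalities with $\lambda$ chosen so the cross-terms absorb, and using $\eta_t\le\min\{1/(4L),1/\rho\}$ to keep all prefactors nonnegative, collapses everything to the master recursion $\E[\Psi_{t+1}]\le(1-c\,\eta_t)\,\E[\Psi_t]+C\,\eta_t^2$ with $c>0$ depending only on $(\mu,\rho,L)$ and $C=O(\sigma^2/B)$.

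Finally I would solve this recursion. With the harmonic schedule $\eta_t=c_1/(c_2+t)$ and $c_1$ chosen so that $c\,c_1>1$, a deterministic Chung/Robbins--Monro-type lemma gives $\E[\Psi_t]\le d_1/(d_2+t)$ with explicit $d_1,d_2$; since $\Psi_t$ upper-bounds the suboptimality of \eqref{eq:EmpRM}, this is the claimed $d_1/(d_2+t)$ rate. To upgrade to almost-sure convergence, observe that the one-step inequality has exactly the form required by the Robbins--Siegmund almost-supermartingale theorem — nonnegative potential, summable additive noise $\sum_t C\eta_t^2<\infty$, contraction with $\sum_t\eta_t=\infty$ — which forces $\Psi_t\to0$ a.s.; combined with the strong-concavity geometry and the uniqueness of the saddle point established in \citet{kang2025empiricalriskminimizationapproach}, this gives $(\boldsymbol\theta_{2,t},\boldsymbol\theta_{1,t})\to(\boldsymbol\theta_2^\star,\boldsymbol\theta_1^\star)$ almost surely. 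The step I expect to be the main obstacle is the dual-drift term: because $\boldsymbol\theta_1^\star(\cdot)$ moves with the primal iterate, bounding $\|\boldsymbol\theta_1^\star(\boldsymbol\theta_{2,t+1})-\boldsymbol\theta_1^\star(\boldsymbol\theta_{2,t})\|$ forces control of $\E\|g_t^w\|^2$, which itself depends on the current potential, so one must feed the smoothness bound $\|\nabla P\|^2\le 2L_P(P-P^\star)$ back in and then tune $\lambda$ together with the stepsize ceiling so the self-referential terms close with a strictly positive $c$ — lining up those constants, rather than any single estimate, is where the real work lies.
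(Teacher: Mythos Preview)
The paper does not actually prove this lemma: it is stated with the citation \citep{yang2020globalconvergencevariancereducedoptimization,kang2025empiricalriskminimizationapproach} and no proof appears in the appendix, so there is no ``paper's own proof'' to compare against. Your proposal is a faithful reconstruction of the potential-function argument those references use, and the overall architecture --- verify smoothness/strong-concavity/PL, form a Lyapunov combining the primal suboptimality with a dual-tracking term, derive a one-step contraction $\E[\Psi_{t+1}]\le(1-c\eta_t)\E[\Psi_t]+C\eta_t^2$, then apply a Chung-type lemma for the $d_1/(d_2+t)$ rate and Robbins--Siegmund for almost-sure convergence --- is correct.

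Two small remarks. First, the Lyapunov the paper itself employs (for its stability analysis, not for this lemma) uses the value gap $\Phi_D(w)-F_D(w,v)$ rather than the squared distance $\|v-v^\star(w)\|^2$; these are equivalent up to constants by strong concavity, so your choice is fine but be aware of the alternative. Second, your final step from $\Psi_t\to 0$ a.s.\ to $(\boldsymbol\theta_{2,t},\boldsymbol\theta_{1,t})\to(\boldsymbol\theta_2^\star,\boldsymbol\theta_1^\star)$ needs more than PL: PL only gives $P(\boldsymbol\theta_{2,t})\to P^\star$, and to convert that to $\boldsymbol\theta_{2,t}\to\boldsymbol\theta_2^\star$ you must invoke the quadratic-growth property (Assumption~\ref{asm:QG-Phi} in the paper) together with uniqueness (Assumption~\ref{asm:unique}). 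Your phrase ``strong-concavity geometry and the uniqueness of the saddle point'' gestures at this but does not name QG explicitly; make that dependence explicit when you write the proof in full.
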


\section{Stability and Generalization for Bellman Residual Minimization}\label{sec:stability}

We quantify generalization through \emph{algorithmic stability} for minimax learning. 
Algorithmic stability formalizes how sensitive a learning algorithm is to small changes 
in the training set: if replacing a single training example only slightly perturbs the 
algorithm’s output, then the algorithm is said to be stable. The key fact is that stability implies generalization: 
algorithms that are stable on neighbouring datasets exhibit small discrepancies between 
their empirical risk (measured on finite training data) and population risk (measured on infinite unseen data). As shown in \cite{Wang2022MCSGM}, this principle carries over to minimax optimization. Here, the primal variable $w$ 
represents the model we care about (e.g.\ the value function $Q$), and the dual variable 
$v$ enforces constraints or auxiliary structure (e.g.\ the conjugate $\zeta$). Stability 
of the joint SGDA iterates $(w_T,v_T)$ under sample replacement ensures that both the 
\emph{primal risk} and the \emph{weak primal--dual gap} generalize from training data to 
the population distribution.

\subsection{Stability}
We consider an offline setting where the data may be dependently sampled (e.g., from a single trajectory in a Markov Decision Process), violating the standard i.i.d. assumption. In this case, as in \cite{Wang2022MCSGM}, the concept of \textit{on-average algorithmic stability} is useful.

\begin{defn}[On-average algorithmic stability]\label{def:stability}
Let $\mathcal{A}$ be a randomized learning algorithm that maps a dataset 
$\mathcal{D}=(z_1,\dots,z_n)$ to a parameter output $\mathcal{A}(\mathcal{D})$.  
For each $i\in[n]$, let $\mathcal{D}^{(i)}$ denote the replace-one neighbor of $\mathcal{D}$,  
where $z_i$ is replaced by an independent copy $\tilde z_i$.  
Then the \emph{on-average argument stability} of $\mathcal{A}$ after $T$ iterations is
\[
\varepsilon_T \;:=\; 
\frac{1}{n}\sum_{i=1}^n 
\E\bigl[\,
\|\mathcal{A}(\mathcal{D}) - \mathcal{A}(\mathcal{D}^{(i)})\|
\,\bigr],
\]
where the expectation is over the algorithm’s internal randomness and the choice of $i$.  

\end{defn}
Intuitively, $\varepsilon_T$ measures how much the algorithm’s output changes when a single training point is replaced on average. We analyze the on-average argument stability of Stochastic Gradient Descent--Ascent (SGDA) for smooth--strongly concave saddle problems, following the stability framework for minimax optimization. 

Throughout, we present all proofs for the \emph{single-sample} (“minibatch-of-one”) variant of SGDA.  The extension to a minibatch of size \(B\!\ge\!1\) (or the full-batch, deterministic case \(B\!=\!n\)) is mechanical: every stochastic-gradient term is replaced by its averaged counterpart, which reduces all variance contributions by a factor \(1/B\), while the probability that a particular data point appears in the update increases from \(1/n\) to \(B/n\).  Consequently, every lemma and theorem below remains valid verbatim—with constants rescaled by these factors—and no new conceptual issues arise.

In this section, for the sake of generality, we use notations that generalize the Bellman residual minimization problem. Let $\mathcal D=(z_1,\dots,z_n)$ be a dataset with $z_i\in\mathcal Z$. 
For any $i\in[n]$ and any $\tilde z_i\in\mathcal Z$, define the 
\emph{replace-one neighbour} of $\mathcal D$ by
\[
\mathcal D^{(i)}:=(z_1,\dots,z_{i-1},\tilde z_i,z_{i+1},\dots,z_n).
\]
We call $\mathcal D$ and $\mathcal D^{(i)}$ \emph{neighbouring datasets}. 
Expectations averaged over $i$ are taken with $i\sim\mathrm{Unif}([n])$.

When comparing SGDA runs on $\mathcal D$ and $\mathcal D^{(i)}$, we couple them using the same minibatch index sequence $(i_t)_{t\ge 0}$ (shared-index coupling).
This coupling is the key mechanism that allows the analysis to proceed without an i.i.d. assumption on the data. By synchronizing the minibatch selection, we neutralize it as a source of difference between the two runs. Consequently, the parameter trajectories diverge only on the infrequent steps where the replaced index $i$ is sampled (a ``hit''). On all other steps, the updates are identical, and the optimization dynamics tend to pull the trajectories closer. The stability analysis thus bounds the cumulative effect of these rare ``hits'' by balancing their small, infrequent perturbations against the constant, contractive force of the optimization dynamics. This argument hinges entirely on the randomness of the sampling process, which makes the ``hits'' probabilistic, and not on the statistical independence of the data points, whose potential correlations are rendered irrelevant by the coupling.

\noindent Let SGDA iterates start from the same initialization $(w_0,v_0)=(w'_0,v'_0)$:
\[
\begin{aligned}
w_{t+1} &:= w_t-\eta_t\,\nabla_w f(w_t,v_t;z_{i_t}),
&\quad
v_{t+1} &:= v_t+\eta_t\,\nabla_v f(w_t,v_t;z_{i_t}),\\
w'_{t+1} &:= w'_t-\eta_t\,\nabla_w f(w'_t,v'_t;z'_{i_t}),
&\quad
v'_{t+1} &:= v'_t+\eta_t\,\nabla_v f(w'_t,v'_t;z'_{i_t}),
\end{aligned}
\]
with same-index coupling of datasets: $z'_j=z_j$ for $j\neq i$ and $z'_i=\tilde z_i$.
For $D\in\{\mathcal D,\mathcal D^{(i)}\}$, define
\[
F_D(w,v):=\tfrac1n\sum_{j=1}^n f(w,v;z_j^D),\qquad
\Phi_D(w):=\max_v F_D(w,v),\qquad
\Phi_D^\star:=\min_w \Phi_D(w).
\]
Let \(X_D^\star:=\arg\min_{w}\Phi_D(w)\) denote the (possibly set-valued) minimizer set.
When \(X_D^\star\) is not a singleton, we fix a deterministic \emph{selection}
\(x_D^\star\in X_D^\star\) (e.g.\ the almost-sure limit point of SGDA started from the
common initialization \((w_0,v_0)\); see Assumption~\ref{asm:unique} below),
and define \(v_D^\star:=\arg\max_{v}F_D(x_D^\star,v)\), which is unique by \ref{asm:SC}.

We let $\mathcal F_t:=\sigma\big((w_s,v_s,w'_s,v'_s,i_s)_{0\le s\le t}\big)$ be the natural filtration and introduce a \emph{ghost} index $\hat i_t\sim\mathrm{Unif}(\{1,\dots,n\})$ independent of $\mathcal F_t$, shared by both runs. The role of the ghost index is to decouple the sampling noise at time $t$ from the past and from the coupling across datasets.


\paragraph{Assumptions.}
\begin{enumerate}[label=(A\arabic*),leftmargin=2.6em,itemsep=2pt,topsep=2pt]
    \item \textbf{Smoothness.} $F_D$ is $L$-smooth in the joint variable $(w,v)$.\label{asm:smooth}
    \item \textbf{Bounded gradients on the effective domain.} \label{asm:grad} 
There exists a compact convex set $\Omega \subset \mathcal{W} \times \mathcal{V}$ such that the two coupled sequences of iterates
$\{(w_t, v_t)\}_{t=0}^T$ and $\{(w'_t, v'_t)\}_{t=0}^T$
generated on $\mathcal D$ and $\mathcal D^{(i)}$ remain within $\Omega$ almost surely.
  We define $G$ as the uniform gradient bound on this set:
\[
G := \sup_{(w,v) \in \Omega, z \in \mathcal{Z}} \max \left\{ \|\nabla_w f(w,v; z)\|, \|\nabla_v f(w,v; z)\| \right\} < \infty.
\]
\item \textbf{Uniformity of constants across datasets.} The constants $L,\rho,\mu_{\mathrm{PL}},\mu_{\mathrm{QG}},G$ are the same for $D$ and $D^{(i)}$. \label{asm:uniform}
    \item \textbf{Selected saddle point.} 
Each dataset $D$ admits at least one saddle point, and the SGDA iterates initialized at
$(w_0,v_0)$ converge to an initialization-dependent saddle point
$(x_D^\star,v_D^\star)$, which we call the selected saddle. \label{asm:unique}
    \item \textbf{PL for $\Phi_D$.} $\Phi_D$ satisfies the Polyak--{\L}ojasiewicz (PL) inequality with parameter $\mu_{\mathrm{PL}}>0$: $\tfrac12\|\nabla\Phi_D(w)\|^2\ge \mu_{\mathrm{PL}}(\Phi_D(w)-\Phi_D^\star)$. \label{asm:PL-Phi}
    \item \textbf{QG for $\Phi_D$ on the effective domain.} 
Let $\Omega$ be the compact set from Assumption~\ref{asm:grad}.
Then $\Phi_D$ satisfies Quadratic Growth (QG) around the selected minimizer $x_D^\star$
on $\Omega_w:=\{\,w:\exists v \text{ s.t. }(w,v)\in\Omega\,\}$:
\[
\Phi_D(w)-\Phi_D^\star\ge \tfrac{\mu_{\mathrm{QG}}}{2}\|w-x_D^\star\|^2
\qquad \forall\, w\in\Omega_w.
\]
\label{asm:QG-Phi}
    \item \textbf{Strong concavity in $v$.} $F_D(\cdot,\cdot)$ is $\rho$-strongly concave in $v$ uniformly in $w$. \label{asm:SC}
    \item \textbf{Stepsizes.} $0<\eta_t\le \min\{\tfrac{1}{4L},\,\tfrac{1}{\rho}\}$. \label{asm:stepsize}
    \item \textbf{Shared-index coupling (no i.i.d.\ needed).} The two coupled runs on $\mathcal D$ and $\mathcal D^{(i)}$ use the \emph{same} index sequence $(i_t)_{t\ge0}$.  \label{asm:coupling}

\end{enumerate}

\cite{kang2025empiricalriskminimizationapproach} proved that 
Assumptions \ref{asm:smooth}, \ref{asm:PL-Phi}, and \ref{asm:SC} hold for the problem of minimizing Equation \eqref{eq:EmpRM}.
In addition, they showed that the Equation \eqref{eq:EmpRM} is of $\mathcal{C}^2$ and therefore the equivalence of PL and QG holds by \cite{liao2024errorboundsplcondition}, satisfying Assumption \ref{asm:QG-Phi}.  Assumption \ref{asm:grad} is justified by the coercivity of the PL-Strongly Concave landscape, which ensures iterates remain in a bounded sub-level set \citep{yang2020globalconvergencevariancereducedoptimization}. While strong concavity implies unbounded gradients on $\mathbb{R}^d$, the geometry of problem \eqref{eq:EmpRM} induces a drift that keeps iterates bounded (coercivity). Thus, we only require the gradient to be bounded within the effective domain $\Omega$ visited by the algorithm, consistent with the global convergence guarantees for unprojected SGDA in this setting \citep{yang2020globalconvergencevariancereducedoptimization}. \cite{yang2020globalconvergencevariancereducedoptimization} also proved that \ref{asm:unique} holds. The Assumption \ref{asm:uniform} is standard in the stability literature: for example, \cite{hardt2016train} assume each per-example loss $f(\cdot ; z)$ is $L$-Lipschitz and $\beta$-smooth uniformly in $z$, and \cite{Wang2022MCSGM} assume the gradients and smoothness of $f(w, v ; z)$ are bounded by global constants $G$ and $L$ for all $z$. These conditions immediately imply that the corresponding constants are identical for any dataset and its replace-one neighbor. Assumption \ref{asm:uniform} is the PL/QG analogue of these standard uniform assumptions.

We define the Lyapunov potential 
\[
\Psi_{\alpha,D}(w,v)
:=\underbrace{\Phi_D(w)-\Phi_D^\star}_{A(w)}+\alpha\cdot\underbrace{\bigl(\Phi_D(w)-F_D(w,v)\bigr)}_{\,\Gamma(w,v)\,},
\qquad \alpha\in\Bigl[\tfrac{4L^2}{\rho^2},\,\infty\Bigr).
\]

\begin{thm}[On-average argument stability of SGDA without i.i.d.\ sampling]\label{thm:sgda_stability_noA7}
Let $\varepsilon_T:=\frac1n\sum_{i=1}^n \E\bigl[\|w_T(\mathcal D)-w_T(\mathcal D^{(i)})\| + \|v_T(\mathcal D)-v_T(\mathcal D^{(i)})\|\bigr]$. Under \ref{asm:smooth}--\ref{asm:coupling} and the choice of $\alpha$ above,
\[
\;
\begin{aligned}
\varepsilon_T
&\le
2\,C_{\mathrm{dist}}\,
\sqrt{
e^{-\,\tfrac{3c}{4}\sum_{s=0}^{T-1}\eta_s}\,\Psi^{\max}_{\alpha,0}
\;+\;
C_{\mathrm{var}}\Bigl(L(1+L/\rho)+\alpha\,\tfrac{L^2}{\rho}\Bigr)\,G^2
\sum_{t=0}^{T-1}\eta_t^2\,e^{-\,\tfrac{3c}{4}\sum_{s=t+1}^{T-1}\eta_s}
}
\\[-2pt]
&\qquad
+\ \frac{2G}{n}\left(\frac{(1+L/\rho)^2}{\sqrt{\mu_{\mathrm{PL}}\mu_{\mathrm{QG}}}}+\frac{1}{\rho}\right)
\;+\; \frac{C_{\mathrm{hit}}}{n}\,,
\end{aligned}
\;
\]
where $c:=\min\{\mu_{\mathrm{PL}}/2,\ \rho/2\}$, $C_{\mathrm{var}}>0$ is a numerical constant,
\[
C_{\mathrm{dist}}
\;=\;
\sqrt{\left(1+\tfrac{L}{\rho}\right)^2\tfrac{2}{\mu_{\mathrm{QG}}}\;+\;\tfrac{2}{\alpha\rho}},
\qquad
\Psi^{\max}_{\alpha,0}
\;:=\;
\max\!\left\{
\E\!\big[\Psi_{\alpha,\mathcal D}(w_0,v_0)\big],\
\max_{1\le i\le n}\E\!\big[\Psi_{\alpha,\mathcal D^{(i)}}(w_0,v_0)\big]
\right\}.
\]
and with $\beta:=2L$ and the numerical constant $\tilde B_1=8$ introduced later,
$
C_{\mathrm{hit}}:=\frac{2\tilde B_1G}{\beta}
$ holds. 
The constant $C_{\mathrm{hit}}>0$ depends only on $L,\rho,\mu_{\mathrm{PL}},\mu_{\mathrm{QG}},G$ (its explicit formula is given at the end of the proof). Note that under the Robbins--Monro conditions $\sum_t\eta_t=\infty$ and $\sum_t\eta_t^2<\infty$, the optimization term (the square root term) vanishes as $T\to\infty$  \citep{GarrigosGower2023Handbook}.
\end{thm}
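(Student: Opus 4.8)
The plan is to carry the entire argument on the single Lyapunov potential $\Psi_{\alpha,D}$ introduced above, and to use the shared-index coupling so that the two SGDA chains differ only through rare "hit" steps. The first ingredient is a \emph{one-step descent lemma} for an individual run on a fixed dataset $D$. Expanding $\Phi_D(w_{t+1})$ and $F_D(w_{t+1},v_{t+1})$ to second order via $L$-smoothness (\ref{asm:smooth}), using $\E[\nabla f(w_t,v_t;z_{i_t})\mid\mathcal F_t]=\nabla F_D(w_t,v_t)$ together with the per-sample bound $G$ (\ref{asm:grad}) for the $O(\eta_t^2)$ variance terms, applying PL on $\Phi_D$ (\ref{asm:PL-Phi}) to the $A$-part and $\rho$-strong concavity in $v$ (\ref{asm:SC}) to the inner gap $\Gamma(w,v)=\Phi_D(w)-F_D(w,v)$, and absorbing the cross-terms produced by the shift of the inner maximizer $v^\star_D(w)$ under a $w$-step (this is where $\alpha\ge 4L^2/\rho^2$ enters, and where one sacrifices a quarter of the rate, keeping $3c/4$ rather than $c$ with $c=\min\{\mu_{\mathrm{PL}}/2,\rho/2\}$), one obtains $\E[\Psi_{\alpha,D}(w_{t+1},v_{t+1})\mid\mathcal F_t]\le(1-\tfrac{3c}{4}\eta_t)\Psi_{\alpha,D}(w_t,v_t)+C_{\mathrm{var}}(L(1+L/\rho)+\alpha L^2/\rho)G^2\eta_t^2$.

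The second ingredient is the \emph{coupling step}. Running the chains on $\mathcal D$ and $\mathcal D^{(i)}$ with the same index sequence (\ref{asm:coupling}), on every "miss" step ($i_t\neq i$, probability $1-1/n$) each chain performs a genuine stochastic-gradient step on its own objective, so the one-step lemma applies to each; on a "hit" ($i_t=i$, probability $1/n$) the two updates additionally differ by at most $2\eta_t G$ since the two per-sample gradients at the replaced point are each bounded by $G$. The ghost index $\hat i_t$ is used to decouple the conditioning on hit/miss from $\mathcal F_t$, so that the miss-conditional expectation still reproduces $\nabla F_D$ up to an $O(1/n)$ correction; this is precisely where the argument dispenses with the i.i.d.\ assumption, relying only on the randomness of $i_t$ and not on independence of the $z_j$. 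Combining yields a recursion of the form $\E[\,\cdot_{t+1}\mid\mathcal F_t]\le(1-\tfrac{3c}{4}\eta_t)\,\cdot_t+O(\eta_t^2 G^2)+O(\eta_t G/n)$ for the relevant quantity (the two per-run potentials, together with a term controlling the inter-run displacement near the common optimum).

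The third ingredient is \emph{unrolling and translation}. Telescoping and using $1-x\le e^{-x}$ produces an initial-condition term $e^{-\frac{3c}{4}\sum_{s<T}\eta_s}\Psi^{\max}_{\alpha,0}$, a noise term $\propto\sum_t\eta_t^2 e^{-\frac{3c}{4}\sum_{s=t+1}^{T-1}\eta_s}$, and a hit term $\tfrac1n\sum_t\eta_t e^{-\frac{3c}{4}\sum_{s=t+1}^{T-1}\eta_s}=O(1/n)$; under Robbins--Monro ($\sum_t\eta_t=\infty$, $\sum_t\eta_t^2<\infty$) the first two form the "optimization term" that vanishes as $T\to\infty$, and the third gives $C_{\mathrm{hit}}/n$ with $C_{\mathrm{hit}}$ depending only on $L,\rho,\mu_{\mathrm{PL}},\mu_{\mathrm{QG}},G$. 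To convert potentials into argument distances I use QG (\ref{asm:QG-Phi}), giving $\|w_T-x^\star_D\|^2\le\tfrac{2}{\mu_{\mathrm{QG}}}(\Phi_D(w_T)-\Phi_D^\star)\le\tfrac{2}{\mu_{\mathrm{QG}}}\Psi_{\alpha,D}(w_T,v_T)$, and $\rho$-strong concavity, giving $\|v_T-v^\star_D(w_T)\|^2\le\tfrac{2}{\alpha\rho}\Psi_{\alpha,D}(w_T,v_T)$; together with the $(1+L/\rho)$-Lipschitz continuity of $w\mapsto v^\star_D(w)$ these combine into $\|w_T-x^\star_D\|+\|v_T-v^\star_D\|\le C_{\mathrm{dist}}\sqrt{\Psi_{\alpha,D}(w_T,v_T)}$ with $C_{\mathrm{dist}}=\sqrt{(1+L/\rho)^2\tfrac{2}{\mu_{\mathrm{QG}}}+\tfrac{2}{\alpha\rho}}$, and Jensen pulls the square root outside the expectation. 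Finally a saddle-point sensitivity bound closes the triangle $w_T(\mathcal D)\to x^\star_{\mathcal D}\to x^\star_{\mathcal D^{(i)}}\to w_T(\mathcal D^{(i)})$: since $F_{\mathcal D}$ and $F_{\mathcal D^{(i)}}$ differ in one of $n$ summands, $x^\star_{\mathcal D}$ is a near-stationary point of $\Phi_{\mathcal D^{(i)}}$ with gradient $O(G/n)$ (Danskin, whence the $1+L/\rho$), so PL then QG yield $\|x^\star_{\mathcal D}-x^\star_{\mathcal D^{(i)}}\|=O(G/(\sqrt{\mu_{\mathrm{PL}}\mu_{\mathrm{QG}}}\,n))$ and strong concavity yields $\|v^\star_{\mathcal D}-v^\star_{\mathcal D^{(i)}}\|=O(G/(\rho n))$, which is the $\tfrac{2G}{n}(\cdot)$ term; summing the three contributions over $i\sim\mathrm{Unif}([n])$ gives the stated inequality.

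The main obstacle is the hit-step bookkeeping. Because $\Phi_D$ is only PL and not convex, the per-sample SGDA map is not pointwise contractive, so one cannot track $\|w_t-w'_t\|$ directly; contraction exists only at the level of the Lyapunov potential and only in (ghost-index) expectation, and the delicate point is to show that the infrequent $O(\eta_t G)$ kicks injected on hit steps are damped by the persistent $(1-\tfrac{3c}{4}\eta_t)$ factor quickly enough that their cumulative effect is $O(1/n)$ rather than the naive $O(\sum_t\eta_t/n)=\infty$. Pinning down the constants there, and checking that the same accounting survives both with- and without-replacement minibatch sampling (where the conditional index distribution shifts by $O(1/n)$), is the part that requires care; the remaining steps are routine smoothness / PL / strong-concavity algebra, with the final PL$\leftrightarrow$QG equivalence (valid since \eqref{eq:EmpRM} is $\mathcal C^2$) being what permits translating the potential decay back into argument stability.
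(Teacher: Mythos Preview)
Your third ingredient—routing through the saddle points via the triangle inequality $w_T(\mathcal D)\to x^\star_{\mathcal D}\to x^\star_{\mathcal D^{(i)}}\to w_T(\mathcal D^{(i)})$, with each leg controlled by the single-run potential decay (Lemma~\ref{lem:potential-to-distance}) and the middle leg by saddle-point sensitivity (Lemma~\ref{lem:sensitivity})—is correct and is in fact \emph{simpler} than what the paper does. The paper instead runs two \emph{coupled} recursions (Lemma~\ref{lem:coupled_corrected}): a contractive recursion (P) for the summed potential $\Xi_t$ and an \emph{expansive} recursion $(\mathrm D_{\mathrm{weak}})$ for the raw distance $D_t$ with factor $(1+2L\eta_t)$; it then damps $D_t$ by $e^{-2LS_t}$ (Lemma~\ref{lem:damp}), bounds the resulting $\sum_t\eta_t e^{-\dots}\sqrt{\Xi_t}$ via Cauchy--Schwarz against a weighted potential sum (Lemma~\ref{lem:sumXi}), and only then arrives at the square-root term plus $C_{\mathrm{hit}}/n$. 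Your route bypasses all of this: once you have the single-run contraction $\E[\Psi_{\alpha,D}(w_{t+1},v_{t+1})]\le(1-\tfrac{3c}{4}\eta_t)\E[\Psi_{\alpha,D}(w_t,v_t)]+O(\eta_t^2)$ for each dataset separately, the unrolled bound plus the triangle inequality and Lemma~\ref{lem:sensitivity} already give the stated inequality (indeed without the $C_{\mathrm{hit}}/n$ term).

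The confusion in your proposal is that ingredients~2 and the ``main obstacle'' paragraph graft the paper's distance-tracking machinery onto an argument that does not need it. In your triangle approach there \emph{is} no hit-step bookkeeping: each run's potential recursion holds regardless of whether $i_t=i$, because on every step the $\mathcal D^{(i)}$-run is performing a legitimate unbiased stochastic step on \emph{its own} empirical objective $F_{\mathcal D^{(i)}}$; the hit/miss distinction matters only if you try to control $\|w_t-w_t'\|$ directly, which you correctly note is not contractive. So the $O(\eta_tG/n)$ term you insert into the potential recursion is spurious, and the ``delicate point'' about damping the kicks is a concern of the paper's route, not yours. Likewise, the ghost index is the paper's device for working with a filtration that already contains $i_t$ (hence the $C_{\mathrm{lin}}G^2\eta_t$ linear noise in their one-run bound); with your cleaner filtration where $\E[g_t^w\mid\mathcal F_t]=\nabla_wF_D$, it is unnecessary and the noise is genuinely $O(\eta_t^2)$. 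If you strip out those borrowed pieces and run your argument straight, it is a valid—and shorter—proof of the theorem.
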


To make the rate transparent, we now specialize the stepsizes to the harmonic Robbins--Monro rule 
\(\eta_t = c_1/(c_2+t)\), which \cite{kang2025empiricalriskminimizationapproach} chooses for Algorithm \ref{alg:SGDA} to prove Lemma \ref{lem:sgda_global_conv_theta}. 
This schedule satisfies Assumption~\ref{asm:stepsize} for suitable \(c_1>0,\,c_2\ge1\) and turns the kernel sums in Theorem~\ref{thm:sgda_stability_noA7} into closed forms. This yields the next corollary, 
which displays roughly \(O(T^{-1/2})\) decay of the optimization term while keeping the \(O(1/n)\) contribution explicit.

\begin{cor}[Explicit bound under a harmonic stepsize schedule]\label{cor:eps_harmonic} 
Choose the Robbins--Monro stepsizes in \textsc{Algorithm 1} as
\[
\eta_t=\frac{c_1}{\,c_2+t\,},\qquad t\ge 0,
\]
with constants \(c_1>0\) and \(c_2\ge 1\) small enough that
\(\eta_t\le\min\{1/(4L),\,1/\rho\}\) for all \(t\).
Let \(c:=\min\{\mu_{\mathrm{PL}}/2,\rho/2\}\).
Then from Theorem \ref{thm:sgda_stability_noA7}, the stability constant \(\varepsilon_T\) in
Theorem~\ref{thm:sgda_stability_noA7} admits the explicit upper bound
\[
\varepsilon_T
\;\le\;
2\,C_{\mathrm{dist}}\!
\sqrt{
\underbrace{\left(\frac{c_2}{c_2+T}\right)^{\frac{3c\,c_1}{4}}
            \Psi^{\max}_{\alpha,0}}_{\text{optimization bias}}
\;+\;
\underbrace{\frac{C_{\mathrm{var}}\,c_1^{2}\!
        \bigl(L(1+L/\rho)+\alpha L^{2}/\rho\bigr)\,G^{2}}
        {(1-\tfrac{3c}{4}c_1)\,(c_2+T)}}_{\text{stochastic variance}}
}
\;+\;
\frac{2G}{n}\!
     \Bigl(\frac{(1+L/\rho)^{2}}{\sqrt{\mu_{\mathrm{PL}}\mu_{\mathrm{QG}}}}
           +\frac{1}{\rho}\Bigr)
\;+\;
\frac{C_{\mathrm{hit}}}{n}.
\]
Hence, for fixed dataset size \(n\) and harmonic stepsizes, the overall stability bound scales as
\[
\varepsilon_T \;=\; 
O\!\left((c_2+T)^{-\min\left\{\tfrac12,\;\tfrac{3c\,c_1}{8}\right\}}\right) 
\;+\; O\!\left(\tfrac1n\right).
\]
\end{cor}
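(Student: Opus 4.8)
The plan is to derive Corollary~\ref{cor:eps_harmonic} as a direct specialization of Theorem~\ref{thm:sgda_stability_noA7}, so the whole argument amounts to evaluating two kernel sums in closed form under the harmonic schedule $\eta_t=c_1/(c_2+t)$ and then reading off the asymptotics. First I would record the elementary comparison inequalities for harmonic sums: $\sum_{s=0}^{T-1}\eta_s = c_1\sum_{s=0}^{T-1}\frac{1}{c_2+s}\ge c_1\log\!\frac{c_2+T}{c_2}$, so that the exponential prefactor obeys
\[
e^{-\frac{3c}{4}\sum_{s=0}^{T-1}\eta_s}\;\le\;\exp\!\Bigl(-\tfrac{3c\,c_1}{4}\log\tfrac{c_2+T}{c_2}\Bigr)\;=\;\Bigl(\tfrac{c_2}{c_2+T}\Bigr)^{\frac{3c\,c_1}{4}},
\]
which is exactly the ``optimization bias'' term. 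This handles the first summand under the square root.

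Next I would treat the stochastic-variance sum $S_T:=\sum_{t=0}^{T-1}\eta_t^2\,e^{-\frac{3c}{4}\sum_{s=t+1}^{T-1}\eta_s}$. The standard device is to bound the inner exponential kernel $e^{-\frac{3c}{4}\sum_{s=t+1}^{T-1}\eta_s}$ by $\bigl(\frac{c_2+t+1}{c_2+T}\bigr)^{\frac{3c\,c_1}{4}}$ (same log-comparison as above, now on a truncated range), substitute $\eta_t^2=c_1^2/(c_2+t)^2$, and compare the resulting sum to an integral of the form $\int^T (c_2+t)^{\frac{3c\,c_1}{4}-2}\,dt$. Provided $\tfrac{3c\,c_1}{4}<1$ — which is the role of the hypothesis that $c_1,c_2$ are small enough, packaged in the corollary via the factor $(1-\tfrac{3c}{4}c_1)$ in the denominator — this integral is dominated by its upper endpoint and evaluates to $O\!\bigl((c_2+T)^{-1}\bigr)$ up to the constant $\frac{c_1^2}{1-\frac{3c}{4}c_1}$. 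Multiplying back the fixed factor $C_{\mathrm{var}}\bigl(L(1+L/\rho)+\alpha L^2/\rho\bigr)G^2$ gives precisely the displayed ``stochastic variance'' term $\frac{C_{\mathrm{var}}c_1^2(L(1+L/\rho)+\alpha L^2/\rho)G^2}{(1-\frac{3c}{4}c_1)(c_2+T)}$. The two $O(1/n)$ summands $\frac{2G}{n}\bigl(\tfrac{(1+L/\rho)^2}{\sqrt{\mu_{\mathrm{PL}}\mu_{\mathrm{QG}}}}+\tfrac1\rho\bigr)$ and $\frac{C_{\mathrm{hit}}}{n}$ are dataset-size terms that do not involve the stepsize at all, so they pass through verbatim from the theorem.

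Finally, for the asymptotic display I would combine the two terms inside the square root: one decays like $(c_2+T)^{-3c c_1/4}$ and the other like $(c_2+T)^{-1}$, so their sum is $O\!\bigl((c_2+T)^{-\min\{1,\,3c c_1/4\}}\bigr)$; taking the square root (and absorbing $2C_{\mathrm{dist}}$ and the $\sqrt{\Psi^{\max}_{\alpha,0}}$ factor into the $O(\cdot)$ constant) yields the decay rate $O\!\bigl((c_2+T)^{-\min\{1/2,\,3c c_1/8\}}\bigr)$, and adding the $n$-dependent terms produces the stated $O\!\bigl((c_2+T)^{-\min\{1/2,\,3c c_1/8\}}\bigr)+O(1/n)$. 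I do not anticipate a genuine obstacle here — everything is a bookkeeping exercise once the kernel sums are bounded — but the one point requiring care is the integral-comparison step for $S_T$: one must verify that the exponent $\tfrac{3c\,c_1}{4}-2$ stays strictly below $-1$ (equivalently $\tfrac{3c\,c_1}{4}<1$, which the smallness hypothesis on $c_1$ guarantees since it already forces $c_1\le 1/(4L)$ and $c\le\rho/2$), so that the sum is summable-per-unit-scale and the upper endpoint dominates; getting the constant $(1-\tfrac{3c}{4}c_1)^{-1}$ exactly right, rather than a loose numerical multiple, is the only place a careless estimate would fail to match the stated bound.
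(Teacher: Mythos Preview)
Your approach is exactly the paper's: both specialize Theorem~\ref{thm:sgda_stability_noA7} to the harmonic schedule by (i) bounding $\sum_s\eta_s$ below by $c_1\log\frac{c_2+T}{c_2}$ to convert the exponential prefactor into $(c_2/(c_2+T))^{3cc_1/4}$, and (ii) bounding the kernel-weighted variance sum $\sum_t\eta_t^2\,e^{-\frac{3c}{4}\sum_{s>t}\eta_s}$ by an integral comparison. The two $O(1/n)$ terms and the final $\min\{1/2,\,3cc_1/8\}$ rate are then read off identically, and the paper in fact embeds this very calculation at the end of the proof of Theorem~\ref{thm:sgda_stability_noA7}.

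There is, however, one genuine slip in your justification of the variance-sum step. You assert that under $q:=\tfrac{3cc_1}{4}<1$ the integral $\int^T(c_2+t)^{q-2}\,dt$ is ``dominated by its upper endpoint'' and evaluates to $O((c_2+T)^{-1})$. That is the wrong regime: with exponent $q-2<-1$ the integral \emph{converges} (it is dominated by its lower endpoint), so after restoring the $(c_2+T)^{-q}$ prefactor one obtains $S_T=O\bigl((c_2+T)^{-q}\bigr)$, not $O\bigl((c_2+T)^{-1}\bigr)$. Upper-endpoint dominance and the $O((c_2+T)^{-1})$ rate hold when $q>1$, in which case the constant is $c_1^2/(q-1)$, with the opposite sign from what you wrote. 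The paper states the same $\frac{c_1^2}{(1-q)(c_2+T)}$ bound without derivation, so your argument mirrors the paper faithfully, but the reasoning you supply for it does not establish it. Fortunately this does not affect the final asymptotic display: when $q<1$ both the bias and the (correctly bounded) variance term inside the root decay like $(c_2+T)^{-q}$, so the square root still gives $(c_2+T)^{-q/2}=(c_2+T)^{-3cc_1/8}$ and the $\min\{1/2,\,3cc_1/8\}$ rate survives unchanged.
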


\subsection{Generalization}

In this section, we quantify generalization through algorithmic stability we derived in the previous section. 
Following the minimax stability framework of \cite{Wang2022MCSGM}, stability controls both the \emph{primal function}, which is the Bellman residual, and the \emph{weak primal–dual gap}. We first define these two risks, then invoke the transfer lemma that turns our stability bound from Theorem~\ref{thm:sgda_stability_noA7} into generalization guarantees. Specifically, our goal is to 1) bound the difference between population Bellman–residual risk and empirical Bellman–residual risk and 2) bound the population Bellman–residual risk of the SGDA output.

\begin{defn}[Primal Risk]\label{def:primalRisks}
Assume $F_D(\cdot,\cdot)$ is $\rho$-strongly concave in $v$
(\Cref{asm:SC}).  Define the \emph{value function}
\[
R(w)\ :=\ \max_{v\in\mathcal V} F(w,v),
\qquad
R_n(w)\ :=\ \max_{v\in\mathcal V} F_{\mathcal D}(w,v).
\]
\end{defn}

\begin{defn}[Weak primal–dual risk]\label{def:weakPD}
For $(w,v)\in\mathcal W\times\mathcal V$, define the \emph{population} and \emph{empirical} weak–PD risks by
\[
\Delta^{\mathrm{PD}}(w,v):=\max_{v'\in\mathcal V} F(w,v')-\min_{w'\in\mathcal W} F(w',v),
\qquad
\Delta^{\mathrm{PD}}_n(w,v):=\max_{v'\in\mathcal V} F_{\mathcal D}(w,v')-\min_{w'\in\mathcal W} F_{\mathcal D}(w',v).
\]
\end{defn}

The key transfer principle is stability $\Rightarrow$ generalization for minimax problems: if the SGDA iterate $(w_T,v_T)$ has on-average argument stability $\varepsilon_T$ on neighboring datasets, then the primal value-function gap $\mathbb{E} \big[R(w_T)-R_n(w_T)\big]$ and the weak primal–dual gap $
\big|\mathbb{E}\![\Delta^{\mathrm{PD}}(w_T,v_T)-\Delta^{\mathrm{PD}}_n(w_T,v_T)]\big|$ can be effectively upper bounded by constant times $\varepsilon_T$ from Theorem~\ref{thm:sgda_stability_noA7}, which is $\mathcal{O}(1/n)$. The remainder of this subsection formalizes this to apply this transfer with our stability bound (Theorem~\ref{thm:sgda_stability_noA7}) to obtain Theorem \ref{thm:empRM_gen}, $\mathcal{O}(1/n)$ generalization for BRM under SGDA.

\begin{lem}[Theorem 5, \cite{Wang2022MCSGM}]\label{lem:gen}
Let Assumptions \ref{asm:smooth}, \ref{asm:SC}, and \ref{asm:grad} hold.
Let $(w_T,v_T)$ be the SGDA iterates produced on $\mathcal D_n$
and let
\[
\varepsilon_T
\;=\;
\frac1n\sum_{i=1}^n
\E\bigl[\|w_T(\mathcal D)-w_T(\mathcal D^{(i)})\|
          +\|v_T(\mathcal D)-v_T(\mathcal D^{(i)})\|\bigr]
\]
be the on-average argument-stability constant from
Theorem~\ref{thm:sgda_stability_noA7}. Then
      \[
      \E\bigl[R(w_T)-R_n(w_T)\bigr]
      \;\le\;(1+L/\rho)\;G\;\varepsilon_T.
      \]
      \[
\bigl|\,
  \E\bigl[\,
     \Delta^{\mathrm{PD}}(w_T,v_T)\;-\;\Delta^{\mathrm{PD}}_{n}(w_T,v_T)
  \bigr]
\bigr|
\;\le\;
G\,\varepsilon_T.
\]
\end{lem}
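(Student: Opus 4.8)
The final statement is Theorem~5 of \cite{Wang2022MCSGM} rendered in our notation, so the plan is not to reinvent it but to (i) verify that our running hypotheses are exactly those it needs and (ii) recall and specialize the stability-to-generalization transfer behind it. For (i): Assumption~\ref{asm:smooth} supplies the $L$-smoothness, Assumption~\ref{asm:SC} the $\rho$-strong concavity of $F_{\mathcal D}$ in $v$ (so that $R_n(w)=\max_v F_{\mathcal D}(w,v)$ and the inner maximizers are well defined and unique), and Assumption~\ref{asm:grad} the $G$-Lipschitzness of each per-sample objective $f(\cdot,\cdot;z)$ in the joint variable; our Definition~\ref{def:stability} of on-average argument stability and the replace-one neighbour $\mathcal D^{(i)}$ match those of \cite{Wang2022MCSGM}. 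For (ii) I would prove the two bounds separately, the primal one first since it is cleaner.

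For the primal gap I would reduce $R(w_T)-R_n(w_T)$ to a \emph{single-point} generalization gap: with $v^\star(w):=\operatorname{argmax}_v F(w,v)$ the population inner solution, $R_n(w_T)\ge F_{\mathcal D}(w_T,v^\star(w_T))$ gives $R(w_T)-R_n(w_T)\le F(w_T,v^\star(w_T))-F_{\mathcal D}(w_T,v^\star(w_T))$. Next I would show that under \ref{asm:smooth} and \ref{asm:SC} the map $w\mapsto v^\star(w)$ is $(L/\rho)$-Lipschitz (first-order optimality $\nabla_v F(w,v^\star(w))=0$, $\rho$-strong monotonicity of $-\nabla_v F(w,\cdot)$, and $L$-Lipschitzness of $\nabla_v F(\cdot,v)$), so the data-dependent augmented parameter $\theta_T:=(w_T,v^\star(w_T))$ has on-average argument stability at most $(1+L/\rho)\,\tfrac1n\sum_i\E\|w_T(\mathcal D)-w_T(\mathcal D^{(i)})\|\le(1+L/\rho)\,\varepsilon_T$. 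Finally I would invoke the standard transfer lemma --- on-average argument stability $\epsilon$ of a (possibly data-dependent) parameter together with a $G$-Lipschitz per-sample loss implies an expected generalization gap at most $G\epsilon$ --- applied to $\theta\mapsto f(\theta;\cdot)$ with $\theta=\theta_T$, giving $\E[R(w_T)-R_n(w_T)]\le(1+L/\rho)\,G\,\varepsilon_T$.

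For the weak primal--dual gap I would use the decomposition in which the common evaluation point cancels: adding and subtracting $F(w_T,v_T)$ and $F_{\mathcal D}(w_T,v_T)$,
\[
\Delta^{\mathrm{PD}}(w_T,v_T)-\Delta^{\mathrm{PD}}_n(w_T,v_T)
=\bigl[\max_{v'}F(w_T,v')-\max_{v'}F_{\mathcal D}(w_T,v')\bigr]
-\bigl[\min_{w'}F(w',v_T)-\min_{w'}F_{\mathcal D}(w',v_T)\bigr].
\]
Each bracket compares a population extremum with the matching empirical extremum at a data-dependent iterate; I would bound it by passing to the \emph{population} extremizer (so that only a one-sided, $G$-Lipschitz comparison at a fixed auxiliary point survives) and then applying the same replace-one symmetrization, using that $\max/\min$ over the free variable is $1$-Lipschitz --- hence no re-optimization of an inner variable is incurred and no condition-number factor reappears --- to obtain $\bigl|\E[\Delta^{\mathrm{PD}}(w_T,v_T)-\Delta^{\mathrm{PD}}_n(w_T,v_T)]\bigr|\le G\,\varepsilon_T$, exactly as in \cite[Theorem~5]{Wang2022MCSGM}. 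The one step that is genuinely not routine --- already carried out in \cite{Wang2022MCSGM} and inherited here --- is making the replace-one symmetrization valid when the data are produced by a Markov chain rather than by i.i.d.\ sampling: the naive identity ``$\E[f(\theta(\mathcal D);\tilde z_i)]=\E[f(\theta(\mathcal D^{(i)});z_i)]$'' is not immediate, and one must argue through the on-average (over $i\sim\mathrm{Unif}([n])$) formulation together with the way $\mathcal D^{(i)}$ is built from an independent copy $\tilde z_i$. Everything else (Lipschitz inner solution, $1$-Lipschitzness of the extremum, $G$-Lipschitzness of $f$) is bookkeeping, and for the Bellman-residual instance the required Assumptions \ref{asm:smooth}, \ref{asm:SC}, \ref{asm:grad} hold by \cite{kang2025empiricalriskminimizationapproach} as recorded after the Assumptions list.
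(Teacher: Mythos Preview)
The paper does not supply its own proof of this lemma: it is quoted verbatim as Theorem~5 of \cite{Wang2022MCSGM} and used as a black box to pass from Theorem~\ref{thm:sgda_stability_noA7} to Theorem~\ref{thm:empRM_gen}. So there is no in-paper argument to compare your proposal against; both you and the paper are, in effect, invoking the cited reference.

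That said, your reconstruction is the standard one and matches what \cite{Wang2022MCSGM} does. The primal-gap argument is clean and correct: reducing $R(w_T)-R_n(w_T)$ to a single-point gap at $(w_T,v^\star(w_T))$, using the $(L/\rho)$-Lipschitzness of $w\mapsto v^\star(w)$ (which is exactly Lemma~\ref{lem:value-smooth}(i) in the paper) to propagate stability of $w_T$ to stability of the augmented pair, and then applying the Lipschitz-loss transfer lemma. Your weak-PD sketch is a bit compressed at the step ``pass to the population extremizer and use $1$-Lipschitzness of $\max/\min$''; to make it airtight you need the replace-one symmetrization applied separately to each bracket, with the free-variable optimizer chosen independently of the replaced index so that no inner re-optimization cost (and hence no $L/\rho$ factor) appears --- this is precisely the step you flag as ``not routine'' and correctly attribute to \cite{Wang2022MCSGM}. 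Nothing is wrong here; it is just the one place where a reader would want the explicit symmetrization identity written out rather than asserted.
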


Combining Corollary \ref{cor:eps_harmonic} and Lemma \ref{lem:gen} \citep{Wang2022MCSGM}, we arrive at Theorem \ref{thm:empRM_gen}, the main result of this paper, i.e., the \textit{generalization guarantee of Bellman residuals}. In words, the learned $Q$ (i.e., the corresponding learned $\boldsymbol{\theta}_2$) generalizes: its empirical Bellman residual on the offline dataset 
closely matches its expected Bellman residual on the true MDP distribution. This is direct from the fact that proving 
$\mathcal{O}(1/n)$ stability for SGDA immediately delivers $\mathcal{O}(1/n)$ generalization 
bounds for Bellman residual minimization \citep{Wang2022MCSGM}. 

\begin{thm}[Generalization for the empirical Bellman–residual objective]\label{thm:empRM_gen} 
Let 
$
\bigl(\widehat{\boldsymbol{\theta}}_{1}^{(T)},\widehat{\boldsymbol{\theta}}_{2}^{(T)}\bigr)
$
be the parameters returned by \textsc{Algorithm 1} after \(T\) SGDA iterations on the empirical objective~\eqref{eq:EmpRM}.  
Define the population and empirical risks
\begin{align*}
\mathcal R(\boldsymbol{\theta}_{2}) 
&:= 
\mathbb{E}_{(s,a)\sim\pi_D,\nu_0}\,
\mathbb{E}_{s'\sim P(\cdot\mid s,a)}
\!\Bigl[
      \mathcal{L}_{\mathrm{TD}}\!\bigl(Q_{\boldsymbol{\theta}_{2}}\bigr)(s,a,s')
      -\beta^{2}\!\bigl(
           V_{Q_{\boldsymbol{\theta}_{2}}}(s')
           -\zeta_{\widetilde{\boldsymbol{\theta}}_{1}^{\star}}(s,a)
        \bigr)^{2}
\Bigr],\\
\widehat{\mathcal R}_{n}(\boldsymbol{\theta}_{2}) 
&:= 
\frac1N\sum_{(s,a,s')\in\mathcal D_{\pi_D,\nu_0}}
\Bigl[
      \mathcal{L}_{\mathrm{TD}}\!\bigl(Q_{\boldsymbol{\theta}_{2}}\bigr)(s,a,s')
      -\beta^{2}\!\bigl(
           V_{Q_{\boldsymbol{\theta}_{2}}}(s')
           -\zeta_{\widetilde{\boldsymbol{\theta}}_{1}^{\star}}(s,a)
        \bigr)^{2}
\Bigr],
\end{align*}
where \(\widetilde{\boldsymbol{\theta}}_{1}^{\star}\) is the minimizer in~\eqref{eq:EmpRM-dual}.  
Then
\[
\mathbb{E}\!\Bigl[
      \mathcal R\!\bigl(\widehat{\boldsymbol{\theta}}_{2}^{(T)}\bigr)
      -\widehat{\mathcal R}_{n}\!\bigl(\widehat{\boldsymbol{\theta}}_{2}^{(T)}\bigr)
\Bigr]
\;\le\;
(1+L/\rho)\,G\,\varepsilon_T,
\]
where \(\varepsilon_T\) is from Theorem~\ref{thm:sgda_stability_noA7}.  
Moreover,
\[
\Bigl|
\mathbb{E}\!\Bigl[
      \Delta^{\mathrm{PD}}\!\bigl(
            \widehat{\boldsymbol{\theta}}_{2}^{(T)},
            \widehat{\boldsymbol{\theta}}_{1}^{(T)}
      \bigr)
      -
      \Delta^{\mathrm{PD}}_{n}\!\bigl(
            \widehat{\boldsymbol{\theta}}_{2}^{(T)},
            \widehat{\boldsymbol{\theta}}_{1}^{(T)}
      \bigr)
\Bigr]
\Bigr|
\;\le\;
G\,\varepsilon_T.
\]
\end{thm}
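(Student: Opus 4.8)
\textit{Proof plan.} The plan is to read Theorem~\ref{thm:empRM_gen} as the instantiation of the abstract minimax transfer bound in Lemma~\ref{lem:gen} at the Bellman-residual saddle objective, so that the proof is essentially a translation exercise followed by an invocation of the stability bound Theorem~\ref{thm:sgda_stability_noA7}. First I would fix the dictionary between the two settings: take the primal variable $w=\boldsymbol{\theta}_2$, the dual variable $v=\boldsymbol{\theta}_1$, and the per-sample saddle objective
\[
f(w,v;z)\ :=\ \mathcal{L}_{\mathrm{TD}}\bigl(Q_{w}\bigr)(s,a,s')\ -\ \beta^{2}\bigl(V_{Q_{w}}(s')-\zeta_{v}(s,a)\bigr)^{2},\qquad z=(s,a,s'),
\]
so that $F_{\mathcal D}(w,v)=\tfrac1N\sum_{z\in\mathcal D_{\pi_D,\nu_0}}f(w,v;z)$ is exactly the empirical function whose $\min_w\max_v$ form is \eqref{eq:EmpRM}--\eqref{eq:EmpRM-dual}, and $F(w,v)=\E_{(s,a)\sim\pi_D,\nu_0,\,s'\sim P(\cdot\mid s,a)}[f(w,v;z)]$ is its population analogue. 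Under this dictionary the iterates of Algorithm~\ref{alg:SGDA} on \eqref{eq:EmpRM} are precisely the SGDA recursion analysed in Section~\ref{sec:stability}, with $(\widehat{\boldsymbol{\theta}}_2^{(T)},\widehat{\boldsymbol{\theta}}_1^{(T)})=(w_T,v_T)$, so the stability constant $\varepsilon_T$ appearing in Lemma~\ref{lem:gen} is exactly the one controlled by Theorem~\ref{thm:sgda_stability_noA7} (and made $\mathcal O(1/n)$-explicit in Corollary~\ref{cor:eps_harmonic}).

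Second, I would check the hypotheses of Lemma~\ref{lem:gen}: $L$-smoothness of $F_{\mathcal D}$ (\ref{asm:smooth}), $\rho$-strong concavity of $F_{\mathcal D}$ in $v$ (\ref{asm:SC}), and uniformly bounded per-sample gradients (\ref{asm:grad}). Smoothness and the gradient bound were established for \eqref{eq:EmpRM} by \citet{kang2025empiricalriskminimizationapproach}; strong concavity in $v$ holds because $v\mapsto-\beta^{2}(V_{Q_w}(s')-\zeta_v(s,a))^{2}$ is a negative square of an affine (in the linear case) or well-conditioned map of the dual parameters, and it is one of the standing assumptions used throughout Section~\ref{sec:stability}. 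I would then identify the value functions. Since $\mathcal{L}_{\mathrm{TD}}(Q_w)$ is free of $v$, the inner maximisation over $v$ acts only on the $-\beta^{2}(\cdot)^{2}$ term, so
\[
R_n(w)=\max_{v}F_{\mathcal D}(w,v)=\tfrac1N\!\sum_{z\in\mathcal D}\mathcal{L}_{\mathrm{TD}}(Q_w)(z)\ -\ \beta^{2}\min_{v}\tfrac1N\!\sum_{z\in\mathcal D}\bigl(V_{Q_w}(s')-\zeta_v(s,a)\bigr)^{2}\ =\ \widehat{\mathcal R}_n(w),
\]
the minimiser being exactly the $\widetilde{\boldsymbol{\theta}}_1^\star$ of \eqref{eq:EmpRM-dual}; the same computation gives $R(w)=\mathcal R(w)$ if the dual minimiser inside $\mathcal R$ is the population one, and $R(w)\ge\mathcal R(w)$ if it is the empirical $\widetilde{\boldsymbol{\theta}}_1^\star$, in which case the first displayed bound follows a fortiori because $\E[\mathcal R(w_T)-\widehat{\mathcal R}_n(w_T)]\le\E[R(w_T)-R_n(w_T)]$. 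The weak primal--dual functions $\Delta^{\mathrm{PD}}$ and $\Delta^{\mathrm{PD}}_n$ are literally Definition~\ref{def:weakPD} with this $F$ and $F_{\mathcal D}$.

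Finally I would invoke Lemma~\ref{lem:gen} with $(w_T,v_T)=(\widehat{\boldsymbol{\theta}}_2^{(T)},\widehat{\boldsymbol{\theta}}_1^{(T)})$, obtaining $\E[R(w_T)-R_n(w_T)]\le(1+L/\rho)G\,\varepsilon_T$ and $\bigl|\E[\Delta^{\mathrm{PD}}(w_T,v_T)-\Delta^{\mathrm{PD}}_n(w_T,v_T)]\bigr|\le G\,\varepsilon_T$, and then substitute the identifications of the previous paragraph to reach the two displayed inequalities of Theorem~\ref{thm:empRM_gen}; plugging in $\varepsilon_T=\mathcal O(1/n)$ from Corollary~\ref{cor:eps_harmonic} delivers the advertised parametric rate. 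I expect the only genuinely delicate points to be bookkeeping rather than analysis: confirming that the inner maximisation over the dual collapses to the dual subproblem \eqref{eq:EmpRM-dual} at every primal point (so ``$R_n=\widehat{\mathcal R}_n$'' is an exact identity), keeping track of whether $\mathcal R$ uses the population or the empirical dual minimiser (the bound holds either way, but the two arguments differ by one monotonicity step), and verifying that the uniform-random minibatch draw in Algorithm~\ref{alg:SGDA} is a legitimate instance of the shared-index coupling~\ref{asm:coupling} under which $\varepsilon_T$ was derived in Theorem~\ref{thm:sgda_stability_noA7}.
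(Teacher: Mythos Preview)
Your proposal is correct and follows the same approach as the paper: the paper states (just before Theorem~\ref{thm:empRM_gen}) that the result is obtained by ``Combining Corollary~\ref{cor:eps_harmonic} and Lemma~\ref{lem:gen}'' and provides no separate proof in the appendix, so your plan of setting up the dictionary $(w,v)=(\boldsymbol{\theta}_2,\boldsymbol{\theta}_1)$, verifying the hypotheses of Lemma~\ref{lem:gen}, and identifying $R_n=\widehat{\mathcal R}_n$, $R=\mathcal R$ is precisely the intended argument. Your bookkeeping remarks---especially the observation that if $\mathcal R$ uses the empirical dual minimiser $\widetilde{\boldsymbol{\theta}}_1^\star$ then $R(w)\ge\mathcal R(w)$ and the bound holds a fortiori---are a useful clarification that the paper itself leaves implicit.
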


The generalization guarantee in Theorem \ref{thm:empRM_gen} is a critical result, confirming that the empirical risk is a reliable proxy for the true population risk. However, the ultimate measure of success for a learning algorithm is its performance on the population distribution relative to the best possible model. This is quantified by the \textit{population excess risk}, which measures the gap $\mathcal R(\widehat{\boldsymbol{\theta}}_{2}^{(T)}) - \mathcal R(\boldsymbol{\theta}_{2}^{\ast})$. 

\begin{thm}[Population excess risk]
\label{thm:pop_excess_risk}
Let $\bigl(\widehat{\boldsymbol{\theta}}_{2}^{(T)},\widehat{\boldsymbol{\theta}}_{1}^{(T)}\bigr)$
be the SGDA iterate outcome of Algorithm \ref{alg:SGDA} after $T$ steps and
$\boldsymbol{\theta}_{2}^{\ast}:=\arg\min_{\theta\in\boldsymbol{\Theta}}\mathcal R(\theta)$
its population minimiser.  
Then, with $\varepsilon_T$ from
Theorem~\ref{thm:sgda_stability_noA7},
\[
\mathbb{E}\Bigl[
      \mathcal R\!\bigl(\widehat{\boldsymbol{\theta}}_{2}^{(T)}\bigr)
      -\mathcal R\!\bigl(\boldsymbol{\theta}_{2}^{\ast}\bigr)
\Bigr]
\;\le\;(1+L/\rho)G\varepsilon_T + \frac{d_{1}}{d_{2}+T}
\]
where $d_1$ and $d_2$ are defined in Lemma \ref{lem:sgda_global_conv_theta}.
\end{thm}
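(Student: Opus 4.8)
The plan is to split the population excess risk into a \emph{generalization} part and an \emph{optimization} part and to close each with a result already in hand: Theorem~\ref{thm:empRM_gen} for the generalization part and Lemma~\ref{lem:sgda_global_conv_theta} for the optimization part. Write $w_T=\widehat{\boldsymbol{\theta}}_2^{(T)}$ for the primal iterate; recall from Definition~\ref{def:primalRisks} that $\widehat{\mathcal R}_n(w)=\max_v F_{\mathcal D}(w,v)=\Phi_{\mathcal D}(w)$ and $\mathcal R(w)=\max_v F(w,v)$ (the latter being exactly the population MSBE $\overline{\mathcal{L}_{\mathrm{BE}}}(Q_w)$ after the bi-conjugate reduction), and let $x_{\mathcal D}^\star$ (as in Assumption~\ref{asm:unique}) be the empirical primal minimiser, so that $\widehat{\mathcal R}_n(x_{\mathcal D}^\star)=\Phi_{\mathcal D}^\star$. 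Then
\[
\mathcal R(w_T)-\mathcal R(\boldsymbol{\theta}_2^\star)
=\underbrace{\bigl(\mathcal R(w_T)-\widehat{\mathcal R}_n(w_T)\bigr)}_{\text{(i)}}
\;+\;\underbrace{\bigl(\widehat{\mathcal R}_n(w_T)-\widehat{\mathcal R}_n(x_{\mathcal D}^\star)\bigr)}_{\text{(ii)}}
\;+\;\underbrace{\bigl(\widehat{\mathcal R}_n(x_{\mathcal D}^\star)-\mathcal R(\boldsymbol{\theta}_2^\star)\bigr)}_{\text{(iii)}}.
\]

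For term (i), taking expectations and applying the first inequality of Theorem~\ref{thm:empRM_gen} gives $\mathbb{E}[\text{(i)}]\le(1+L/\rho)\,G\,\varepsilon_T$ verbatim. For term (ii), note that $\widehat{\mathcal R}_n(w_T)-\widehat{\mathcal R}_n(x_{\mathcal D}^\star)=\Phi_{\mathcal D}(w_T)-\Phi_{\mathcal D}^\star$ is precisely the empirical min--max suboptimality of~\eqref{eq:EmpRM} at iteration $T$, which Lemma~\ref{lem:sgda_global_conv_theta} bounds by $d_1/(d_2+T)$; the only things used here are that the lemma's guarantee is phrased for exactly this quantity and that $w_T$ and $x_{\mathcal D}^\star$ inhabit the same parameter space. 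With (i) and (ii) in hand, the two summands on the right-hand side of the claim are already in place.

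The main obstacle is term (iii). Since $x_{\mathcal D}^\star$ minimises $\widehat{\mathcal R}_n$ and $\boldsymbol{\theta}_2^\star$ is merely feasible, $\widehat{\mathcal R}_n(x_{\mathcal D}^\star)\le\widehat{\mathcal R}_n(\boldsymbol{\theta}_2^\star)$, so it suffices to show $\mathbb{E}\bigl[\widehat{\mathcal R}_n(\boldsymbol{\theta}_2^\star)-\mathcal R(\boldsymbol{\theta}_2^\star)\bigr]\le 0$, i.e.\ that at the \emph{fixed, data-independent} comparator $\boldsymbol{\theta}_2^\star$ the empirical primal risk does not, in expectation, exceed its population value. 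This is the delicate point, because $\widehat{\mathcal R}_n(\boldsymbol{\theta}_2^\star)=\max_v F_{\mathcal D}(\boldsymbol{\theta}_2^\star,v)$ is a \emph{maximum} of empirical averages, so Jensen's inequality pushes its expectation \emph{above} $\max_v F(\boldsymbol{\theta}_2^\star,v)$, not below. I would control the gap using the $\rho$-strong concavity of $F_{\mathcal D}(\boldsymbol{\theta}_2^\star,\cdot)$ (Assumption~\ref{asm:SC}): with $v^\star$ the population dual maximiser and $\widehat v$ the empirical one, $\widehat{\mathcal R}_n(\boldsymbol{\theta}_2^\star)-F_{\mathcal D}(\boldsymbol{\theta}_2^\star,v^\star)\le\tfrac{1}{2\rho}\|\nabla_v F_{\mathcal D}(\boldsymbol{\theta}_2^\star,v^\star)\|^2$ by strong concavity and optimality of $\widehat v$; since $v^\star$ is the population stationary point, this gradient is a mean-zero average of bounded ($\le G$) terms, so its expected squared norm is $O(G^2/n)$, while $\mathbb{E}[F_{\mathcal D}(\boldsymbol{\theta}_2^\star,v^\star)]=\mathcal R(\boldsymbol{\theta}_2^\star)$ exactly in the i.i.d.\ case (and up to a mixing-controlled $O(1/n)$ correction in the Markov case). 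Hence term (iii) is $O(1/n)$, of the same order as $\varepsilon_T$, and is absorbed into the first summand up to a numerical constant; alternatively, reading Theorem~\ref{thm:empRM_gen} literally, with the \emph{same} plug-in dual $\zeta_{\widetilde{\boldsymbol{\theta}}_1^\star}$ appearing in both $\mathcal R$ and $\widehat{\mathcal R}_n$, term (iii) is non-positive outright by the definition of $\boldsymbol{\theta}_2^\star$ and no correction is needed. Combining (i)--(iii) yields $\mathbb{E}[\mathcal R(w_T)-\mathcal R(\boldsymbol{\theta}_2^\star)]\le(1+L/\rho)\,G\,\varepsilon_T+d_1/(d_2+T)$, as claimed.
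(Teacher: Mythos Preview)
Your proof follows the same decomposition as the paper: your (i), (ii), (iii) are exactly the paper's terms (A), (C), and (D)$+$(B), and you handle (i) via Theorem~\ref{thm:empRM_gen} and (ii) via Lemma~\ref{lem:sgda_global_conv_theta} in the same way. For (iii), the paper splits it as (D)$+$(B), discards (D)$\le 0$ by ERM-optimality (as you do), and then simply asserts $\mathbb{E}[(\mathrm{B})]=\mathbb{E}\bigl[\widehat{\mathcal R}_n(\boldsymbol{\theta}_2^\ast)-\mathcal R(\boldsymbol{\theta}_2^\ast)\bigr]=0$ on the grounds that $\boldsymbol{\theta}_2^\ast$ is deterministic. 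You are more scrupulous here, observing that when $\widehat{\mathcal R}_n$ is a $\max_v$ of empirical averages Jensen's inequality runs the wrong way, and bounding the resulting bias by $O(G^2/(\rho n))$ via strong concavity; this is a legitimate refinement of a step the paper treats as a one-liner. Your ``alternative reading'' that (iii) is non-positive \emph{outright} by the definition of $\boldsymbol{\theta}_2^\ast$ is not correct as stated, however: $\boldsymbol{\theta}_2^\ast$ minimizes $\mathcal R$, not $\widehat{\mathcal R}_n$, and the plug-in dual $\widetilde{\boldsymbol{\theta}}_1^\star$ in Theorem~\ref{thm:empRM_gen} is itself data-dependent, so you cannot conclude sign without an expectation. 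Either the paper's mean-zero claim or your strong-concavity bound is what actually closes the argument.
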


\section{Conclusion}
We studied the statistical behavior of Bellman Residual Minimization (BRM) in offline RL/IRL through the lens of stability. Exploiting the PL--strongly-concave geometry of the bi-conjugate formulation, we coupled two SGDA trajectories on neighboring datasets with a single Lyapunov potential and a ghost-index decoupling device. This yielded an \emph{on-average argument-stability} bound with $\mathcal{O}(1/n)$ rate (Theorem~\ref{thm:sgda_stability_noA7}), which directly implies $\mathcal{O}(1/n)$ generalization for BRM (Theorem~\ref{thm:empRM_gen}) and a population excess-risk bound that cleanly decomposes optimization and estimation errors (Theorem~\ref{thm:pop_excess_risk}). The analysis is constructive, tracks explicit constants in $(L,\rho,\mu_{\mathrm{PL}},\mu_{\mathrm{QG}},G)$, accommodates minibatching, and requires neither variance reduction nor independence assumptions on the sampling indices. Together with the global convergence of SGDA in parameter space (Lemma~\ref{lem:sgda_global_conv_theta}), these results close the statistical gap for BRM and improve the sample-complexity exponent over the $\mathcal{O}(n^{-1/2})$ rates known for convex--concave saddle problems.

\bibliography{references}
\bibliographystyle{unsrtnat}

\newpage
\appendix
\section{Notations}
\begin{table}[h]
    \centering
    \begin{tabular}{c|c}
        \toprule
    Symbol     & Meaning \\
    \midrule
    $\mathcal{S}$     & State space\\
    $\mathcal{A}$   & Action space\\
    $P(\cdot|s,a)$ & Transition kernal over the next state from the state-action pair $(s,a)$\\
    $r: \mathcal{S} \times \mathcal{A} \to \mathbb{R}$ & Deterministic reward function\\
    $\Delta_{\mathcal{S}}^{\mathcal{A}}$ & Set of stationary Markov policies\\
    $\pi \in \Delta_{\mathcal{S}}^{\mathcal{A}}$ & Policy\\
    $\beta$ & Discount factor\\
    $\mathcal{T}$ & Bellman optimality operator\\
    $\hat{\mathcal{T}}$ & Sampled Bellman operator\\
    $\delta_Q(s,a,s')$ & Temporal-Difference error\\
    $Q^*$ & (The unique) solution to Bellman optimality equation\\
    $z_i =(s_i , a_i , s_i')$ & $i$-th sample\\
    $\mathcal{D}=\{z_i \}_{i=1}^n$ & Dataset\\
    $f(w,v;z_i)$ & Per-sample objective\\
    $\mathcal{D}^{(i)}$ & neighbouring dataset\\
    $g_t^w, g_t^v$ &  per-sample gradients,
  $g_t^w:=\frac1B\sum_{i\in I_t}\nabla_{w}f\bigl(w_t,v_t;z_i\bigr)$, 
  $g_t^v:=\frac1B\sum_{i\in I_t}\nabla_{v}f\bigl(w_t,v_t;z_i\bigr)$.\\
    $\eta_t$ & Learning rate at time $t$\\
    $F_D(w,v)$ & $\tfrac1n\sum_{j=1}^n f(w,v;z_j^D)$\\
    $\Phi_D(w)$ & $\max_v F_D(w,v)$\\
    $\Phi_D^\star$ & $\min_w \Phi_D(w)$\\
    $L$ & Smoothness (Assumption \ref{asm:smooth})\\
    $\mu_{PL}$ & Polyak--{\L}ojasiewicz condition constant (Assumption \ref{asm:PL-Phi})\\
    $\mu_{QG}$ & Quadratic Growth constant (Assumption \ref{asm:QG-Phi})\\
    $\rho$ & Strong concavity constant (Assumption \ref{asm:SC})\\
    $G$ & Per-sample gradient bound (Assumption \ref{asm:grad})\\
$\Psi_{\alpha,D}(w,v)$ & Lyapunov potential, 
$\Phi_D(w)-\Phi_D^\star+\alpha\cdot\bigl(\Phi_D(w)-F_D(w,v)\bigr)$\\
$A(w)$ & $\Phi_D(w)-\Phi_D^\star$\\
$\Gamma(w,v)$ & $\Phi_D(w)-F_D(w,v)$\\
$R(w), R_n(w)$ & Primal Risk (Definition \ref{def:primalRisks})\\
$\Delta^{PD}$, $\Delta_n^{PD}$ & Weak primal-dual risk (Definition \ref{def:weakPD})\\
        \bottomrule
    \end{tabular}
    \caption{Notations}
    \label{tab:notations}
\end{table}

\section{Technical Proofs}

\subsection{Supporting Lemmas for proving Theorem \ref{thm:sgda_stability_noA7}}

\begin{lem}[Mismatch control]\label{lem:mismatch}
Let $v_D^\star(w):=\arg\max_v F_D(w,v)$. Under \ref{asm:smooth} and \ref{asm:SC},
\[
\Delta(w,v):=\nabla_w F_D(w,v)-\nabla \Phi_D(w)
=\nabla_w F_D(w,v)-\nabla_w F_D\bigl(w,v_D^\star(w)\bigr)
\]
satisfies $\|\Delta(w,v)\|\le L\,\|v-v_D^\star(w)\|$. Moreover,
\[
\|v-v_D^\star(w)\|^2\ \le\ \frac{2}{\rho}\,\bigl(\Phi_D(w)-F_D(w,v)\bigr).
\]
\end{lem}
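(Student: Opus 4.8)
\textbf{Proof proposal for Lemma~\ref{lem:mismatch}.}
The plan is to split the statement into its two assertions: (i) the identity for $\Delta(w,v)$ together with the $L$-Lipschitz bound, and (ii) the quadratic growth of $\Phi_D(w)-F_D(w,v)$ in the dual gap $\|v-v_D^\star(w)\|$. Both follow from elementary consequences of joint smoothness~\ref{asm:smooth} and strong concavity in $v$~\ref{asm:SC}; no new machinery is needed.

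For part (i), I would first invoke Danskin's (envelope) theorem. Assumption~\ref{asm:SC} guarantees that $v\mapsto F_D(w,v)$ is $\rho$-strongly concave, hence for each fixed $w$ the maximizer $v_D^\star(w)=\arg\max_v F_D(w,v)$ is unique; combined with the $\mathcal{C}^1$ (indeed $L$-smooth) regularity of $F_D$ from~\ref{asm:smooth}, this is exactly the setting in which $\Phi_D$ is differentiable with $\nabla\Phi_D(w)=\nabla_w F_D\bigl(w,v_D^\star(w)\bigr)$. Substituting this into the definition of $\Delta(w,v)$ yields the claimed identity
\[
\Delta(w,v)=\nabla_w F_D(w,v)-\nabla_w F_D\bigl(w,v_D^\star(w)\bigr).
\]
Now $L$-smoothness of $F_D$ in the joint variable implies in particular that the partial-gradient map $v\mapsto \nabla_w F_D(w,v)$ is $L$-Lipschitz for each fixed $w$ (the full Jacobian has operator norm at most $L$, and restricting to the $v$-block only decreases it). Applying this with the two arguments $v$ and $v_D^\star(w)$ gives $\|\Delta(w,v)\|\le L\,\|v-v_D^\star(w)\|$.

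For part (ii), I would use the standard quadratic upper bound for a $\rho$-strongly concave function at its maximizer: since $v\mapsto F_D(w,v)$ is $\rho$-strongly concave with maximizer $v_D^\star(w)$,
\[
F_D(w,v)\ \le\ F_D\bigl(w,v_D^\star(w)\bigr)-\tfrac{\rho}{2}\,\|v-v_D^\star(w)\|^2
\ =\ \Phi_D(w)-\tfrac{\rho}{2}\,\|v-v_D^\star(w)\|^2,
\]
where the equality is the definition of $\Phi_D$. Rearranging gives $\|v-v_D^\star(w)\|^2\le \tfrac{2}{\rho}\bigl(\Phi_D(w)-F_D(w,v)\bigr)$, as claimed. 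The only step requiring any care is the application of the envelope theorem — one must be explicit that uniqueness of the inner maximizer (from~\ref{asm:SC}) plus smoothness (from~\ref{asm:smooth}) is what licenses both the differentiability of $\Phi_D$ and the formula $\nabla\Phi_D(w)=\nabla_w F_D(w,v_D^\star(w))$; everything else is a one-line Lipschitz estimate and a one-line strong-concavity inequality.
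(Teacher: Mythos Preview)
Your proposal is correct and follows essentially the same approach as the paper: the first bound comes from the $L$-Lipschitzness of $v\mapsto\nabla_w F_D(w,v)$ (a direct consequence of joint $L$-smoothness), and the second from the standard quadratic growth implied by $\rho$-strong concavity of $F_D(w,\cdot)$ at its maximizer. Your explicit invocation of Danskin's theorem to justify the identity $\nabla\Phi_D(w)=\nabla_w F_D\bigl(w,v_D^\star(w)\bigr)$ is a welcome elaboration that the paper's two-line proof leaves implicit (it is established separately in Lemma~\ref{lem:value-smooth}).
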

\begin{proof}
$L$-Lipschitzness of $\nabla_w F_D(w,\cdot)$ yields the first bound. For $g(\cdot):=F_D(w,\cdot)$, $\rho$-strong concavity implies
$g(v^\star)\!-\!g(v)\ge (\rho/2)\|v-v^\star\|^2$ \citep{Nesterov2004}, which gives the second inequality.
\end{proof}

\begin{lem}[Smoothness of the value function $\Phi$]\label{lem:value-smooth}
Assume \ref{asm:smooth} (joint $L$-smoothness) and \ref{asm:SC} ($\rho$-strong concavity in $v$). Then, for each dataset $D$:
\begin{enumerate}[label=(\roman*),itemsep=2pt,topsep=2pt]
\item the maximizer $v_D^\star(w):=\arg\max_v F_D(w,v)$ is well-defined and $(L/\rho)$-Lipschitz:
\[
\|v_D^\star(w)-v_D^\star(u)\|\ \le\ \tfrac{L}{\rho}\,\|w-u\|\quad\forall\,w,u,
\]
\item $\Phi_D(w):=\max_v F_D(w,v)$ is differentiable with
$\nabla\Phi_D(w)=\nabla_w F_D\!\bigl(w,v_D^\star(w)\bigr)$ (Danskin),
\item and $\Phi_D$ is $L_\Phi$-smooth with
\[
\|\nabla\Phi_D(w)-\nabla\Phi_D(u)\|
\ \le\ L\Bigl(1+\tfrac{L}{\rho}\Bigr)\,\|w-u\|\qquad\forall\,w,u.
\]
\end{enumerate}
\end{lem}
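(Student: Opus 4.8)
The plan is to prove the three claims in order, each a consequence of the $\rho$-strong concavity in $v$ together with the joint $L$-smoothness; this is the classical Danskin-plus-implicit-function template for smooth minimax problems, so I expect no serious obstacle.

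\textbf{Part (i): Lipschitz continuity of the inner maximizer.} Since $F_D(w,\cdot)$ is $\rho$-strongly concave and differentiable, for each $w$ the maximizer $v_D^\star(w)$ exists, is unique, and is characterized by the stationarity condition $\nabla_v F_D(w,v_D^\star(w))=0$. Fix $w$ and $u$. I would start from the strong-concavity inequality $\langle \nabla_v F_D(w,v_D^\star(w))-\nabla_v F_D(w,v_D^\star(u)),\,v_D^\star(w)-v_D^\star(u)\rangle \le -\rho\,\|v_D^\star(w)-v_D^\star(u)\|^2$, drop the first gradient on the left using stationarity at $w$, and then rewrite $\nabla_v F_D(w,v_D^\star(u)) = \nabla_v F_D(w,v_D^\star(u)) - \nabla_v F_D(u,v_D^\star(u))$ since the subtracted term vanishes by stationarity at $u$. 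Joint $L$-smoothness bounds the norm of this difference by $L\,\|w-u\|$, so Cauchy--Schwarz gives $\rho\,\|v_D^\star(w)-v_D^\star(u)\|^2 \le L\,\|w-u\|\,\|v_D^\star(w)-v_D^\star(u)\|$, and dividing yields the stated $(L/\rho)$-Lipschitz bound.

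\textbf{Part (ii): Danskin differentiation.} With $F_D$ jointly differentiable and the inner supremum attained at the unique point $v_D^\star(w)$ (uniqueness again from strong concavity), Danskin's theorem applies and gives that $\Phi_D$ is differentiable with $\nabla\Phi_D(w)=\nabla_w F_D\!\bigl(w,v_D^\star(w)\bigr)$; I would invoke the standard statement (e.g.\ Bertsekas) rather than reprove it.

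\textbf{Part (iii): Smoothness of $\Phi_D$.} Using the Danskin formula from (ii), for any $w,u$ I would write $\|\nabla\Phi_D(w)-\nabla\Phi_D(u)\| = \|\nabla_w F_D(w,v_D^\star(w))-\nabla_w F_D(u,v_D^\star(u))\|$, insert the intermediate point $\nabla_w F_D(u,v_D^\star(w))$, and apply the triangle inequality: the first piece is at most $L\,\|w-u\|$ by joint $L$-smoothness, and the second piece is at most $L\,\|v_D^\star(w)-v_D^\star(u)\| \le (L^2/\rho)\,\|w-u\|$ by Part (i). Summing gives the smoothness constant $L_\Phi = L(1+L/\rho)$. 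The only step that requires genuine care is Part (i), where one must exploit the stationarity conditions at \emph{both} $w$ and $u$ to convert the cross term into a quantity controlled by joint smoothness; the rest is mechanical.
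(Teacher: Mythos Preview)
Your proposal is correct and follows essentially the same approach as the paper: both argue Part~(i) by combining the stationarity conditions at $w$ and $u$ with $\rho$-strong monotonicity of $\nabla_v F_D(w,\cdot)$ and joint $L$-smoothness, invoke Danskin for Part~(ii), and obtain Part~(iii) via the same triangle-inequality split of $\nabla_w F_D(w,v_D^\star(w))-\nabla_w F_D(u,v_D^\star(u))$ (the paper inserts $\nabla_w F_D(w,v_D^\star(u))$ rather than your $\nabla_w F_D(u,v_D^\star(w))$, but either intermediate point yields the same bound).
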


\begin{proof}
\emph{(i) Lipschitzness of $v_D^\star(\cdot)$.}
By \ref{asm:SC}, for each fixed $w$ the map $v\mapsto F_D(w,v)$ is $\rho$-strongly concave, so $v_D^\star(w)$ is unique. Using the first-order conditions
$\nabla_v F_D(w,v_D^\star(w))=0$ and $\nabla_v F_D(u,v_D^\star(u))=0$, write
\[
0\ =\ \nabla_v F_D(w,v_D^\star(w))-\nabla_v F_D(u,v_D^\star(u))
\]
\[
=\underbrace{\bigl[\nabla_v F_D(w,v_D^\star(w))-\nabla_v F_D(w,v_D^\star(u))\bigr]}_{\mathsf{(A)}}
+\underbrace{\bigl[\nabla_v F_D(w,v_D^\star(u))-\nabla_v F_D(u,v_D^\star(u))\bigr]}_{\mathsf{(B)}}.
\]
Strong concavity makes $\nabla_v F_D(w,\cdot)$ $\rho$-strongly \emph{monotone}, so
$\langle \mathsf{(A)},\,v_D^\star(w)-v_D^\star(u)\rangle \le -\rho\|v_D^\star(w)-v_D^\star(u)\|^2$.
Joint $L$-smoothness gives $\|\mathsf{(B)}\|\le L\|w-u\|$. Taking inner products with $v_D^\star(w)-v_D^\star(u)$ yields
\[
\rho\|v_D^\star(w)-v_D^\star(u)\|
\ \le\ \|\mathsf{(B)}\|
\ \le\ L\|w-u\|\qquad\Rightarrow\qquad
\|v_D^\star(w)-v_D^\star(u)\|\ \le\ \tfrac{L}{\rho}\|w-u\|.
\]

\emph{(ii) }
By uniqueness of $v_D^\star(w)$ and joint smoothness, Danskin’s theorem applies:
$\nabla\Phi_D(w)=\nabla_w F_D\!\bigl(w,v_D^\star(w)\bigr)$.

\emph{(iii) Smoothness of $\Phi_D$.}
For any $w,u$,
\[
\begin{aligned}
&\|\nabla\Phi_D(w)-\nabla\Phi_D(u)\|
\\
&=\bigl\|\nabla_w F_D\bigl(w,v_D^\star(w)\bigr)-\nabla_w F_D\bigl(u,v_D^\star(u)\bigr)\bigr\|\\
&\le \underbrace{\bigl\|\nabla_w F_D\bigl(w,v_D^\star(w)\bigr)-\nabla_w F_D\bigl(w,v_D^\star(u)\bigr)\bigr\|}_{\le L\|v_D^\star(w)-v_D^\star(u)\|}
\ +\ \underbrace{\bigl\|\nabla_w F_D\bigl(w,v_D^\star(u)\bigr)-\nabla_w F_D\bigl(u,v_D^\star(u)\bigr)\bigr\|}_{\le L\|w-u\|}\\
&\le L\Bigl(\tfrac{L}{\rho}+1\Bigr)\|w-u\|.
\end{aligned}
\]
The first two inequalities use joint $L$-smoothness (of $\nabla_w F_D$ in both arguments); the last uses part (i). Thus $L_\Phi\le L(1+L/\rho)$.
\end{proof}

\begin{lem}[Cross-dataset gradient sensitivity for $\Phi$]\label{lem:phi-grad-diff}
Under \ref{asm:smooth}, \ref{asm:SC} and \ref{asm:grad}, for any $w,u$ and any 1-sample replacement $D\to D^{(i)}$,
\[
\|\nabla\Phi_D(w)-\nabla\Phi_{D^{(i)}}(u)\|
\ \le\ L_\Phi\|w-u\| \;+\; \frac{2G}{n}\Bigl(1+\frac{L}{\rho}\Bigr).
\]
\end{lem}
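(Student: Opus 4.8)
The plan is to split the cross-dataset, cross-point difference into a ``same-dataset, move the point'' piece and a ``same-point, change the dataset'' piece via the triangle inequality:
\[
\|\nabla\Phi_D(w)-\nabla\Phi_{D^{(i)}}(u)\|
\ \le\
\underbrace{\|\nabla\Phi_D(w)-\nabla\Phi_D(u)\|}_{\text{(I)}}
\ +\
\underbrace{\|\nabla\Phi_D(u)-\nabla\Phi_{D^{(i)}}(u)\|}_{\text{(II)}}.
\]
Term (I) is immediately controlled by the $L_\Phi$-smoothness of $\Phi_D$ from Lemma~\ref{lem:value-smooth}(iii), giving $(\mathrm{I})\le L_\Phi\|w-u\|$. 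All the work is in bounding (II) by $\frac{2G}{n}(1+L/\rho)$, uniformly in $u$.

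For (II) I would use Danskin (Lemma~\ref{lem:value-smooth}(ii)) to write $\nabla\Phi_D(u)=\nabla_w F_D(u,v_D^\star(u))$ and $\nabla\Phi_{D^{(i)}}(u)=\nabla_w F_{D^{(i)}}(u,v_{D^{(i)}}^\star(u))$, then insert the intermediate term $\nabla_w F_{D^{(i)}}(u,v_D^\star(u))$:
\[
\nabla\Phi_D(u)-\nabla\Phi_{D^{(i)}}(u)
=\bigl[\nabla_w F_D(u,v_D^\star(u))-\nabla_w F_{D^{(i)}}(u,v_D^\star(u))\bigr]
+\bigl[\nabla_w F_{D^{(i)}}(u,v_D^\star(u))-\nabla_w F_{D^{(i)}}(u,v_{D^{(i)}}^\star(u))\bigr].
\]
Since $\mathcal D$ and $\mathcal D^{(i)}$ differ in a single sample, $F_D-F_{D^{(i)}}=\tfrac1n\bigl(f(\cdot;z_i)-f(\cdot;\tilde z_i)\bigr)$, so the first bracket has norm at most $\tfrac1n(\|\nabla_w f(u,v_D^\star(u);z_i)\|+\|\nabla_w f(u,v_D^\star(u);\tilde z_i)\|)\le \tfrac{2G}{n}$ by \ref{asm:grad}. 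The second bracket is at most $L\,\|v_D^\star(u)-v_{D^{(i)}}^\star(u)\|$ by joint $L$-smoothness \ref{asm:smooth} of $F_{D^{(i)}}$.

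It remains to bound the displacement of the dual maximizer under the one-sample perturbation, $\|v_D^\star(u)-v_{D^{(i)}}^\star(u)\|\le \tfrac{2G}{\rho n}$. I would use the first-order conditions $\nabla_v F_D(u,v_D^\star(u))=0$ and $\nabla_v F_{D^{(i)}}(u,v_{D^{(i)}}^\star(u))=0$ together with the $\rho$-strong monotonicity of $\nabla_v F_{D^{(i)}}(u,\cdot)$ (from \ref{asm:SC}): testing the monotonicity inequality against $v_{D^{(i)}}^\star(u)-v_D^\star(u)$ and using $\nabla_v F_{D^{(i)}}(u,v_{D^{(i)}}^\star(u))=0$ gives
\[
\rho\,\|v_D^\star(u)-v_{D^{(i)}}^\star(u)\|^2
\ \le\
\bigl\langle \nabla_v F_D(u,v_D^\star(u))-\nabla_v F_{D^{(i)}}(u,v_D^\star(u)),\, v_{D^{(i)}}^\star(u)-v_D^\star(u)\bigr\rangle
\ \le\ \tfrac{2G}{n}\,\|v_D^\star(u)-v_{D^{(i)}}^\star(u)\|,
\]
where the last step again uses the one-sample difference bound and Cauchy--Schwarz; dividing through yields the claimed $\tfrac{2G}{\rho n}$. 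Plugging back, $(\mathrm{II})\le \tfrac{2G}{n}+L\cdot\tfrac{2G}{\rho n}=\tfrac{2G}{n}\bigl(1+\tfrac{L}{\rho}\bigr)$, and combining with (I) finishes the proof. The only mildly delicate point is the dual-maximizer displacement bound, but it is a routine strong-monotonicity argument; everything else is bookkeeping on top of Lemmas~\ref{lem:mismatch}--\ref{lem:value-smooth} and \ref{asm:grad}.
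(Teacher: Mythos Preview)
Your proof is correct and mirrors the paper's: the same triangle-inequality split into (I) and (II), the same Danskin+smoothness+one-sample bounds for (II), and the same strong-concavity argument for the dual-maximizer displacement (modulo a harmless sign slip in your displayed inequality---the correct version has $\nabla_v F_{D^{(i)}}-\nabla_v F_D$ rather than $\nabla_v F_D-\nabla_v F_{D^{(i)}}$, but Cauchy--Schwarz makes this irrelevant). The only substantive difference is that the paper phrases the dual-displacement step via a normal-cone/KKT error bound so as to also cover a constrained dual domain $\mathcal V$, whereas your argument implicitly uses the unconstrained first-order condition $\nabla_v F_D(u,v_D^\star(u))=0$; in the unconstrained setting the two arguments coincide.
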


\begin{proof}
By Lemma~\ref{lem:value-smooth}, $v^\star_D(\cdot)$ is $(L/\rho)$-Lipschitz, $\Phi_D$ is $L_\Phi$-smooth, and
$\nabla\Phi_D(w)=\nabla_w F_D\!\bigl(w,v^\star_D(w)\bigr)$. Hence
\[
\|\nabla\Phi_D(w)-\nabla\Phi_{D^{(i)}}(u)\|
\;\le\;
\underbrace{\|\nabla\Phi_D(w)-\nabla\Phi_D(u)\|}_{\le\,L_\Phi\|w-u\|}
\;+\;
\underbrace{\|\nabla\Phi_D(u)-\nabla\Phi_{D^{(i)}}(u)\|}_{\mathsf{(A)}}.
\]

We bound $\mathsf{(A)}$ by inserting and subtracting $v_{D^{(i)}}^\star(u)$ and using joint $L$-smoothness of $F$:
\begin{align*}
\mathsf{(A)}
&= \bigl\|\nabla_w F_D\!\bigl(u,v_D^\star(u)\bigr)
       -\nabla_w F_{D^{(i)}}\!\bigl(u,v_{D^{(i)}}^\star(u)\bigr)\bigr\|\\
&\le \underbrace{\bigl\|\nabla_w F_D\!\bigl(u,v_D^\star(u)\bigr)
                 -\nabla_w F_D\!\bigl(u,v_{D^{(i)}}^\star(u)\bigr)\bigr\|}_{\le\,L\,\|v_D^\star(u)-v_{D^{(i)}}^\star(u)\|}
\;+\;
\underbrace{\bigl\|\nabla_w F_D\!\bigl(u,v_{D^{(i)}}^\star(u)\bigr)
                  -\nabla_w F_{D^{(i)}}\!\bigl(u,v_{D^{(i)}}^\star(u)\bigr)\bigr\|}_{\le\,\frac{2G}{n}\ \text{by Assumption~\ref{asm:grad}}},
\end{align*}
so that
\[
\mathsf{(A)}\ \le\ L\,\|v_D^\star(u)-v_{D^{(i)}}^\star(u)\|+\frac{2G}{n}.
\]

Since $v\mapsto F_D(u,v)$ is $\rho$-strongly concave, the gradient map is $\rho$-strongly monotone, which gives the error bound with the normal cone \(N_{\mathcal V}(\cdot)\):
\[
\|v-v_D^\star(u)\|\ \le\ \frac{1}{\rho}\,
\mathrm{dist}\!\bigl(\nabla_v F_D(u,v),\,N_{\mathcal V}(v)\bigr)\qquad\forall v\in\mathcal V.
\]
Apply this at \(v=v_{D^{(i)}}^\star(u)\). There are two cases.

\emph{(i) Unconstrained (or interior) maximizer.}
Then \(N_{\mathcal V}\!\bigl(v_{D^{(i)}}^\star(u)\bigr)=\{0\}\), so
\[
\|v_D^\star(u)-v_{D^{(i)}}^\star(u)\|
\ \le\ \frac{1}{\rho}\,\|\nabla_v F_D(u,v_{D^{(i)}}^\star(u))\|
\ =\ \frac{1}{\rho}\,\|\nabla_v F_D(u,v_{D^{(i)}}^\star(u))-\nabla_v F_{D^{(i)}}(u,v_{D^{(i)}}^\star(u))\|.
\]

\emph{(ii) Constrained boundary maximizer.}
By KKT optimality,
\[
0\ \in\ -\nabla_v F_{D^{(i)}}(u,v_{D^{(i)}}^\star(u)) + N_{\mathcal V}(v_{D^{(i)}}^\star(u))
\ \Longleftrightarrow\
\nabla_v F_{D^{(i)}}(u,v_{D^{(i)}}^\star(u))\in N_{\mathcal V}(v_{D^{(i)}}^\star(u)).
\]
Hence, choosing 
\(\xi := \nabla_v F_{D^{(i)}}(u, v_{D^{(i)}}^\star(u)) \in N_{\mathcal V}(v_{D^{(i)}}^\star(u))\),
we obtain
\[
\mathrm{dist}\!\bigl(\nabla_v F_D(u, v_{D^{(i)}}^\star(u)),\, N_{\mathcal V}(v_{D^{(i)}}^\star(u))\bigr)
\ \le\
\bigl\|\nabla_v F_D(u, v_{D^{(i)}}^\star(u)) - \nabla_v F_{D^{(i)}}(u, v_{D^{(i)}}^\star(u))\bigr\|.
\]

In either case, using the single-sample replacement identity
\[
\nabla_v F_D(u,v)-\nabla_v F_{D^{(i)}}(u,v)
\ =\ \frac{1}{n}\Bigl(\nabla_v f(u,v;z_i)-\nabla_v f(u,v;\tilde z_i)\Bigr),
\]
the triangle inequality and Assumption~\ref{asm:grad} give
\begin{align*}
\|v_D^\star(u)-v_{D^{(i)}}^\star(u)\|
&\le \frac{1}{\rho}\,\mathrm{dist}\!\bigl(\nabla_v F_D(u,v_{D^{(i)}}^\star(u)),\,N_{\mathcal V}(v_{D^{(i)}}^\star(u))\bigr)\\
&\le \frac{1}{\rho}\,\bigl\|\nabla_v F_D(u,v_{D^{(i)}}^\star(u))-\nabla_v F_{D^{(i)}}(u,v_{D^{(i)}}^\star(u))\bigr\|\\
&= \frac{1}{\rho}\cdot\frac{1}{n}\,\bigl\|\nabla_v f(u,v_{D^{(i)}}^\star(u); z_i)-\nabla_v f(u,v_{D^{(i)}}^\star(u);\tilde z_i)\bigr\|\\
&\le \frac{1}{\rho}\cdot\frac{1}{n}\,\Bigl(\|\nabla_v f(u,v_{D^{(i)}}^\star(u); z_i)\|+\|\nabla_v f(u,v_{D^{(i)}}^\star(u);\tilde z_i)\|\Bigr)\\
&\le \frac{2G}{\rho n}\,.
\end{align*}

Plugging this into the bound for \(\mathsf{(A)}\) yields
\[
\mathsf{(A)}\ \le\ L\cdot\frac{2G}{\rho n}+\frac{2G}{n}
\ =\ \frac{2G}{n}\Bigl(1+\frac{L}{\rho}\Bigr),
\]
and therefore
\[
\|\nabla\Phi_D(w)-\nabla\Phi_{D^{(i)}}(u)\|
\ \le\ L_\Phi\|w-u\|
\;+\; \frac{2G}{n}\Bigl(1+\frac{L}{\rho}\Bigr).
\]
\end{proof}

\begin{lem}[Deterministic potential $\Rightarrow$ distance]\label{lem:potential-to-distance}
Under \ref{asm:QG-Phi} and \ref{asm:SC}, for any $\alpha>0$ and any $(w,v)\in \Omega$,
\[
\|w-x_D^\star\|+\|v-v_D^\star\|
\;\le\;
\sqrt{\Bigl(1+\tfrac{L}{\rho}\Bigr)^2\tfrac{2}{\mu_{\mathrm{QG}}}\;+\;\tfrac{2}{\alpha\rho}}\;\cdot\sqrt{\Psi_{\alpha,D}(w,v)}.
\]
\end{lem}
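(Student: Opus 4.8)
The plan is to bound the primal error $\|w-x_D^\star\|$ and the dual error $\|v-v_D^\star\|$ separately in terms of the two nonnegative components
\[
A(w)=\Phi_D(w)-\Phi_D^\star,\qquad \Gamma(w,v)=\Phi_D(w)-F_D(w,v)
\]
of the potential, and then recombine them through a weighted Cauchy--Schwarz inequality tuned to the weight $\alpha$. Note first that $\Phi_D(w)=\max_{v}F_D(w,v)\ge F_D(w,v)$, so $\Gamma(w,v)\ge 0$ and hence $A(w)\le\Psi_{\alpha,D}(w,v)$; also $A(w)\ge 0$ by definition of $\Phi_D^\star$.

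For the primal error I would apply Quadratic Growth \ref{asm:QG-Phi} directly: $A(w)=\Phi_D(w)-\Phi_D^\star\ge\tfrac{\mu_{\mathrm{QG}}}{2}\|w-x_D^\star\|^2$, so $\|w-x_D^\star\|\le\sqrt{\tfrac{2}{\mu_{\mathrm{QG}}}A(w)}$. For the dual error, the key move is to route through the argmax section $v_D^\star(w):=\arg\max_v F_D(w,v)$ rather than estimating $\|v-v_D^\star\|$ in one shot (which would not expose $\Gamma$). Using that at the saddle point $v_D^\star=v_D^\star(x_D^\star)$, the triangle inequality gives
\[
\|v-v_D^\star\|\ \le\ \|v-v_D^\star(w)\|\ +\ \|v_D^\star(w)-v_D^\star(x_D^\star)\|.
\]
The first term is bounded by Lemma~\ref{lem:mismatch}: $\|v-v_D^\star(w)\|\le\sqrt{\tfrac{2}{\rho}\,\Gamma(w,v)}$. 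The second is bounded by the $(L/\rho)$-Lipschitz continuity of $v_D^\star(\cdot)$ from Lemma~\ref{lem:value-smooth}(i): $\|v_D^\star(w)-v_D^\star(x_D^\star)\|\le\tfrac{L}{\rho}\|w-x_D^\star\|\le\tfrac{L}{\rho}\sqrt{\tfrac{2}{\mu_{\mathrm{QG}}}A(w)}$.

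Adding the primal and dual bounds yields
\[
\|w-x_D^\star\|+\|v-v_D^\star\|\ \le\ \Bigl(1+\tfrac{L}{\rho}\Bigr)\sqrt{\tfrac{2}{\mu_{\mathrm{QG}}}\,A(w)}\ +\ \sqrt{\tfrac{2}{\rho}\,\Gamma(w,v)}.
\]
To finish, write the right-hand side as $p\,\sqrt{A(w)}+\tfrac{q}{\sqrt\alpha}\,\sqrt{\alpha\,\Gamma(w,v)}$ with $p=(1+L/\rho)\sqrt{2/\mu_{\mathrm{QG}}}$ and $q=\sqrt{2/\rho}$, and apply the two-term Cauchy--Schwarz inequality $p\,x+(q/\sqrt\alpha)\,y\le\sqrt{p^2+q^2/\alpha}\,\sqrt{x^2+y^2}$ with $x=\sqrt{A(w)}$ and $y=\sqrt{\alpha\,\Gamma(w,v)}$. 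Since $p^2+q^2/\alpha=(1+L/\rho)^2\tfrac{2}{\mu_{\mathrm{QG}}}+\tfrac{2}{\alpha\rho}=C_{\mathrm{dist}}^2$ and $x^2+y^2=A(w)+\alpha\,\Gamma(w,v)=\Psi_{\alpha,D}(w,v)$, this is exactly the asserted inequality.

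I do not anticipate a genuine obstacle here; the only points that require care are (a) checking $\Gamma\ge 0$ so that $A(w)\le\Psi_{\alpha,D}$, (b) passing the dual error through the section $v_D^\star(w)$ so that both $\Gamma$ (via Lemma~\ref{lem:mismatch}) and $A$ (via Lipschitzness of $v_D^\star(\cdot)$ and QG) enter, and (c) selecting the Cauchy--Schwarz split that reproduces the precise constant $C_{\mathrm{dist}}$ with the $1/\alpha$ weight attached to the $\Gamma$-term.
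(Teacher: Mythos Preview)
Your proposal is correct and follows essentially the same route as the paper: route the dual error through the section $v_D^\star(w)$ using the $(L/\rho)$-Lipschitzness from Lemma~\ref{lem:value-smooth}(i), bound $\|v-v_D^\star(w)\|$ via strong concavity (Lemma~\ref{lem:mismatch}) and $\|w-x_D^\star\|$ via QG, then apply a two-term weighted Cauchy--Schwarz to recover the constant $C_{\mathrm{dist}}$ and the potential $\Psi_{\alpha,D}$. The paper's proof is organized identically, differing only in presentation order.
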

\begin{proof}
Using $\|v-v_D^\star\|\le \|v-v_D^\star(w)\|+\|v_D^\star(w)-v_D^\star\|$ and the $(L/\rho)$-Lipschitzness of $w\mapsto v_D^\star(w)$,
\[
\|w-x_D^\star\|+\|v-v_D^\star\|
\le (1+\tfrac{L}{\rho})\|w-x_D^\star\|+\|v-v_D^\star(w)\|.
\]
\[
\|w-x_D^\star\|\le \sqrt{\tfrac{2}{\mu_{\mathrm{QG}}}\bigl(\Phi_D(w)-\Phi_D^\star\bigr)},\qquad
\|v-v_D^\star(w)\|\le \sqrt{\tfrac{2}{\rho}\bigl(\Phi_D(w)-F_D(w,v)\bigr)},
\]
where \(v_D^\star(w)\in\arg\max_{u}F_D(w,u)\) and \(\Phi_D(w)=\max_{u}F_D(w,u)\).

Using the weighted Cauchy–Schwarz inequality with any \(\alpha>0\),
\[
\begin{aligned}
&(1+\tfrac{L}{\rho})\sqrt{\tfrac{2}{\mu_{\mathrm{QG}}}\bigl(\Phi_D(w)-\Phi_D^\star\bigr)}
+\sqrt{\tfrac{2}{\rho}\bigl(\Phi_D(w)-F_D(w,v)\bigr)} \\
&\le
\sqrt{\bigl(\Phi_D(w)-\Phi_D^\star\bigr)+\alpha\bigl(\Phi_D(w)-F_D(w,v)\bigr)}\,
\sqrt{\tfrac{2(1+L/\rho)^2}{\mu_{\mathrm{QG}}}+\tfrac{2}{\alpha\rho}}.
\end{aligned}
\]

Noting that \(\Psi_{\alpha,D}(w,v):=\bigl(\Phi_D(w)-\Phi_D^\star\bigr)+\alpha\bigl(\Phi_D(w)-F_D(w,v)\bigr)\),
we obtain
\[
(1+\tfrac{L}{\rho})\sqrt{\tfrac{2}{\mu_{\mathrm{QG}}}\bigl(\Phi_D(w)-\Phi_D^\star\bigr)}
+\sqrt{\tfrac{2}{\rho}\bigl(\Phi_D(w)-F_D(w,v)\bigr)}
\;\le\;
\sqrt{\Psi_{\alpha,D}(w,v)}\,
\sqrt{\tfrac{2(1+L/\rho)^2}{\mu_{\mathrm{QG}}}+\tfrac{2}{\alpha\rho}}.
\]
\end{proof}

\begin{lem}[Saddle-point sensitivity]\label{lem:sensitivity}
Let $D^{(i)}$ be obtained from $D$ by replacing one sample.  
Assume \ref{asm:smooth} (joint $L$-smoothness), \ref{asm:SC} ($\rho$-strong concavity in $v$), \ref{asm:grad} (per-sample gradients bounded by $G$), \ref{asm:PL-Phi}--\ref{asm:QG-Phi} (PL and QG for $\Phi_D$), and \ref{asm:uniform} (the same constants hold for $D$ and $D^{(i)}$).  
Let $\left(x_D^{\star}, v_D^{\star}\right)$ and $\left(x_{D^{(i)}}^{\star}, v_{D^{(i)}}^{\star}\right)$ be the selected saddle points from Assumption~\ref{asm:unique} for $D$ and $D^{(i)}$, respectively. Then
\[
\|x^\star_D-x^\star_{D^{(i)}}\|+\|v^\star_D-v^\star_{D^{(i)}}\|
\ \le\ \frac{2G}{n}\left(\frac{(1+L/\rho)^2}{\sqrt{\mu_{\mathrm{PL}}\mu_{\mathrm{QG}}}}+\frac{1}{\rho}\right).
\]
\end{lem}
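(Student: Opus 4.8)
The plan is to bound the primal displacement $\|x^\star_D - x^\star_{D^{(i)}}\|$ first, using a perturbation argument on the outer function $\Phi$, and then to transfer this to the dual displacement $\|v^\star_D - v^\star_{D^{(i)}}\|$ via the Lipschitzness of the argmax map established in Lemma~\ref{lem:value-smooth}. For the primal part, I would start from the optimality of $x^\star_{D^{(i)}}$ for $\Phi_{D^{(i)}}$ together with the PL/QG geometry of $\Phi_D$. The standard device here is to evaluate $\Phi_D$ at the "wrong" minimizer $x^\star_{D^{(i)}}$ and compare to $\Phi_D^\star = \Phi_D(x^\star_D)$: quadratic growth (\ref{asm:QG-Phi}) gives $\tfrac{\mu_{\mathrm{QG}}}{2}\|x^\star_{D^{(i)}} - x^\star_D\|^2 \le \Phi_D(x^\star_{D^{(i)}}) - \Phi_D^\star$, and the PL inequality (\ref{asm:PL-Phi}) bounds the right-hand side by $\tfrac{1}{2\mu_{\mathrm{PL}}}\|\nabla\Phi_D(x^\star_{D^{(i)}})\|^2$. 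Since $\nabla\Phi_{D^{(i)}}(x^\star_{D^{(i)}}) = 0$ (unconstrained case) or lies in the appropriate normal cone (constrained case, handled exactly as in Lemma~\ref{lem:phi-grad-diff}), the gradient $\nabla\Phi_D(x^\star_{D^{(i)}})$ equals the cross-dataset discrepancy $\nabla\Phi_D(x^\star_{D^{(i)}}) - \nabla\Phi_{D^{(i)}}(x^\star_{D^{(i)}})$ (modulo a normal-cone term that only helps), and this discrepancy is controlled by the "$u=w$" case of Lemma~\ref{lem:phi-grad-diff}, which gives $\|\nabla\Phi_D(x) - \nabla\Phi_{D^{(i)}}(x)\| \le \tfrac{2G}{n}(1+L/\rho)$.

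Chaining these, I get
\[
\tfrac{\mu_{\mathrm{QG}}}{2}\|x^\star_{D^{(i)}} - x^\star_D\|^2 \;\le\; \frac{1}{2\mu_{\mathrm{PL}}}\left(\frac{2G}{n}\Bigl(1+\tfrac{L}{\rho}\Bigr)\right)^2,
\]
so that $\|x^\star_D - x^\star_{D^{(i)}}\| \le \tfrac{2G}{n}\cdot\tfrac{(1+L/\rho)}{\sqrt{\mu_{\mathrm{PL}}\mu_{\mathrm{QG}}}}$. For the dual part, I would write $v^\star_D = v_D^\star(x^\star_D)$ and $v^\star_{D^{(i)}} = v_{D^{(i)}}^\star(x^\star_{D^{(i)}})$, insert the intermediate point $v_D^\star(x^\star_{D^{(i)}})$, and split: the first piece $\|v_D^\star(x^\star_D) - v_D^\star(x^\star_{D^{(i)}})\|$ is $\le \tfrac{L}{\rho}\|x^\star_D - x^\star_{D^{(i)}}\|$ by the $(L/\rho)$-Lipschitzness in Lemma~\ref{lem:value-smooth}(i), and the second piece $\|v_D^\star(x^\star_{D^{(i)}}) - v_{D^{(i)}}^\star(x^\star_{D^{(i)}})\|$ is exactly the "same-$w$, different-dataset" argmax perturbation already bounded by $\tfrac{2G}{\rho n}$ inside the proof of Lemma~\ref{lem:phi-grad-diff}. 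Adding the primal bound and the two dual pieces and collecting the $\tfrac{2G}{n}$ prefactor yields
\[
\|x^\star_D - x^\star_{D^{(i)}}\| + \|v^\star_D - v^\star_{D^{(i)}}\| \;\le\; \frac{2G}{n}\left(\frac{1+L/\rho}{\sqrt{\mu_{\mathrm{PL}}\mu_{\mathrm{QG}}}} + \frac{L/\rho}{\sqrt{\mu_{\mathrm{PL}}\mu_{\mathrm{QG}}}}\cdot\frac{1+L/\rho}{1} + \frac{1}{\rho}\right),
\]
and after bounding $\tfrac{1+L/\rho}{\sqrt{\mu_{\mathrm{PL}}\mu_{\mathrm{QG}}}}(1 + L/\rho) \le \tfrac{(1+L/\rho)^2}{\sqrt{\mu_{\mathrm{PL}}\mu_{\mathrm{QG}}}}$ this collapses to the claimed $\tfrac{2G}{n}\bigl(\tfrac{(1+L/\rho)^2}{\sqrt{\mu_{\mathrm{PL}}\mu_{\mathrm{QG}}}} + \tfrac{1}{\rho}\bigr)$.

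I expect the main obstacle to be the careful handling of the constrained-maximizer case in the dual step and making sure the normal-cone terms consistently "point the right way" so that they only tighten the inequalities rather than introduce uncontrolled slack — this is the same subtlety that the proof of Lemma~\ref{lem:phi-grad-diff} already navigates, so I would largely reuse that argument verbatim. A secondary point requiring care is the claim that $\nabla\Phi_D(x^\star_{D^{(i)}})$ can be replaced by the cross-dataset gradient difference: in the unconstrained case this is immediate from $\nabla\Phi_{D^{(i)}}(x^\star_{D^{(i)}}) = 0$, but in the constrained case one instead works with $\Phi_D(x^\star_{D^{(i)}}) - \Phi_D^\star \le \tfrac{1}{2\mu_{\mathrm{PL}}}\mathrm{dist}(\nabla\Phi_D(x^\star_{D^{(i)}}), N_{\mathcal W}(x^\star_{D^{(i)}}))^2$ (the constrained PL/error-bound form, valid since $\Phi_D$ is $\mathcal C^2$ with the PL–QG equivalence of \citet{liao2024errorboundsplcondition}) and uses that $\nabla\Phi_{D^{(i)}}(x^\star_{D^{(i)}}) \in N_{\mathcal W}(x^\star_{D^{(i)}})$ is an admissible competitor in the distance. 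Everything else is triangle inequalities and substitution, so modulo these two bookkeeping points the proof is short.
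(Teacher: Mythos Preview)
Your proposal is correct and follows essentially the same route as the paper: bound $\|\nabla\Phi_D(x^\star_{D^{(i)}})\|$ by the cross-dataset gradient discrepancy $\tfrac{2G}{n}(1+L/\rho)$, convert this to the primal displacement via the PL--QG chain $\|x^\star_D-x^\star_{D^{(i)}}\|\le\|\nabla\Phi_D(x^\star_{D^{(i)}})\|/\sqrt{\mu_{\mathrm{PL}}\mu_{\mathrm{QG}}}$, and then split the dual displacement through the intermediate point $v_D^\star(x^\star_{D^{(i)}})$ using the $(L/\rho)$-Lipschitzness of $v_D^\star(\cdot)$ and the $\tfrac{2G}{\rho n}$ bound from strong concavity. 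The only cosmetic difference is that the paper rederives the gradient-discrepancy bound inline via Lemma~\ref{lem:mismatch} rather than invoking Lemma~\ref{lem:phi-grad-diff}, and your final ``after bounding'' step is in fact an equality since $(1+L/\rho)+\tfrac{L}{\rho}(1+L/\rho)=(1+L/\rho)^2$.
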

\begin{proof}
At $(x^\star_{D^{(i)}},v^\star_{D^{(i)}})$, only one summand in $F_D$ differs from $F_{D^{(i)}}$, and per-sample gradients are bounded by $G$. Hence,
\begin{align*}
    \|\nabla_w F_D(x^\star_{D^{(i)}},v^\star_{D^{(i)}})\|=\|\nabla_w F_D(x^\star_{D^{(i)}},v^\star_{D^{(i)}}) -\nabla_w F_D^{(i)}(x^\star_{D^{(i)}},v^\star_{D^{(i)}})\| \le \tfrac{2G}{n}\\
\|\nabla_v F_D(x^\star_{D^{(i)}},v^\star_{D^{(i)}})\|= \|\nabla_v F_D(x^\star_{D^{(i)}},v^\star_{D^{(i)}}) -\nabla_v F_D^{(i)}(x^\star_{D^{(i)}},v^\star_{D^{(i)}})\|\le \tfrac{2G}{n}.
\end{align*}
Using Lemma~\ref{lem:mismatch},
\[
\|\nabla\Phi_D(x^\star_{D^{(i)}})\|
\le \|\nabla_w F_D(x^\star_{D^{(i)}},v^\star_{D^{(i)}})\|
   +L\,\|v^\star_{D^{(i)}}-v_D^\star(x^\star_{D^{(i)}})\|
\le \tfrac{2G}{n}(1+L/\rho),
\]
where the last step uses $\rho$-strong concavity to get
$\|v^\star_{D^{(i)}}-v_D^\star(x^\star_{D^{(i)}})\|\le \tfrac{1}{\rho}\|\nabla_v F_D(x^\star_{D^{(i)}},v^\star_{D^{(i)}})\|$.
Since $\Phi_D$ satisfies PL and QG,
\[
\|x^\star_D-x^\star_{D^{(i)}}\|
\le \frac{1}{\sqrt{\mu_{\mathrm{PL}}\mu_{\mathrm{QG}}}}\,
    \|\nabla\Phi_D(x^\star_{D^{(i)}})\|
\le \tfrac{2G}{n}\cdot\frac{1+L/\rho}{\sqrt{\mu_{\mathrm{PL}}\mu_{\mathrm{QG}}}}.
\]
For the dual variable, split and use the $(L/\rho)$-Lipschitzness of $w\mapsto v_D^\star(w)$ (Lemma~\ref{lem:value-smooth}) and strong concavity:
\begin{align*}
\|v^\star_D-v^\star_{D^{(i)}}\|
&\le \|v_D^\star(x^\star_D)-v_D^\star(x^\star_{D^{(i)}})\|
   +\|v_D^\star(x^\star_{D^{(i)}})-v_{D^{(i)}}^\star(x^\star_{D^{(i)}})\| \\
&\le \tfrac{L}{\rho}\|x^\star_D-x^\star_{D^{(i)}}\|
   +\tfrac{1}{\rho}\|\nabla_v F_D(x^\star_{D^{(i)}},v^\star_{D^{(i)}})\| \\
&\le \tfrac{2G}{n}\left(\tfrac{L}{\rho}\cdot\frac{1+L/\rho}{\sqrt{\mu_{\mathrm{PL}}\mu_{\mathrm{QG}}}}+\tfrac{1}{\rho}\right),
\end{align*}
where the last inequality uses the same single-sample replacement and gradient bound as above (cf.\ Lemma~\ref{lem:phi-grad-diff}). Adding the two bounds yields the claim.
\end{proof}

\begin{lem}[Coupled one-step bounds]\label{lem:coupled_corrected}
Assume \ref{asm:smooth}–\ref{asm:uniform}. Define
\[
\Xi_t \;:=\; \Psi_{\alpha,\mathcal D}(w_t,v_t)
           + \Psi_{\alpha,\mathcal D^{(i)}}(w'_t,v'_t),
\qquad
D_t  \;:=\; \|w_t-w'_t\|+\|v_t-v'_t\|.
\]
Let \(c:=\min\{\mu_{\mathrm{PL}}/2,\rho/2\}\) and
\(L_\Phi\le L(1+L/\rho)\) (Lemma \ref{lem:value-smooth}).  Then, for \(\tilde B_0=\tilde B_1=8\) and a constant
\(C_{\mathrm{var}}>0\) that depends only on \(L,\rho\), we have
\begin{align}
\mathbf{(P)}\quad
\mathbb{E}\!\bigl[\Xi_{t+1}\mid\mathcal F_t\bigr]
&\;\le\; 
(1-c\eta_t)\,\Xi_t
\;+\;A\,\eta_t^{2}
\;\notag
\\&+\;C_{\mathrm{leak}}\,\eta_t\,D_t
\;+\;C_\ast\,\tfrac{G}{n}\,\eta_t
\;+\;C_{\mathrm{lin}}\,G^{2}\,\eta_t
\;+\;\eta_t\bigl(\mathbf1\{i_t=i\}+\mathbf1\{\hat i_t=i\}\bigr)\tilde B_0G^{2},
\label{eq:P-corr}\\[4pt]
\mathbf{(D_{\rm weak})}\quad
\mathbb{E}\!\bigl[D_{t+1}\mid\mathcal F_t\bigr]
&\;\le\;
(1+\kappa\eta_t)\,D_t
\;+\;a\,\eta_t\sqrt{\Xi_t}
\;+\;\eta_t\,\mathbf1\{i_t=i\}\,\tilde B_1G \;+\; \eta_t\,\frac{2G}{n}\Bigl(1+\frac{L}{\rho}\Bigr),
\label{eq:D-weak}
\end{align}
where
\[
\begin{gathered}
A:=C_{\mathrm{var}}\!\Bigl(L(1\!+\!L/\rho)+\alpha L^{2}/\rho\Bigr)G^{2},\quad
C_{\mathrm{leak}}:=2(2L+L_\Phi), \\
a:=2C_{\mathrm{dist}}\bigl(L+L_\Phi\bigr),\quad
\kappa:=2L,\quad
C_\ast:=4\bigl(1+L/\rho\bigr),\quad
C_{\mathrm{dist}}
  :=\sqrt{\Bigl(1+\tfrac{L}{\rho}\Bigr)^{2}\frac{2}{\mu_{\mathrm{QG}}}
         +\frac{2}{\alpha\rho}},\\[2pt]
\text{and}\qquad
C_{\mathrm{lin}}\;:=\;12+\frac{10L^2}{\rho^2}\,.
\end{gathered}
\]
\end{lem}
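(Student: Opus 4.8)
The plan is to prove the two inequalities \eqref{eq:P-corr} and \eqref{eq:D-weak} by separate coupled one-step expansions, in each case isolating (a) a deterministic contraction coming from the PL/strong-concavity geometry of $\Psi_{\alpha,D}$, (b) a variance term of order $\eta_t^2$ produced by replacing true gradients with single-sample gradients, (c) a ``leakage'' cross term linking the two datasets, and (d) the rare-hit term carried by $\mathbf 1\{i_t=i\}$ (and its ghost copy $\mathbf 1\{\hat i_t=i\}$). For the potential bound $\mathbf{(P)}$, I would start from the decomposition $\Psi_{\alpha,D}=A(w)+\alpha\Gamma(w,v)$ and handle each piece. For $A(w_{t+1})=\Phi_D(w_{t+1})-\Phi_D^\star$, use the $L_\Phi$-smoothness of $\Phi_D$ (Lemma~\ref{lem:value-smooth}(iii)) to get the descent-lemma expansion $A(w_{t+1})\le A(w_t)-\eta_t\langle\nabla\Phi_D(w_t),g_t^w\rangle+\tfrac{L_\Phi}{2}\eta_t^2\|g_t^w\|^2$, then substitute $g_t^w=\nabla_w F_D(w_t,v_t;z_{i_t})$, take $\mathbb E[\cdot\mid\mathcal F_t]$ so the stochastic gradient becomes $\nabla_w F_D(w_t,v_t)$ plus a hit-correction (the index $i_t$ is uniform, so the replaced sample contributes $O(1/n)$ in expectation, while on the event $\{i_t=i\}$ the per-sample gap is crudely $O(G)$), use Lemma~\ref{lem:mismatch} to write $\nabla_w F_D(w_t,v_t)=\nabla\Phi_D(w_t)+\Delta(w_t,v_t)$ with $\|\Delta\|\le L\|v_t-v_D^\star(w_t)\|$, and finally invoke PL (Assumption~\ref{asm:PL-Phi}) to turn $-\eta_t\|\nabla\Phi_D(w_t)\|^2$ into $-2\mu_{\mathrm{PL}}\eta_t A(w_t)$; the cross term $\langle\nabla\Phi_D,\Delta\rangle$ is absorbed by Young's inequality into $\Gamma$-mass and an $O(\eta_t)\,G^2$ linear term.

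For the $\alpha\Gamma(w_{t+1},v_{t+1})=\alpha(\Phi_D(w_{t+1})-F_D(w_{t+1},v_{t+1}))$ piece I would expand both arguments: the $w$-move changes $\Phi_D$ by the descent step just analyzed and changes $F_D(\cdot,v_{t+1})$ by an $L$-smoothness expansion, while the $v$-ascent step, because $F_D$ is $\rho$-strongly concave in $v$ with stepsize $\eta_t\le1/\rho$ (Assumption~\ref{asm:stepsize}), contracts the suboptimality gap $F_D(w,v_D^\star(w))-F_D(w,v)$ by a factor $(1-\rho\eta_t/2)$ up to an $O(\eta_t^2 G^2)$ noise term — this is the standard one-step strongly-concave inequality, and it is where the $\rho/2$ half of $c$ comes from. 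Combining the $A$-descent and $\Gamma$-contraction and choosing $\alpha\ge 4L^2/\rho^2$ exactly so that the $\|v-v_D^\star(w)\|^2\le(2/\rho)\Gamma$ ``mismatch mass'' generated in the $A$-analysis is dominated by the $\alpha\Gamma$ contraction, I get the net factor $(1-c\eta_t)$ with $c=\min\{\mu_{\mathrm{PL}}/2,\rho/2\}$, the $A\eta_t^2$ variance term with $C_{\mathrm{var}}$ depending only on $L,\rho$ (from the $\|g_t^w\|^2,\|g_t^v\|^2\le G^2$ bounds weighted by $L_\Phi$ and $\alpha L^2/\rho$), the $C_\ast\tfrac{G}{n}\eta_t$ term from the expected replaced-sample discrepancy (via Lemma~\ref{lem:phi-grad-diff}, giving the $1+L/\rho$ factor), the $C_{\mathrm{lin}}G^2\eta_t$ from the Young-inequality residuals (yielding $12+10L^2/\rho^2$), and the explicit hit terms $\eta_t(\mathbf 1\{i_t=i\}+\mathbf 1\{\hat i_t=i\})\tilde B_0G^2$ with $\tilde B_0=8$ from crudely bounding the worst-case per-sample-replacement gradient gap on the hit events. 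The $C_{\mathrm{leak}}\eta_t D_t$ term appears because $\Psi_{\alpha,\mathcal D^{(i)}}$ is evaluated at the primed iterates, and comparing gradients across the two runs costs $\|$ difference of iterates $\|=D_t$ times $(2L+L_\Phi)$ (Lipschitzness of the relevant gradients), with the factor $2$ accounting for $w$ and $v$; hence $C_{\mathrm{leak}}=2(2L+L_\Phi)$.

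For $\mathbf{(D_{\rm weak})}$, expand $D_{t+1}=\|w_{t+1}-w'_{t+1}\|+\|v_{t+1}-v'_{t+1}\|$ directly. On steps where $i_t\ne i$ the two updates use the \emph{same} sample $z_{i_t}=z'_{i_t}$ (shared-index coupling, Assumption~\ref{asm:coupling}), so the only difference is through the iterates themselves: $\|w_{t+1}-w'_{t+1}\|\le\|w_t-w'_t\|+\eta_t\|\nabla_w f(w_t,v_t;z_{i_t})-\nabla_w f(w'_t,v'_t;z_{i_t})\|\le\|w_t-w'_t\|+\eta_t L(\|w_t-w'_t\|+\|v_t-v'_t\|)$ by $L$-smoothness, and likewise for $v$; summing gives the $(1+2L\eta_t)D_t=(1+\kappa\eta_t)D_t$ growth with $\kappa=2L$. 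But this crude bound loses the contraction, so additionally I would route part of the estimate through the saddle points: using $\|w_t-w'_t\|\le\|w_t-x^\star_{\mathcal D}\|+\|x^\star_{\mathcal D}-x^\star_{\mathcal D^{(i)}}\|+\|x^\star_{\mathcal D^{(i)}}-w'_t\|$, Lemma~\ref{lem:potential-to-distance} converts the iterate-to-saddle distances into $C_{\mathrm{dist}}\sqrt{\Psi}$, producing the $a\eta_t\sqrt{\Xi_t}$ term with $a=2C_{\mathrm{dist}}(L+L_\Phi)$, and Lemma~\ref{lem:sensitivity} absorbs $\|x^\star_{\mathcal D}-x^\star_{\mathcal D^{(i)}}\|+\|v^\star_{\mathcal D}-v^\star_{\mathcal D^{(i)}}\|$ into part of the $O(1/n)$ budget (its contribution to the recursion being the $\eta_t\,(2G/n)(1+L/\rho)$ term, matching the $\Phi$-gradient-sensitivity scale of Lemma~\ref{lem:phi-grad-diff}). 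On the rare steps $i_t=i$ the two runs use \emph{different} samples $z_i\ne\tilde z_i$, so there is an extra additive discrepancy $\eta_t(\|\nabla_w f\|+\|\nabla_v f\|+\|\nabla_w f'\|+\|\nabla_v f'\|)\le 4\eta_t G$, which I would state as $\eta_t\mathbf 1\{i_t=i\}\tilde B_1 G$ with $\tilde B_1=8$ after also accounting for the gradient-Lipschitz slack. I expect the main obstacle to be the bookkeeping in $\mathbf{(P)}$: one must simultaneously (i) extract \emph{both} the PL contraction on $A$ and the strong-concavity contraction on $\Gamma$ without the cross terms between the $w$- and $v$-moves destroying either, which forces the precise lower bound $\alpha\ge4L^2/\rho^2$ and a careful Young-inequality split; (ii) keep the $D_t$ and $1/n$ contributions strictly \emph{linear} in $\eta_t$ (not $\sqrt{\eta_t}$ or $\eta_t D_t^2$) so the later Grönwall-type summation over $t$ closes with the Robbins--Monro conditions; and (iii) make sure the ghost index $\hat i_t$, which is independent of $\mathcal F_t$ and inserted only to rewrite the conditional expectation of the stochastic-gradient deviation as a clean $O(1/n)$ plus hit term, does not double-count the real hit $\mathbf 1\{i_t=i\}$ — hence both indicators appear in \eqref{eq:P-corr} but only the real one in \eqref{eq:D-weak}. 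Once these pieces are in place, the stated constants follow by collecting terms.
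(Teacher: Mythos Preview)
Your overall architecture is right, but there is a genuine gap at the heart of the $\mathbf{(P)}$ argument: you write ``take $\mathbb E[\cdot\mid\mathcal F_t]$ so the stochastic gradient becomes $\nabla_w F_D(w_t,v_t)$ plus a hit-correction.'' This is false under the paper's filtration $\mathcal F_t=\sigma\bigl((w_s,v_s,w'_s,v'_s,i_s)_{0\le s\le t}\bigr)$, which \emph{includes} $i_t$. Hence $g_t^w=\nabla_w f(w_t,v_t;z_{i_t})$ is $\mathcal F_t$-measurable and $\mathbb E[g_t^w\mid\mathcal F_t]=g_t^w$, not $\nabla_w F_D(w_t,v_t)$. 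The whole point of the ``no i.i.d.'' claim is that you \emph{cannot} average over $i_t$ here; the paper's boxed remark ``Why introduce a ghost index?'' is exactly about this obstruction.

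Consequently your description of the ghost index is also off. It is not a device to ``rewrite the conditional expectation of the stochastic-gradient deviation as a clean $O(1/n)$ plus hit term.'' What the paper actually does is purely algebraic: add and subtract $\hat g_t^w=\nabla_w f(w_t,v_t;z_{\hat i_t})$ with $\hat i_t\perp\mathcal F_t$, take $\mathbb E_{\hat i_t}[\cdot\mid\mathcal F_t]$ so that the $\hat g_t^w$ piece becomes the honest drift $-\eta_t\langle\nabla\Phi_D(w_t),\nabla_w F_D(w_t,v_t)\rangle$, and then bound the remaining \emph{non-centered} residual $\langle\nabla\Phi_D(w_t),\,g_t^w-\nabla_w F_D(w_t,v_t)\rangle$ pointwise by Young's inequality using $\|g_t^w-\nabla_w F_D\|\le 2G$. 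That crude bound is precisely what produces the $C_{\mathrm{lin}}G^2\eta_t$ term (it is $O(\eta_t)$, not $O(\eta_t/n)$) and also eats a $\tfrac14\eta_t\|\nabla\Phi_D\|^2$ slice of the PL descent, which is why only $\mu_{\mathrm{PL}}/2$ survives in $c$. The $O(1/n)$ pieces in $\mathbf{(P)}$ come from a \emph{different} source, namely the cross-dataset gradient sensitivity of Lemma~\ref{lem:phi-grad-diff} when you sum the two one-run recursions into the leak term; the indicator $\mathbf 1\{\hat i_t=i\}$ enters there, not through averaging the stochastic noise. Once you fix this one step, the rest of your plan (mismatch via Lemma~\ref{lem:mismatch}, strong-concavity contraction of $\Gamma$, the choice $\alpha\ge 4L^2/\rho^2$ to absorb the $\Gamma$-spillover, and the $(1+2L\eta_t)D_t$ expansion with routing through $\sqrt{\Xi_t}$ for $\mathbf{(D_{\rm weak})}$) lines up with the paper's proof.
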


\noindent\textbf{Why introduce a ghost index?}
Writing and conditioning on the past, the primal descent term
$-\eta_t\langle \nabla\Phi(w_t),\nabla_w f(w_t,v_t;z_{i_t})\rangle$ has no sign we can control,
since $i_t$ is already revealed in $\mathcal F_t$. We therefore add--subtract a ghost gradient and take
$\E_{\hat i_t}[\cdot\mid\mathcal F_t]$:
\[
-\eta_t\langle \nabla\Phi(w_t),g_t^w\rangle
= -\eta_t\langle \nabla\Phi(w_t),\hat g_t^w\rangle
   -\eta_t\langle \nabla\Phi(w_t),g_t^w-\hat g_t^w\rangle.
\]
The first term becomes the correct drift
$-\eta_t\langle \nabla\Phi(w_t),\nabla_w F(w_t,v_t)\rangle$, which contracts by PL for $\Phi$,
while the second is a centered correction that we bound by Young’s inequality and the gradient bound
$\|g_t^w-\hat g_t^w\|\le 2G$, yielding an $O(\eta_t G^2)$ remainder absorbed into the variance term.
An identical maneuver applies to the dual-gap part $\Gamma$. This device avoids any i.i.d.\ sampling
assumption and yields the one-step recursion (P) with a contractive factor and small, explicit noise.

\begin{proof}
All expectations are conditional on
\(\mathcal F_t:=\sigma\!\bigl((w_s,v_s,w'_s,v'_s,i_s)_{0\le s\le t}\bigr)\).
Write \(\eta:=\eta_t\).
Fix \(D\in\{\mathcal D,\mathcal D^{(i)}\}\) and abbreviate
\(F:=F_D,\ \Phi:=\Phi_D,\ \Psi_\alpha:=\Psi_{\alpha,D},\
\Gamma(w,v):=\Phi(w)-F(w,v)\).
Set \(g^w:=\nabla_w f(w_t,v_t;z_{i_t})\), \(g^v:=\nabla_v f(w_t,v_t;z_{i_t})\),
\[
w^{+}=w_t-\eta g^{w},\qquad v^{+}=v_t+\eta g^{v},
\]
and introduce a ghost index \(\hat i_t\) independent of \(\mathcal F_t\) with
\(\hat g^{w}:=\nabla_w f(w_t,v_t;z_{\hat i_t})\),
\(\hat g^{v}:=\nabla_v f(w_t,v_t;z_{\hat i_t})\).

\paragraph{(a) Primal part \(\Phi\).}
Recall \(w^{+}=w_t-\eta\,g^{w}\) with \(g^{w}:=\nabla_w f(w_t,v_t;z_{i_t})\) and let \(\eta:=\eta_t\).
By \(L_\Phi\)-smoothness of \(\Phi\),
\begin{equation}\notag
\Phi(w^{+})-\Phi(w_t)\ \le\ -\eta\,\langle\nabla\Phi(w_t),\,g^{w}\rangle\ +\ \tfrac{L_\Phi}{2}\,\eta^{2}\,\|g^{w}\|^{2}.
\end{equation}
Insert and subtract a ghost gradient \(\hat g^{w}:=\nabla_w f(w_t,v_t;z_{\hat i_t})\) with \(\hat i_t\!\perp\!\mathcal F_t\), then take \(\mathbb{E}_{\hat i_t}[\cdot\mid\mathcal F_t]\). Using \(\mathbb{E}_{\hat i_t}[\hat g^{w}\mid\mathcal F_t]=\nabla_w F(w_t,v_t)\) and \(\|g^{w}\|\le G\),
\begin{align}
\mathbb{E}_{\hat i_t}\!\big[\Phi(w^{+})-\Phi(w_t)\mid\mathcal F_t\big]
&\le -\eta\,\langle\nabla\Phi(w_t),\,\nabla_w F(w_t,v_t)\rangle
\;-\;\eta\,\langle\nabla\Phi(w_t),\,g^{w}-\nabla_w F(w_t,v_t)\rangle
\;+\;\tfrac{L_\Phi}{2}\eta^{2}G^{2}.
\label{eq:phi-ghost}
\end{align}
For the centered correction, Young’s inequality with \(\|g^{w}-\nabla_w F(w_t,v_t)\|\le 2G\) yields
\begin{equation}\label{eq:centered-corr}
\eta\,\big|\langle\nabla\Phi(w_t),\,g^{w}-\nabla_w F(w_t,v_t)\rangle\big|
\ \le\ \tfrac14\,\eta\,\|\nabla\Phi(w_t)\|^{2}\ +\ 4\,\eta\,G^{2}.
\end{equation}
For the drift, write \(\Delta_t:=\nabla_w F(w_t,v_t)-\nabla\Phi(w_t)\). Then
\begin{align}
-\langle\nabla\Phi(w_t),\,\nabla_w F(w_t,v_t)\rangle
&=\ -\|\nabla\Phi(w_t)\|^{2}\ -\ \langle\nabla\Phi(w_t),\,\Delta_t\rangle
\ \le\ -\tfrac12\|\nabla\Phi(w_t)\|^{2}\ +\ \tfrac12\|\Delta_t\|^{2}. \label{eq:drift-split}
\end{align}
By Lemma~\ref{lem:mismatch} and \(\rho\)-strong concavity in \(v\),
\begin{equation}\label{eq:mismatch-bound}
\|\Delta_t\|\ \le\ L\,\|v_t-v^\star(w_t)\|\ \le\ L\,\sqrt{\tfrac{2}{\rho}\,\Gamma(w_t,v_t)}\qquad
\Rightarrow\qquad
\tfrac12\|\Delta_t\|^{2}\ \le\ \tfrac{L^{2}}{\rho}\,\Gamma(w_t,v_t),
\end{equation}
where \(\Gamma(w,v):=\Phi(w)-F(w,v)\).
Combining \eqref{eq:phi-ghost}, \eqref{eq:centered-corr}, \eqref{eq:drift-split}, and \eqref{eq:mismatch-bound},
\begin{equation}\notag
\mathbb{E}_{\hat i_t}\!\big[\Phi(w^{+})-\Phi(w_t)\mid\mathcal F_t\big]
\ \le\ -\tfrac14\,\eta\,\|\nabla\Phi(w_t)\|^{2}\ +\ \tfrac{L^{2}}{\rho}\,\eta\,\Gamma(w_t,v_t)\ +\ \tfrac{L_\Phi}{2}\eta^{2}G^{2}\ +\ 4\,\eta\,G^{2}.
\end{equation}

Finally, by the PL inequality \(\tfrac12\|\nabla\Phi(w_t)\|^{2}\ge \mu_{\mathrm{PL}}\bigl(\Phi(w_t)-\Phi^\star\bigr)\),
\[
-\tfrac14\,\eta\,\|\nabla\Phi(w_t)\|^{2}
\ \le\ -\tfrac{\mu_{\mathrm{PL}}}{2}\,\eta\,\bigl(\Phi(w_t)-\Phi^\star\bigr),
\]
hence
\[
\mathbb{E}_{\hat i_t}\!\bigl[\Phi(w^{+})-\Phi(w_t)\mid\mathcal F_t\bigr]
\ \le\
-\tfrac{\mu_{\mathrm{PL}}}{2}\,\eta\,\bigl(\Phi(w_t)-\Phi^\star\bigr)
+\tfrac{L^{2}}{\rho}\,\eta\,\Gamma(w_t,v_t)
+\tfrac{L_\Phi}{2}\eta^{2}G^{2}
+4\,\eta\,G^{2}.
\]
Since \(\Phi(w_t)-\Phi^\star\) is \(\mathcal F_t\)-measurable, adding it to both sides yields
\begin{equation}\notag
\mathbb{E}_{\hat i_t}\!\bigl[\Phi(w^{+})-\Phi^\star\mid\mathcal F_t\bigr]
\ \le\
\bigl(1-\tfrac{\mu_{\mathrm{PL}}}{2}\eta\bigr)\bigl(\Phi(w_t)-\Phi^\star\bigr)
+\tfrac{L^{2}}{\rho}\,\eta\,\Gamma(w_t,v_t)
+\tfrac{L_\Phi}{2}\eta^{2}G^{2}
+4\,\eta\,G^{2}.
\end{equation}

\paragraph{(b) Dual-gap part \(\Gamma\).}

Recall \(\Gamma(w,v):=\Phi_D(w)-F_D(w,v)\) and write \(v^{+}=v_t+\eta\,g_t^v\) with
\(g_t^v:=\nabla_v f(w_t,v_t;z_{i_t})\).
By Lemma~\ref{lem:one-step-ascent}, for fixed \(w_t\),
\begin{equation}\label{eq:asc-contract}
\Gamma(w_t,v^{+})\le(1-2\rho\eta+\rho L\eta^2)\,\Gamma(w_t,v_t).
\end{equation}
Let \(w^{+}:=w_t-\eta g_t^w\) with \(g_t^w:=\nabla_w f(w_t,v_t;z_{i_t})\).
By \(L_\Phi\)-smoothness of \(\Phi_D\) and joint \(L\)-smoothness of \(F_D\),
\[
\Gamma(w^{+},v^{+})
\le \Gamma(w_t,v^{+})
-\eta\langle\nabla\Phi_D(w_t),g_t^w\rangle
+\eta\langle\nabla_w F_D(w_t,v^{+}),g_t^w\rangle
+\tfrac{L_\Phi+L}{2}\eta^2G^2.
\]

Insert and subtract a ghost gradient \(\hat g_t^w:=\nabla_w f(w_t,v_t;z_{\hat i_t})\),
take \(\E_{\hat i_t}[\cdot\mid\mathcal F_t]\), and use
\(\E_{\hat i_t}[\hat g_t^w\mid\mathcal F_t]=\nabla_w F_D(w_t,v_t)\).
Then
\[
\begin{aligned}
\mathbb{E}_{\hat i_t}\big[\Gamma(w^{+},v^{+})\mid\mathcal F_t\big]
&\le \Gamma(w_t,v^{+})
-\eta\langle\nabla\Phi_D(w_t),\,\nabla_w F_D(w_t,v_t)\rangle\\
&\quad
+\eta\!\left\langle\nabla_w F_D(w_t,v^{+})-\nabla\Phi_D(w_t),\,\nabla_w F_D(w_t,v_t)\right\rangle\\
&\quad
+\eta\!\left\langle\nabla_w F_D(w_t,v^{+})-\nabla\Phi_D(w_t),\,g_t^w-\nabla_w F_D(w_t,v_t)\right\rangle\\
&\quad+\tfrac{L_\Phi+L}{2}\eta^2G^2.
\end{aligned}
\]
Decompose
\(\nabla_w F_D(w_t,v^{+})-\nabla\Phi_D(w_t)=\Delta_t+\big(\nabla_w F_D(w_t,v^{+})-\nabla_w F_D(w_t,v_t)\big)\),
where \(\Delta_t:=\nabla_w F_D(w_t,v_t)-\nabla\Phi_D(w_t)\).
Using Lemma~\ref{lem:mismatch},
\(\|\Delta_t\|\le L\sqrt{2\Gamma(w_t,v_t)/\rho}\),
joint \(L\)-smoothness,
\(\|\nabla_w F_D(w_t,v^{+})-\nabla_w F_D(w_t,v_t)\|\le L\eta G\),
and \(\|g_t^w-\nabla_w F_D(w_t,v_t)\|\le 2G\),
\[
\begin{aligned}
&\eta\big|\langle \Delta_t,\,g_t^w-\nabla_w F_D(w_t,v_t)\rangle\big|
\ \le\ \tfrac{\rho}{4}\eta\,\Gamma(w_t,v_t)+\tfrac{8L^2}{\rho^2}\eta G^2,\\
&\eta\big|\langle \nabla_w F_D(w_t,v^{+})-\nabla_w F_D(w_t,v_t),\,g_t^w-\nabla_w F_D(w_t,v_t)\rangle\big|
\ \le\ 2L\,\eta^2G^2,\\
&\eta\big|\langle \Delta_t,\,\nabla_w F_D(w_t,v_t)\rangle\big|
\ \le\ \tfrac{\rho}{4}\eta\,\Gamma(w_t,v_t)+\tfrac{2L^2}{\rho^2}\eta G^2,\\
&\eta\big|\langle \nabla_w F_D(w_t,v^{+})-\nabla_w F_D(w_t,v_t),\,\nabla_w F_D(w_t,v_t)\rangle\big|
\ \le\ L\,\eta^2G^2.
\end{aligned}
\]Combining the bounds with the ascent contraction \eqref{eq:asc-contract}, we can group the contributions as follows:

\begin{enumerate}[label=(\roman*),leftmargin=2em]
\item 
From the primal descent step we obtain
\[
-\tfrac14\,\eta\,\|\nabla\Phi_D(w_t)\|^2
\;\le\; -\tfrac{\mu_{\mathrm{PL}}}{2}\,\eta\bigl(\Phi_D(w_t)-\Phi_D^\star\bigr),
\]
where the inequality uses the PL condition.
From the dual ascent we obtain
\[
(1-2\rho\eta+\rho L\eta^2)\,\Gamma(w_t,v_t)
= \Gamma(w_t,v_t) - 2\rho\eta\,\Gamma(w_t,v_t) + O(\eta^2),
\]
so the leading negative part is \(-2\rho\eta\,\Gamma(w_t,v_t)\).
Together, these two negative terms contract the Lyapunov potential
\(\Psi_\alpha(w_t,v_t)=(\Phi_D(w_t)-\Phi_D^\star)+\alpha\Gamma(w_t,v_t)\),
yielding a multiplicative shrinkage factor \((1-c\eta)\Psi_\alpha(w_t,v_t)\)
with \(c:=\min\{\mu_{\mathrm{PL}}/2,\rho/2\}\). Furthermore, 
the positive \(\eta\Gamma\) pieces produced by the algebra consist of
\[
\alpha\cdot\tfrac{\rho}{2}\,\eta\,\Gamma(w_t,v_t)
\quad\text{and}\quad
\tfrac{L^2}{\rho}\,\eta\,\Gamma(w_t,v_t).
\]
Together with the dual drift they appear as
\[
-2\alpha\rho\,\eta\,\Gamma
\;+\; \alpha\cdot\tfrac{\rho}{2}\,\eta\,\Gamma
\;+\; \tfrac{L^2}{\rho}\,\eta\,\Gamma
\;=\; -\tfrac{3}{2}\alpha\rho\,\eta\,\Gamma
\;+\; \tfrac{L^2}{\rho}\,\eta\,\Gamma.
\]
Choose \(\alpha\) so that \(\tfrac{L^2}{\rho} \le \tfrac{\alpha\rho}{2}\) (i.e.\ \(\alpha \ge 2L^2/\rho^2\)).
Then
\[
-\tfrac{3}{2}\alpha\rho\,\eta\,\Gamma \;+\; \tfrac{L^2}{\rho}\,\eta\,\Gamma
\;\le\; -\tfrac{\alpha\rho}{2}\,\eta\,\Gamma,
\]
so these \(\eta\Gamma\) remainders are dominated by the dual drift and absorbed into the contraction factor.
A simple sufficient standing choice is \(\alpha \ge 4L^2/\rho^2\).

\item 
The mismatch bounds, ghost-correction terms, and gradient–difference terms
contribute constants times \(\eta G^2\).
All such pieces are nonnegative and can be grouped into a single term
\(C_{\mathrm{lin}}\,G^2\,\eta\).

\item 
Smoothness corrections (e.g.\ \((L_\Phi+L)/2\,\eta^2G^2\),
\(2L\,\eta^2G^2\), \(L\,\eta^2G^2\)) contribute constants times
\(\eta^2 G^2\).
These are also nonnegative and can be collected into
\(A\,\eta^2\).
\end{enumerate}

\noindent
Putting the three groups together, the one-step recursion takes the form
\[
\E\!\left[\Psi_\alpha(w^{+},v^{+}) \,\middle|\, \mathcal F_t\right]
\;\le\;(1-c\eta)\,\Psi_\alpha(w_t,v_t)\;+\;C_{\mathrm{lin}}G^2\,\eta\;+\;A\,\eta^2,
\]
which is exactly the recursion~\((\mathbf P)\) used in the stability analysis.

\paragraph{(c) Two-run recursion.}
Recall $\Xi_t:=\Psi_{\alpha,\mathcal D}(w_t,v_t)+\Psi_{\alpha,\mathcal D^{(i)}}(w'_t,v'_t)$ and $D_t:=\|w_t-w'_t\|+\|v_t-v'_t\|$.
Apply the one-run bound from part (b) to $\mathcal D$ and to $\mathcal D^{(i)}$ (same stepsize; shared index $i_t$), then sum:
\begin{align}
\mathbb{E}[\Xi_{t+1}\mid\mathcal F_t]
&\le (1-c\eta_t)\,\Xi_t \;+\; A\,\eta_t^2 \;+\; C_{\mathrm{lin}}G^2\,\eta_t \notag\\
&\quad+\;\underbrace{\eta_t\!\Big( \big\langle \nabla\Phi_{\mathcal D}(w_t)-\nabla\Phi_{\mathcal D^{(i)}}(w'_t),\,g_t^w\big\rangle
\;-\;\big\langle \nabla_w F_{\mathcal D}(w_t,v_t)-\nabla_w F_{\mathcal D^{(i)}}(w'_t,v'_t),\,g_t^w\big\rangle \Big)}_{=:\;\mathsf{Leak}_t^{(w)}} \label{eq:leak-decomp-final}\\
&\quad+\;\eta_t\Big(\mathbf 1\{i_t=i\}+\mathbf 1\{\hat i_t=i\}\Big)\,\tilde B_0\,G^2.\notag
\end{align}
Bounding the leak term by Cauchy--Schwarz, joint $L$-smoothness, and Lemma~\ref{lem:phi-grad-diff} gives
\begin{align}
|\mathsf{Leak}_t^{(w)}| 
&= \eta_t \Big| \big\langle \nabla \Phi_{\mathcal{D}}(w_t) - \nabla \Phi_{\mathcal{D}^{(i)}}(w'_t),\, g_t^w \big\rangle - \big\langle \nabla_w F_{\mathcal{D}}(w_t, v_t) - \nabla_w F_{\mathcal{D}^{(i)}}(w'_t, v'_t),\, g_t^w \big\rangle \Big| \notag \\
&\leq \eta_t \Big( \big| \big\langle \nabla \Phi_{\mathcal{D}}(w_t) - \nabla \Phi_{\mathcal{D}^{(i)}}(w'_t),\, g_t^w \big\rangle \big| + \big| \big\langle \nabla_w F_{\mathcal{D}}(w_t, v_t) - \nabla_w F_{\mathcal{D}^{(i)}}(w'_t, v'_t),\, g_t^w \big\rangle \big| \Big) \notag \\
&\leq \eta_t \Big( \big\| \nabla \Phi_{\mathcal{D}}(w_t) - \nabla \Phi_{\mathcal{D}^{(i)}}(w'_t) \big\| \| g_t^w \| + \big\| \nabla_w F_{\mathcal{D}}(w_t, v_t) - \nabla_w F_{\mathcal{D}^{(i)}}(w'_t, v'_t) \big\| \| g_t^w \| \Big) \tag{Cauchy-Schwarz} \\
&\leq \eta_t \Big( L \| w_t - w'_t \| \| g_t^w \| + L_\Phi \| w_t - w'_t \| \| g_t^w \| + 2L \| v_t - v'_t \| \| g_t^w \| \Big) \tag{Joint \( L \)-smoothness of gradients} \\
&\leq \eta_t G \Big( (2L + 2L_\Phi) \| w_t - w'_t \| + 2L \| v_t - v'_t \| + \frac{2G}{n} \left( 2 + \frac{L}{\rho} \right) \Big) \tag{Lemma~\ref{lem:phi-grad-diff}}.
\end{align}
hence, with $D_t=\|w_t-w'_t\|+\|v_t-v'_t\|$,
\[
\mathsf{Leak}_t:=\mathsf{Leak}_t^{(w)}\ \le\ C_{\mathrm{leak}}\,G\,\eta_t\,D_t \;+\; C_\ast\,\frac{G^2}{n}\,\eta_t,
\qquad
C_{\mathrm{leak}}:=2(2L+L_\Phi),\quad C_\ast:=4\Big(2+\frac{L}{\rho}\Big).
\]
Therefore
\begin{equation}\label{eq:P-corr-final}
\mathbb{E}[\Xi_{t+1}\mid\mathcal F_t]
\ \le\ (1-c\eta_t)\,\Xi_t\ +\ A\,\eta_t^{2}
\ +\ C_{\mathrm{leak}}\,G\,\eta_t\,D_t
\ +\ C_\ast\,\tfrac{G^{2}}{n}\,\eta_t
\ +\ C_{\mathrm{lin}}\,G^{2}\,\eta_t
\ +\ \eta_t\Big(\mathbf 1\{i_t=i\}+\mathbf 1\{\hat i_t=i\}\Big)\tilde B_0G^{2},
\end{equation}
which matches \eqref{eq:P-corr} up to explicit constants.

\medskip\noindent\textit{Weak distance recursion.}
A single SGDA step gives
\[
\|w_{t+1}-w'_{t+1}\|
\le \|w_t-w'_t\|+\eta_t\,\|\nabla_w f(w_t,v_t;z_{i_t})-\nabla_w f(w'_t,v'_t;z_{i_t})\|
+\eta_t\,\mathbf 1\{i_t=i\}\cdot 2G,
\]
and similarly for $v$ (with ascent). Joint $L$-smoothness yields
\[
\|\nabla_w f(w_t,v_t;z)-\nabla_w f(w'_t,v'_t;z)\|\le L(\|w_t-w'_t\|+\|v_t-v'_t\|),
\quad
\|\nabla_v f(\cdot)-\nabla_v f(\cdot)\|\le L(\|w_t-w'_t\|+\|v_t-v'_t\|).
\]
Hence
\[
\mathbb{E}[D_{t+1}\mid\mathcal F_t]
\ \le\ (1+\kappa\eta_t)\,D_t \;+\; \eta_t\,\mathbf 1\{i_t=i\}\,\tilde B_1 G\;+\; \eta_t\,\Upsilon_t,
\]
with $\kappa:=2L$ and $\tilde B_1:=8$. 

To bound the term $\Upsilon_t$, which represents the expected difference in stochastic gradients, we first analyze the difference of the full-batch gradients. We decompose this difference for the primal and dual variables separately.

For the primal variable, we use the triangle inequality to introduce the value function $\Phi$ and the mismatch term $\Delta_t := \nabla_w F_{\mathcal{D}}(w_t,v_t) - \nabla \Phi_{\mathcal{D}}(w_t)$:
\begin{align*}
    \|\nabla_w F_{\mathcal{D}}(w_t, v_t) - \nabla_w F_{\mathcal{D}^{(i)}}(w'_t, v'_t) \| 
    &\le \underbrace{\|\nabla \Phi_{\mathcal{D}}(w_t) - \nabla \Phi_{\mathcal{D}^{(i)}}(w'_t)\|}_{\le L_\Phi \|w_t - w'_t\| + \frac{2G}{n}(1 + \frac{L}{\rho})\text{ by Lem.~\ref{lem:phi-grad-diff}}} \\
    &\quad + \underbrace{\|\Delta_t\|}_{\le L\sqrt{2\Gamma_t/\rho}\text{ by Lem.~\ref{lem:mismatch}}} + \underbrace{\|\Delta'_t\|}_{\le L\sqrt{2\Gamma'_t/\rho}\text{ by Lem.~\ref{lem:mismatch}}}
\end{align*}
For the dual variable, we use joint $L$-smoothness and the single-sample replacement identity:
\begin{align*}
    \|\nabla_v F_{\mathcal{D}}(w_t, v_t) - \nabla_v F_{\mathcal{D}^{(i)}}(w'_t, v'_t) \| 
    &\le \|\nabla_v F_{\mathcal{D}}(w_t, v_t) - \nabla_v F_{\mathcal{D}}(w'_t, v'_t) \| + \|\nabla_v F_{\mathcal{D}}(w'_t, v'_t) - \nabla_v F_{\mathcal{D}^{(i)}}(w'_t, v'_t) \| \\
    &\le L(\|w_t-w'_t\| + \|v_t-v'_t\|) + \frac{2G}{n} = L D_t + \frac{2G}{n}.
\end{align*}
Combining the bounds on the primal and dual components gives a complete bound on the full-batch gradient difference. Summing the two inequalities and using $D_t \ge \|w_t-w'_t\|$, we have:
\[
\|\nabla F_{\mathcal{D}}(w_t, v_t) - \nabla F_{\mathcal{D}^{(i)}}(w'_t, v'_t)\|_1 \le (L_\Phi+L)D_t + L\sqrt{2/\rho}(\sqrt{\Gamma_t}+\sqrt{\Gamma'_t}) + \frac{2G}{n}\left(2+\frac{L}{\rho}\right).
\]
We then convert the geometric quantities on the right-hand side ($D_t$ and $\Gamma_t$) into the Lyapunov potential $\Xi_t$. First, using the triangle inequality along with Lemmas~\ref{lem:potential-to-distance} and~\ref{lem:sensitivity}, we bound the distance $D_t$:
\begin{align*}
    D_t &= \|w_t-w'_t\| + \|v_t-v'_t\| \\
    &\le (\|w_t-x_D^\star\| + \|v_t-v_D^\star\|) + (\|x_D^\star-x_{D^{(i)}}^\star\| + \|v_D^\star-v_{D^{(i)}}^\star\|) + (\|x_{D^{(i)}}^\star-w'_t\| + \|v_{D^{(i)}}^\star-v'_t\|) \\
    &\le C_{\mathrm{dist}}\sqrt{\Psi_{\alpha, \mathcal{D}}(w_t,v_t)} + \frac{S_{\mathrm{sens}}}{n} + C_{\mathrm{dist}}\sqrt{\Psi_{\alpha, \mathcal{D}^{(i)}}(w'_t,v'_t)} \\
    &\le C_{\mathrm{dist}}\left(\sqrt{\Psi_{\alpha, \mathcal{D}}} + \sqrt{\Psi_{\alpha, \mathcal{D}^{(i)}}}\right) + \frac{S_{\mathrm{sens}}}{n} \le \sqrt{2}C_{\mathrm{dist}}\sqrt{\Xi_t} + \frac{S_{\mathrm{sens}}}{n}.
\end{align*}
Second, from the definition of the potential, we have $\sqrt{\Gamma_t}+\sqrt{\Gamma'_t} \le 2\sqrt{\Xi_t/\alpha}$. Substituting these into our main inequality gives:
\[
\|\nabla F_{\mathcal{D}}(w_t, \cdot) - \nabla F_{\mathcal{D}^{(i)}}(w'_t, \cdot)\|_1 \le (L_\Phi+L)\left(\sqrt{2}C_{\mathrm{dist}}\sqrt{\Xi_t} + \frac{S_{\mathrm{sens}}}{n}\right) + L\sqrt{2/\rho}\left(2\sqrt{\Xi_t/\alpha}\right) + O(G/n).
\]
The terms involving $\sqrt{\Xi_t}$ can be collected and absorbed into a single term $a\sqrt{\Xi_t}$, where $a$ is a constant that depends on the problem parameters (e.g., $a:=2C_{\mathrm{dist}}(L+L_\Phi)$ serves as a valid, convenient upper bound). The remaining terms are of order $O(1/n)$. Therefore, we state the final bound on $\Upsilon_t$ as:
\[
\Upsilon_t \ \le\ 2C_{\mathrm{dist}}(L+L_\Phi)\,\sqrt{\Xi_t}\ +\ \frac{2G}{n}\Bigl(1+\frac{L}{\rho}\Bigr),
\]
and therefore
\begin{equation}\label{eq:D-weak-final}
\mathbb{E}[D_{t+1}\mid\mathcal F_t]
\ \le\ (1+\kappa\eta_t)\,D_t \;+\; a\,\eta_t\,\sqrt{\Xi_t}
\;+\; \eta_t\,\mathbf 1\{i_t=i\}\,\tilde B_1 G
\;+\; \eta_t\,\frac{2G}{n}\Bigl(1+\frac{L}{\rho}\Bigr),
\qquad
a:=2C_{\mathrm{dist}}(L+L_\Phi).
\end{equation}

\end{proof}

\begin{lem}[One-step ascent for $L$-smooth, $\rho$-strongly concave $g$]\label{lem:one-step-ascent}
Let $g:\mathbb{R}^d\to\mathbb{R}$ be $L$-smooth and $\rho$-strongly concave, and let
$v^+=v+\eta\,\nabla g(v)$ with $\eta\ge 0$. Denote $\theta(v):=g(v^\star)-g(v)$, where $v^\star\in\arg\max g$.
Then
\[
\theta(v^+)\ \le\ \bigl(1-2\rho\eta+\rho L\eta^2\bigr)\,\theta(v).
\]
Consequently,
\[
g(v^\star)-g(v^+)\ \le\ \bigl(1-2\rho\eta+\rho L\eta^2\bigr)\,[g(v^\star)-g(v)].
\]
\end{lem}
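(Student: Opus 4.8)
The plan is to pass to the convex picture and run the textbook ``descent lemma plus Polyak--{\L}ojasiewicz'' argument. Set $f:=-g$; by hypothesis $f$ is $L$-smooth and $\rho$-strongly convex, it has a unique minimizer $v^\star$, and the ascent update $v^+=v+\eta\,\nabla g(v)$ is exactly the gradient-descent step $v^+=v-\eta\,\nabla f(v)$. Since $\theta(v)=g(v^\star)-g(v)=f(v)-f(v^\star)=f(v)-f^\star\ge 0$, the claimed bound is equivalent to $f(v^+)-f^\star\le\bigl(1-2\rho\eta+\rho L\eta^2\bigr)\bigl(f(v)-f^\star\bigr)$, which is what I would prove.

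First I would invoke the descent lemma coming from $L$-smoothness of $f$: $f(v^+)\le f(v)+\langle\nabla f(v),\,v^+-v\rangle+\tfrac{L}{2}\|v^+-v\|^2=f(v)-\eta\bigl(1-\tfrac{L\eta}{2}\bigr)\|\nabla f(v)\|^2$, using $v^+-v=-\eta\,\nabla f(v)$. Next, $\rho$-strong convexity of $f$ implies the Polyak--{\L}ojasiewicz inequality $\tfrac12\|\nabla f(v)\|^2\ge\rho\bigl(f(v)-f^\star\bigr)$ (see \citep{Nesterov2004}); because the coefficient $\eta\bigl(1-\tfrac{L\eta}{2}\bigr)$ is nonnegative under the standing stepsize assumption \ref{asm:stepsize} (which gives $\eta\le 1/(4L)\le 2/L$), I may substitute this lower bound to obtain $f(v^+)\le f(v)-2\rho\eta\bigl(1-\tfrac{L\eta}{2}\bigr)\bigl(f(v)-f^\star\bigr)$. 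Subtracting $f^\star$ from both sides and collecting terms gives $f(v^+)-f^\star\le\bigl(1-2\rho\eta+\rho L\eta^2\bigr)\bigl(f(v)-f^\star\bigr)$, i.e. $\theta(v^+)\le\bigl(1-2\rho\eta+\rho L\eta^2\bigr)\theta(v)$. The ``consequently'' line is merely this inequality rewritten with $\theta(v)=g(v^\star)-g(v)$.

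There is no real obstacle here; the only point needing minor care is the sign of $\eta\bigl(1-\tfrac{L\eta}{2}\bigr)$ when passing from the smoothness bound to the PL substitution — the inequality direction is preserved only for $\eta\le 2/L$, so the stated factor should be read together with the standing bound on $\eta$ (for $\eta>2/L$ the nominal factor can actually fall below the true one-step ratio). Every other ingredient — the descent lemma and strong convexity $\Rightarrow$ PL — is entirely standard \citep{Nesterov2004,GarrigosGower2023Handbook}, so the whole argument is only a few lines.
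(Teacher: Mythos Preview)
Your proof is correct and is essentially the same as the paper's: both combine the smoothness (descent/ascent) inequality with the Polyak--{\L}ojasiewicz bound implied by strong convexity/concavity, yielding the factor $1-2\rho\eta+\rho L\eta^2$. The only cosmetic difference is that you pass to $f=-g$ and work on the convex side, whereas the paper stays with $g$ directly; your explicit remark that the substitution requires $\eta\le 2/L$ is a point the paper leaves implicit.
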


\begin{proof}
For any $L$-smooth differentiable function $h$ one has
\[
h(y)\ \ge\ h(x)+\langle \nabla h(x),\,y-x\rangle-\tfrac{L}{2}\|y-x\|^2.
\]
Apply this to $h=g$, $x=v$, $y=v^+=v+\eta\nabla g(v)$:
\[
g(v^+)\ \ge\ g(v)+\eta\|\nabla g(v)\|^2-\tfrac{L}{2}\eta^2\|\nabla g(v)\|^2.
\]
Rearranging gives
\begin{equation}\label{eq:smooth-step}
g(v^\star)-g(v^+)\ \le\ g(v^\star)-g(v)\ -\ \bigl(\eta-\tfrac{L}{2}\eta^2\bigr)\,\|\nabla g(v)\|^2.
\end{equation}

Since $g$ is $\rho$-strongly concave, $f:=-g$ is $\rho$-strongly convex; hence $f$ satisfies the Polyak--{\L}ojasiewicz (PL) inequality
\[
\tfrac12\|\nabla f(x)\|^2\ \ge\ \rho\bigl(f(x)-f^\star\bigr),
\]
with the \emph{same} constant $\rho$ \citep{Karimi2016PL}.
Translating back to $g$ gives
\begin{equation}\label{eq:PL-for-g}
\tfrac12\|\nabla g(v)\|^2\ \ge\ \rho\,\bigl(g(v^\star)-g(v)\bigr)\ =\ \rho\,\theta(v).
\end{equation}

\textit{Step 3 (Combine).}
Insert \eqref{eq:PL-for-g} into \eqref{eq:smooth-step}:
\[
\theta(v^+)
\ \le\ \theta(v)\ -\ 2\rho\eta\Bigl(1-\tfrac{L}{2}\eta\Bigr)\,\theta(v)
\ =\ \bigl(1-2\rho\eta+\rho L\eta^2\bigr)\,\theta(v),
\]
\end{proof}

\begin{lem}[Damping the weak distance recursion]\label{lem:damp}
Let $S_t:=\sum_{s=0}^{t-1}\eta_s$ and define the damped distance $\widetilde D_t:=e^{-2L S_t}D_t$ from \eqref{eq:D-weak}. Set $\lambda:=2L$ for this lemma. Then, taking expectations and averaging over $i$,
\[
\bar{\widetilde D}_{t+1}
\ \le\ \bar{\widetilde D}_t
\ +\ a\,\eta_t\,e^{-2L S_{t+1}}\E\!\big[\sqrt{\bar\Xi_t}\big]
\ +\ \frac{1}{n}\left(2\tilde B_1G + 2G\Bigl(1+\frac{L}{\rho}\Bigr)\right)\eta_t\,e^{-\lambda S_{t+1}}.
\]
Consequently, summing from $t=0$ to $T-1$,
\begin{equation}\label{eq:damped-sum}
\bar D_T
\ \le\ a\sum_{t=0}^{T-1}\eta_t\,e^{-2L\sum_{s=t+1}^{T-1}\eta_s}\,\E\!\big[\sqrt{\bar\Xi_t}\big]
\ +\ \frac{1}{\lambda n}\left(2\tilde B_1G + 2G\Bigl(1+\frac{L}{\rho}\Bigr)\right).
\end{equation}
\end{lem}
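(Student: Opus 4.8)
The plan is to convert the one-step inequality~\eqref{eq:D-weak} into a genuinely telescoping recursion by a standard integrating-factor (``damping'') argument, the key point being that the choice $\lambda=\kappa=2L$ makes the integrating factor exactly neutralize the \emph{expansive} coefficient $1+\kappa\eta_t>1$ that otherwise blocks telescoping.

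First I would pass from the conditional bound~\eqref{eq:D-weak} to an unconditional, $i$-averaged one. Taking $\E[\cdot]$ and then averaging over $i\sim\mathrm{Unif}([n])$: the linear term gives $(1+2L\eta_t)\bar D_t$; the hit indicator collapses because $\tfrac1n\sum_{i=1}^n\mathbf 1\{i_t=i\}=\tfrac1n$ holds deterministically, producing $\eta_t\tilde B_1 G/n$; the $O(1/n)$ drift $\eta_t\tfrac{2G}{n}(1+L/\rho)$ is unchanged; and for the $a\eta_t\sqrt{\Xi_t}$ term I use concavity of $\sqrt{\cdot}$, i.e.\ $\tfrac1n\sum_i\E\bigl[\sqrt{\Xi_t^{(i)}}\bigr]\le\E\bigl[\sqrt{\bar\Xi_t}\bigr]$ with $\bar\Xi_t:=\tfrac1n\sum_i\Xi_t^{(i)}$. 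This gives $\bar D_{t+1}\le(1+2L\eta_t)\,\bar D_t+a\eta_t\,\E[\sqrt{\bar\Xi_t}]+\tfrac{\eta_t}{n}\bigl(\tilde B_1 G+2G(1+L/\rho)\bigr)$, and I will round the last constant up to $2\tilde B_1 G+2G(1+L/\rho)$ to match the stated form.

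Next I would multiply through by the integrating factor $e^{-\lambda S_{t+1}}$ with $\lambda:=2L$, and split $e^{-\lambda S_{t+1}}=e^{-\lambda S_t}e^{-\lambda\eta_t}$ using $S_{t+1}=S_t+\eta_t$. The elementary inequality $(1+x)e^{-x}\le1$ (equivalently $1+x\le e^x$), applied at $x=\kappa\eta_t$ and with $\lambda=\kappa$, gives $e^{-\lambda S_{t+1}}(1+\kappa\eta_t)\,\bar D_t\le e^{-\lambda S_t}\bar D_t=\bar{\widetilde D}_t$, while the noise terms merely acquire the weight $e^{-\lambda S_{t+1}}$; this is exactly the first display of the lemma. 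Telescoping from $t=0$ to $T-1$ and using that the coupled runs share an initialization, so $D_0=0$ and hence $\bar{\widetilde D}_0=e^{-\lambda S_0}\bar D_0=0$, the left side collapses and leaves the kernel-weighted sum $a\sum_{t=0}^{T-1}\eta_t\,e^{-\lambda S_{t+1}}\,\E[\sqrt{\bar\Xi_t}]$ (the optimization-noise term recorded in~\eqref{eq:damped-sum}) plus $\tfrac1n\bigl(2\tilde B_1 G+2G(1+L/\rho)\bigr)\sum_{t=0}^{T-1}\eta_t\,e^{-\lambda S_{t+1}}$. Finally I would invoke the summable-kernel estimate $\sum_{t=0}^{T-1}\eta_t e^{-\lambda S_{t+1}}\le\tfrac1\lambda$: it follows from $\lambda\eta_t e^{-\lambda S_{t+1}}\le e^{-\lambda S_t}-e^{-\lambda S_{t+1}}$ (apply $e^y-1\ge y$ at $y=\lambda\eta_t$) and the telescoping $\sum_t\bigl(e^{-\lambda S_t}-e^{-\lambda S_{t+1}}\bigr)=1-e^{-\lambda S_T}\le1$. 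Substituting this bound into the $O(1/n)$ term yields the $\tfrac{1}{\lambda n}$ contribution in~\eqref{eq:damped-sum}.

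I do not expect a deep obstacle here; the lemma is essentially careful bookkeeping. The one place that needs attention is the integrating-factor algebra: (i) noise injected in passing from step $t$ to $t+1$ must be weighted by $e^{-\lambda S_{t+1}}$, \emph{not} $e^{-\lambda S_t}$, and (ii) the conversion of the expansive factor into a non-expansive one hinges on matching $\lambda=\kappa=2L$, since $(1+\kappa\eta_t)e^{-\lambda\eta_t}\le1$ fails when $\lambda<\kappa$. The only genuine judgement calls are invoking Jensen in the correct direction for $\sqrt{\bar\Xi_t}$ and collapsing the hit and drift constants into the single clean constant $2\tilde B_1 G+2G(1+L/\rho)$.
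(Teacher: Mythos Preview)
Your damping argument tracks the paper's exactly: multiply \eqref{eq:D-weak} by the integrating factor $e^{-2L S_{t+1}}$, use $(1+2L\eta_t)e^{-2L\eta_t}\le 1$ to make the recursion non-expansive in $\widetilde D_t$, average over $i$ (collapsing the hit indicator to $1/n$), and telescope from $\bar{\widetilde D}_0=0$. The order in which you average and multiply is immaterial, and your telescoping proof of $\sum_t\eta_t e^{-\lambda S_{t+1}}\le 1/\lambda$ via $\lambda\eta_t e^{-\lambda S_{t+1}}\le e^{-\lambda S_t}-e^{-\lambda S_{t+1}}$ is actually cleaner than the paper's route through the integral-comparison Lemma~\ref{lem:kernel-corr}. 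The paper also motivates the constant $2\tilde B_1$ by explicitly splitting the hit contribution across the $w$- and $v$-blocks of $D_t$ (writing $c_w=c_v=\tilde B_1$), whereas you simply round up; both land at the stated constant.

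There is, however, one genuine missing step. After telescoping you have bounded $\bar{\widetilde D}_T=e^{-2L S_T}\bar D_T$, \emph{not} $\bar D_T$, and the kernel appearing in your sum is $e^{-\lambda S_{t+1}}=e^{-2L\sum_{s=0}^{t}\eta_s}$. The kernel recorded in \eqref{eq:damped-sum} is $e^{-2L\sum_{s=t+1}^{T-1}\eta_s}$: one weights by the \emph{past} step sizes, the other by the \emph{future} ones, so your parenthetical identification ``the optimization-noise term recorded in~\eqref{eq:damped-sum}'' is incorrect as written. The paper closes this gap by explicitly undoing the damping---multiplying both sides by $e^{2L S_T}$ so that $\bar D_T=e^{2L S_T}\bar{\widetilde D}_T$ and each weight becomes $e^{2L S_T}e^{-2L S_{t+1}}=e^{2L\sum_{s=t+1}^{T-1}\eta_s}$---and invoking Lemma~\ref{lem:kernel-corr} for the $O(1/n)$ sum. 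You skip this conversion entirely, so your $1/(\lambda n)$ constant is a bound for $\bar{\widetilde D}_T$, not for $\bar D_T$ as \eqref{eq:damped-sum} requires.
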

\begin{proof}
Start from the weak distance recursion \eqref{eq:D-weak}:
\[
\E[D_{t+1}\mid\mathcal F_t]
\ \le\ (1+2L\eta_t)D_t
+ a\,\eta_t\,\sqrt{\Xi_t}
+ \eta_t\,\mathbf 1\{i_t=i\}\,\tilde B_1\,G + \eta_t\,\frac{2G}{n}\Bigl(1+\frac{L}{\rho}\Bigr).
\]
Multiply both sides by
$e^{-2L S_{t+1}}=e^{-2L(S_t+\eta_t)}$ to get
\[
\begin{aligned}
e^{-2L S_{t+1}}\E[D_{t+1}\mid\mathcal F_t]
\ \le\ &
e^{-2L S_t}\big(1+2L\eta_t\big)e^{-2L\eta_t}D_t
\ +\ a\,\eta_t\,e^{-2L S_{t+1}}\sqrt{\Xi_t}
\\ & \ +\ \eta_t\,e^{-2L S_{t+1}}\,\mathbf 1\{i_t=i\}\,\tilde B_1G \ +\ \eta_t\,e^{-2L S_{t+1}}\,\frac{2G}{n}\Bigl(1+\frac{L}{\rho}\Bigr).
\end{aligned}
\]
Since $(1+x)e^{-x}\le 1$ for all $x\ge0$, the first term is $\le e^{-2L S_t}D_t=\widetilde D_t$.
Taking total expectation and then averaging over the replacement index \(i\), we make explicit that the hit \(\mathbf 1\{i_t=i\}\) contributes to \emph{both} coordinates in \(D_t=\|w_t-w'_t\|+\|v_t-v'_t\|\). For a fixed \(i\),
\begin{align*}
\mathbb{E}\!\left[D_{t+1}\mid \mathcal F_t\right]
&\le (1+2L\eta_t)D_t
\;+\;a\,\eta_t\,\sqrt{\Xi_t}
\;+\;\eta_t\,\mathbf 1\{i_t=i\}\,c_w G
\;+\;\eta_t\,\mathbf 1\{i_t=i\}\,c_v G  + \eta_t\,\frac{2G}{n}\Bigl(1+\frac{L}{\rho}\Bigr)\\
&\le (1+2L\eta_t)D_t
\;+\;a\,\eta_t\,\sqrt{\Xi_t}
\;+\;\eta_t\,\mathbf 1\{i_t=i\}\,(2\tilde B_1)G + \eta_t\,\frac{2G}{n}\Bigl(1+\frac{L}{\rho}\Bigr),
\end{align*}
where we bundle constants so that \(c_w=c_v=\tilde B_1\). Multiplying both sides by \(e^{-2L S_{t+1}}\) with \(S_t:=\sum_{s=0}^{t-1}\eta_s\) and using \((1+2L\eta_t)e^{-2L\eta_t}\le 1\) yields
\[
\mathbb{E}\!\left[\widetilde D_{t+1}\mid \mathcal F_t\right]
\;\le\;
\widetilde D_t
\;+\;a\,\eta_t\,e^{-2L S_{t+1}}\,\sqrt{\Xi_t}
\;+\;\eta_t\,e^{-2L S_{t+1}}\,\mathbf 1\{i_t=i\}\,(2\tilde B_1)G + \eta_t e^{-2L S_{t+1}} \frac{2G}{n}\Bigl(1+\frac{L}{\rho}\Bigr),
\]
where \(\widetilde D_t:=e^{-2L S_t}D_t\). Taking total expectation and then averaging over \(i\) (so that \(\mathbb{E}[\mathbf 1\{i_t=i\}]=1/n\)) gives
\[
\bar{\widetilde D}_{t+1}
\;\le\;
\bar{\widetilde D}_t
\;+\;a\,\eta_t\,e^{-2L S_{t+1}}\,\mathbb{E}\!\big[\sqrt{\bar\Xi_t}\big]
\;+\;\frac{1}{n}\left(2\tilde B_1G + 2G\Bigl(1+\frac{L}{\rho}\Bigr)\right)\eta_t\,e^{-2L S_{t+1}}.
\]
Summing the inequality from $t=0$ to $T-1$ and noting that both runs start from the same initialization
\((w_0,v_0)=(w_0',v_0')\), we have \(D_0=0\) and hence \(\widetilde D_0=0\) and \(\bar{\widetilde D}_0=0\). Therefore,
\[
\bar{\widetilde D}_{T}-\bar{\widetilde D}_0
\ =\
\bar{\widetilde D}_{T}
\ \le\
a\sum_{t=0}^{T-1}\eta_t e^{-2L S_{t+1}}\,\E\!\big[\sqrt{\bar\Xi_t}\big]
+ \frac{1}{n}\left(2\tilde B_1G + 2G\Bigl(1+\frac{L}{\rho}\Bigr)\right)\sum_{t=0}^{T-1}\eta_t e^{-2L S_{t+1}}.
\]
Applying Lemma~\ref{lem:kernel-corr} with $\gamma=\lambda=2L$ to the last sum and noting that $e^{2L S_T}e^{-2L S_{t+1}} = e^{2L\sum_{s=t+1}^{T-1}\eta_s}$ yields
\[
\bar D_T \ =\ e^{2L S_T}\bar{\widetilde D}_T
\ \le\
a\sum_{t=0}^{T-1}\eta_t e^{-2L\sum_{s=t+1}^{T-1}\eta_s}\,\E\!\big[\sqrt{\bar\Xi_t}\big]
\ +\ \frac{1}{\lambda n}\left(2\tilde B_1G + 2G\Bigl(1+\frac{L}{\rho}\Bigr)\right),
\]
which is \eqref{eq:damped-sum}.
\end{proof}

\begin{lem}\label{lem:kernel-corr}
Let $(\eta_t)$ satisfy the Robbins--Monro conditions. For $S_t:=\sum_{s=0}^{t-1}\eta_s$ and any $\gamma>0$, any $T\ge1$, set $\eta_{\max,T}:=\max_{0\le t<T}\eta_t$. Then
\[
\sum_{t=0}^{T-1}\eta_t\,e^{-\gamma\sum_{s=t+1}^{T-1}\eta_s}
\;\le\;
\frac{e^{\gamma \eta_{\max,T}}}{\gamma}\,\bigl(1-e^{-\gamma S_T}\bigr)
\;\le\;
\frac{e^{\gamma \eta_{\max,T}}}{\gamma},
\]
and
\[
\sum_{t=0}^{T-1}\eta_t^2\,e^{-\gamma\sum_{s=t+1}^{T-1}\eta_s}\xrightarrow[T\to\infty]{}0.
\]
In particular, under Assumption~\ref{asm:stepsize} and with $\gamma=2L$, one has $\eta_{\max,T}\le 1/(4L)$ and therefore
\[
\sum_{t=0}^{T-1}\eta_t\,e^{-2L\sum_{s=t+1}^{T-1}\eta_s}\ \le\ \frac{e^{1/2}}{2L}\,.
\]
\end{lem}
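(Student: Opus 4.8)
The plan is to reduce everything to a telescoping estimate for the exponential-weighted sum $\sum_t \eta_t e^{\gamma S_t}$. First I would rewrite the kernel using the identity $\sum_{s=t+1}^{T-1}\eta_s = S_T - S_{t+1}$, which gives
\[
\sum_{t=0}^{T-1}\eta_t\,e^{-\gamma\sum_{s=t+1}^{T-1}\eta_s}
= e^{-\gamma S_T}\sum_{t=0}^{T-1}\eta_t\,e^{\gamma S_{t+1}}.
\]
Since $S_{t+1}=S_t+\eta_t$, I would factor $e^{\gamma\eta_t}\le e^{\gamma\eta_{\max,T}}$ out of each summand, so that it remains to bound $\sum_{t=0}^{T-1}\eta_t e^{\gamma S_t}$.

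For that sum the key step is the elementary inequality $e^x-1\ge x$ applied at $x=\gamma\eta_t$: it yields $\gamma\eta_t\,e^{\gamma S_t}\le e^{\gamma S_t}(e^{\gamma\eta_t}-1)=e^{\gamma S_{t+1}}-e^{\gamma S_t}$, which telescopes to $\gamma\sum_{t=0}^{T-1}\eta_t e^{\gamma S_t}\le e^{\gamma S_T}-e^{\gamma S_0}=e^{\gamma S_T}-1$ (using $S_0=0$). Combining the three displays gives
\[
\sum_{t=0}^{T-1}\eta_t\,e^{-\gamma\sum_{s=t+1}^{T-1}\eta_s}
\le e^{-\gamma S_T}\, e^{\gamma\eta_{\max,T}}\,\frac{e^{\gamma S_T}-1}{\gamma}
= \frac{e^{\gamma\eta_{\max,T}}}{\gamma}\bigl(1-e^{-\gamma S_T}\bigr)
\le \frac{e^{\gamma\eta_{\max,T}}}{\gamma},
\]
which is the first claim.

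For the $\eta_t^2$-kernel, I would invoke the summability $\sum_t\eta_t^2<\infty$ through a standard head/tail split. Fix $\varepsilon>0$ and choose $M$ with $\sum_{t\ge M}\eta_t^2<\varepsilon$. For $T>M$, split $\sum_{t=0}^{T-1}\eta_t^2 e^{-\gamma(S_T-S_{t+1})}$ at $t=M$: the tail $\sum_{t=M}^{T-1}\eta_t^2 e^{-\gamma(S_T-S_{t+1})}\le\sum_{t\ge M}\eta_t^2<\varepsilon$ since each exponential factor is $\le 1$, and the head $\sum_{t=0}^{M-1}\eta_t^2 e^{-\gamma(S_T-S_{t+1})}\le e^{-\gamma(S_T-S_M)}\sum_{t=0}^{M-1}\eta_t^2\to 0$ as $T\to\infty$, because $S_T\to\infty$ thanks to $\sum_t\eta_t=\infty$. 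Hence $\limsup_{T\to\infty}$ of the kernel sum is $\le\varepsilon$, and letting $\varepsilon\downarrow 0$ gives convergence to $0$.

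The final assertion is an immediate specialization: under Assumption~\ref{asm:stepsize} every $\eta_t\le 1/(4L)$, so $\eta_{\max,T}\le 1/(4L)$, and with $\gamma=2L$ we get $\gamma\eta_{\max,T}\le 1/2$, whence $e^{\gamma\eta_{\max,T}}/\gamma\le e^{1/2}/(2L)$. The only delicate point in the whole argument is the bookkeeping of the mismatch between $S_{t+1}$ appearing in the kernel and $S_t$ appearing in the telescoping sum; absorbing that discrepancy into the harmless constant $e^{\gamma\eta_{\max,T}}$—which is finite precisely because the stepsizes are uniformly bounded—is what produces the clean closed-form constant. No deeper obstacle is expected.
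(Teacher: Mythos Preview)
Your proof is correct. For the first inequality you use a purely discrete telescoping argument---factoring out $e^{\gamma\eta_{\max,T}}$ from the $S_{t+1}/S_t$ mismatch and then invoking $e^x-1\ge x$ to telescope $\sum_t \gamma\eta_t e^{\gamma S_t}\le e^{\gamma S_T}-1$---whereas the paper instead bounds each summand $\eta_t e^{-\gamma(S_T-S_{t+1})}$ by the integral $e^{\gamma\eta_{\max,T}}\int_{S_t}^{S_{t+1}}e^{-\gamma(S_T-u)}\,du$ and then evaluates $\int_0^{S_T}e^{-\gamma(S_T-u)}\,du=\gamma^{-1}(1-e^{-\gamma S_T})$. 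Both routes land on exactly the same constant, and both isolate the $S_t$-versus-$S_{t+1}$ offset as the sole source of the $e^{\gamma\eta_{\max,T}}$ factor; your argument is marginally more elementary since it avoids the passage to a continuous integral. For the $\eta_t^2$-kernel and the final specialization under Assumption~\ref{asm:stepsize}, your head/tail split and the substitution $\gamma=2L$ match the paper's proof essentially line for line.
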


\begin{proof}
Let $S_t:=\sum_{s=0}^{t-1}\eta_s$ and note $S_t\uparrow\infty$ under Robbins--Monro. For the first display, fix $t$ and any $u\in[S_t,S_{t+1}]$. Then
\[
e^{-\gamma(S_T-S_{t+1})}
= e^{-\gamma(S_T-u)}\,e^{\gamma(S_{t+1}-u)}
\ \le\ e^{\gamma\eta_{\max,T}}\,e^{-\gamma(S_T-u)},
\]
since $0\le S_{t+1}-u\le S_{t+1}-S_t=\eta_t\le \eta_{\max,T}$. Hence
\[
\eta_t\,e^{-\gamma(S_T-S_{t+1})}
\ \le\ e^{\gamma\eta_{\max,T}}
\int_{S_t}^{S_{t+1}} e^{-\gamma(S_T-u)}\,\mathrm du.
\]
Summing over $t=0,\dots,T-1$ and using $\sum_t\int_{S_t}^{S_{t+1}}(\cdot)=\int_{0}^{S_T}(\cdot)$ gives
\[
\sum_{t=0}^{T-1}\eta_t\,e^{-\gamma\sum_{s=t+1}^{T-1}\eta_s}
\ \le\ e^{\gamma \eta_{\max,T}}
\int_{0}^{S_T} e^{-\gamma(S_T-u)}\,\mathrm du
\ =\ \frac{e^{\gamma \eta_{\max,T}}}{\gamma}\,\bigl(1-e^{-\gamma S_T}\bigr)
\ \le\ \frac{e^{\gamma \eta_{\max,T}}}{\gamma}.
\]

For the second display, fix $\varepsilon>0$. Since $\sum_t\eta_t^2<\infty$, pick $T_0$ with $\sum_{t\ge T_0}\eta_t^2<\varepsilon$. Then split
\[
\sum_{t=0}^{T-1}\eta_t^2\,e^{-\gamma\sum_{s=t+1}^{T-1}\eta_s}
=\sum_{t=0}^{T_0-1}\eta_t^2\,e^{-\gamma(S_T-S_{t+1})}
+\sum_{t=T_0}^{T-1}\eta_t^2\,e^{-\gamma(S_T-S_{t+1})}
\ \le\ e^{-\gamma(S_T-S_{T_0})}\sum_{t=0}^{T_0-1}\eta_t^2+\varepsilon.
\]
Let $T\to\infty$ to send the first term to $0$ (since $S_T\to\infty$), and then let $\varepsilon\downarrow0$.
Finally, under Assumption~\ref{asm:stepsize} and $\gamma=2L$, we have $\eta_{\max,T}\le 1/(4L)$, hence $e^{\gamma \eta_{\max,T}}\le e^{1/2}$, which yields the stated corollary.
\end{proof}

\begin{lem}[Closing the potential recursion]\label{lem:closeXi}
After averaging \eqref{eq:P-corr} over $i$,
\begin{equation}\label{eq:Xi-closed}
\bar\Xi_{t+1}
\ \le\ \Bigl(1-\tfrac{c}{2}\eta_t\Bigr)\bar\Xi_t
\ +\ A\,\eta_t^2
  \ +\frac{b^2}{2c}\,\eta_t
  \ +\frac{\tilde H}{n}\,\eta_t,
\end{equation}
with $b:=2C_{\mathrm{leak}}C_{\mathrm{dist}}$ and $\tilde H:=2\tilde B_0G^2+C_{\mathrm{leak}}S_{\mathrm{sens}}+C_\ast G$, where
\[
S_{\mathrm{sens}}
\ :=\ 2G\left(\frac{(1+L/\rho)^2}{\sqrt{\mu_{\mathrm{PL}}\mu_{\mathrm{QG}}}}+\frac{1}{\rho}\right)
\]
is the constant from Lemma~\ref{lem:sensitivity}.
\end{lem}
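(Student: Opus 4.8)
The plan is to start from the conditional one-step bound \eqref{eq:P-corr} of Lemma~\ref{lem:coupled_corrected}, apply the tower property to remove the conditioning on $\mathcal F_t$, average over the replacement index $i\sim\mathrm{Unif}([n])$, and then trade the two quantities on its right-hand side that do not already live on the $\bar\Xi$-scale—the coupled distance $D_t$ and the two hit indicators—for $\bar\Xi_t$ itself. After towering, writing $\bar\Xi_t:=\tfrac1n\sum_i\mathbb{E}[\Xi_t]$ and $\bar D_t:=\tfrac1n\sum_i\mathbb{E}[D_t]$, the $A\eta_t^2$ and $C_{\mathrm{lin}}G^2\eta_t$ terms pass through verbatim, $C_\ast\tfrac Gn\eta_t$ is already $O(\eta_t/n)$, and the hit term averages cleanly: $\tfrac1n\sum_i\mathbf{1}\{i_t=i\}=\tfrac1n$ because $i_t$ is uniform on $[n]$, while $\mathbb{E}\,\mathbf{1}\{\hat i_t=i\}=\tfrac1n$ for each fixed $i$ because the ghost index is uniform and independent of everything; together they contribute $\tfrac{2\tilde B_0 G^2}{n}\eta_t$.

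For the cross term $C_{\mathrm{leak}}\eta_t D_t$ I reuse the pointwise distance-to-potential estimate already derived inside the proof of Lemma~\ref{lem:coupled_corrected} (triangle inequality through the two saddle points, Lemma~\ref{lem:potential-to-distance} applied to each run, Lemma~\ref{lem:sensitivity} for the saddle-point displacement, and $\sqrt a+\sqrt b\le\sqrt2\sqrt{a+b}$):
\[
D_t\ \le\ \sqrt2\,C_{\mathrm{dist}}\,\sqrt{\Xi_t}\ +\ \frac{S_{\mathrm{sens}}}{n}.
\]
Averaging over $i$, taking expectations, and using concavity of $\sqrt{\cdot}$ (Jensen, once for the algorithm randomness and once for the finite $i$-average) gives $\bar D_t\le \sqrt2\,C_{\mathrm{dist}}\sqrt{\bar\Xi_t}+S_{\mathrm{sens}}/n$, so
\[
C_{\mathrm{leak}}\,\eta_t\,\bar D_t\ \le\ b\,\eta_t\sqrt{\bar\Xi_t}\ +\ \frac{C_{\mathrm{leak}}S_{\mathrm{sens}}}{n}\,\eta_t,\qquad b:=2C_{\mathrm{leak}}C_{\mathrm{dist}}\ \ge\ \sqrt2\,C_{\mathrm{leak}}C_{\mathrm{dist}}.
\]
The sublinear piece $b\eta_t\sqrt{\bar\Xi_t}$ is then absorbed into the contraction by the weighted AM--GM bound $b\sqrt{\bar\Xi_t}\le\tfrac c2\bar\Xi_t+\tfrac{b^2}{2c}$; equivalently $\tfrac c2\bar\Xi_t-b\sqrt{\bar\Xi_t}+\tfrac{b^2}{2c}=\tfrac12(\sqrt c\sqrt{\bar\Xi_t}-b/\sqrt c)^2\ge0$. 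This spends exactly half of the contraction budget $c\eta_t$, turning $(1-c\eta_t)\bar\Xi_t$ into $(1-\tfrac c2\eta_t)\bar\Xi_t$ and leaving the residual $\tfrac{b^2}{2c}\eta_t$. Collecting the three $O(\eta_t/n)$ contributions—$\tfrac{2\tilde B_0G^2}{n}\eta_t$ from the hits, $\tfrac{C_{\mathrm{leak}}S_{\mathrm{sens}}}{n}\eta_t$ from the distance conversion, and $C_\ast\tfrac Gn\eta_t$—into $\tfrac{\tilde H}{n}\eta_t$ with $\tilde H=2\tilde B_0G^2+C_{\mathrm{leak}}S_{\mathrm{sens}}+C_\ast G$, keeping the smoothness corrections as $A\eta_t^2$, and grouping the remaining $O(\eta_t)$ constant $C_{\mathrm{lin}}G^2$ together with the Young residual into the single coefficient written as $\tfrac{b^2}{2c}$, yields exactly \eqref{eq:Xi-closed}.

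The one genuinely delicate point is the calibration in the cross-term step: $D_t$ is a \emph{between-run} distance rather than a \emph{within-run} potential, so it cannot be absorbed directly, and routing it through Lemmas~\ref{lem:potential-to-distance}--\ref{lem:sensitivity} necessarily produces the $\sqrt{\bar\Xi_t}$ term; one must then spend the right fraction—here one half—of the PL/strong-concavity contraction $c\eta_t$ in the AM--GM step so that a strictly contractive factor $1-\tfrac c2\eta_t$ survives while the leftover stays an explicit $O(\eta_t)$ constant. This is precisely where the standing choice $\alpha\ge 4L^2/\rho^2$, already baked into $C_{\mathrm{dist}}$ and into $c=\min\{\mu_{\mathrm{PL}}/2,\rho/2\}$, is used. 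Everything else—the tower property, averaging the indicators to $1/n$, Jensen, and the bookkeeping of constants into $\tilde H$—is routine.
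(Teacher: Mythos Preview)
Your proposal is correct and follows essentially the same route as the paper: bound $D_t$ pointwise via Lemmas~\ref{lem:potential-to-distance} and~\ref{lem:sensitivity}, absorb the resulting $\sqrt{\Xi_t}$ cross term into the contraction using Young/AM--GM with weight $c$, and average the hit indicators to $1/n$. The only minor differences are cosmetic: the paper uses the cruder estimate $\sqrt{\Psi_{\alpha,\mathcal D}}+\sqrt{\Psi_{\alpha,\mathcal D^{(i)}}}\le 2\sqrt{\Xi_t}$ (giving $b=2C_{\mathrm{leak}}C_{\mathrm{dist}}$ directly) and applies Young \emph{before} taking expectations, whereas you tighten to $\sqrt2\,C_{\mathrm{dist}}\sqrt{\Xi_t}$, average first via Jensen, and then apply AM--GM to $\sqrt{\bar\Xi_t}$; you also explicitly flag the $C_{\mathrm{lin}}G^2\eta_t$ term that the paper's proof silently drops when restating \eqref{eq:P-corr}.
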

\begin{proof}
Starting from \eqref{eq:P-corr},
\[
\E[\Xi_{t+1}\mid\mathcal F_t]
\ \le\ (1-c\eta_t)\,\Xi_t
+ A\,\eta_t^2
+ C_{\mathrm{leak}}\,\eta_t\,D_t
+ C_\ast\,\tfrac{G}{n}\,\eta_t
+ \eta_t\big(\mathbf 1\{i_t=i\}+\mathbf 1\{\hat i_t=i\}\big)\,\tilde B_0\,G^2.
\]
By Lemma~\ref{lem:potential-to-distance}, for each run
\(
\|w_t-x^\star_D\|+\|v_t-v^\star_D\| \le C_{\mathrm{dist}}\sqrt{\Psi_{\alpha,\mathcal D}(w_t,v_t)}
\)
and analogously for the primed run. Using the triangle inequality and Lemma~\ref{lem:sensitivity} for the
cross-dataset saddle shift, we have
\begin{align*}
    D_t &= \|w_t - w_t'\|+\|v_t - v_t'\| \\
    &\leq \|w_t - x_D^*\| + \|x_D^*-x_{D^{(i)}}^*\| +\| x_{D^{(i)}}^*-w_t'\| +\|v_t - v_D^* \| + \|v_D^* - v_{D^{(i)}}^* \| + \|v_{D^{(i)}}^* - v_t'\| \tag{Triangle inequality}\\
    &\leq C_{\mathrm{dist}} \left(\sqrt{\Psi_{\alpha, \mathcal{D}}(w_t, v_t) } +\sqrt{\Psi_{\alpha, \mathcal{D}^{(i)}}(w_t', v_t') }\right)+ \|x_D^*-x_{D^{(i)}}^*\| + \|v_{D^{(i)}}^* - v_t'\| \tag{Lemma \ref{lem:sensitivity}}\\
    &\leq C_{\mathrm{dist}} \left(\sqrt{\Psi_{\alpha, \mathcal{D}}(w_t, v_t) } +\sqrt{\Psi_{\alpha, \mathcal{D}^{(i)}}(w_t', v_t') }\right)+ \frac{S_{\mathrm{sens}}}{n} \tag{Lemma \ref{lem:potential-to-distance}}
\end{align*}

Since $\sqrt{\Psi_{\alpha,\mathcal D}}\le\sqrt{\Xi_t}$ and $\sqrt{\Psi_{\alpha,\mathcal D^{(i)}}}\le\sqrt{\Xi_t}$,
\[
D_t \ \le\ 2 C_{\mathrm{dist}}\sqrt{\Xi_t}\ +\ \frac{S_{\mathrm{sens}}}{n}.
\]
Plug this into \eqref{eq:P-corr}:
\[
\E[\Xi_{t+1}\mid\mathcal F_t]
\ \le\ (1-c\eta_t)\,\Xi_t
\ +\ A\,\eta_t^2
\ +\ \underbrace{2 C_{\mathrm{leak}}C_{\mathrm{dist}}}_{=:b}\,\eta_t\,\sqrt{\Xi_t}
\ +\ \frac{C_{\mathrm{leak}}S_{\mathrm{sens}}}{n}\,\eta_t
\ +\ C_\ast\,\tfrac{G}{n}\,\eta_t
\ +\ \eta_t\big(\mathbf 1\{i_t=i\}+\mathbf 1\{\hat i_t=i\}\big)\,\tilde B_0\,G^2.
\]
Apply the inequality $uv\le \frac{\gamma}{2}u^2+\frac{1}{2\gamma}v^2$ with
$u=\sqrt{\eta_t\Xi_t}$, $v=b\sqrt{\eta_t}$, and $\gamma=c$ to the mixed term:
\[
b\,\eta_t\,\sqrt{\Xi_t}
\ \le\ \frac{c}{2}\,\eta_t\,\Xi_t\ +\ \frac{b^2}{2c}\,\eta_t.
\]
Therefore,
\[
\E[\Xi_{t+1}\mid\mathcal F_t]
\ \le\ \Bigl(1-\tfrac{c}{2}\eta_t\Bigr)\Xi_t
\ +\ A\,\eta_t^2
\ +\ \frac{b^2}{2c}\,\eta_t
\ +\ \frac{C_{\mathrm{leak}}S_{\mathrm{sens}}}{n}\,\eta_t
\ +\ C_\ast\,\tfrac{G}{n}\,\eta_t
\ +\ \eta_t\big(\mathbf 1\{i_t=i\}+\mathbf 1\{\hat i_t=i\}\big)\,\tilde B_0\,G^2.
\]
Taking total expectation and averaging over $i$ (both indicators have mean $1/n$) gives
\[
\bar\Xi_{t+1}
\ \le\ \Bigl(1-\tfrac{c}{2}\eta_t\Bigr)\bar\Xi_t
\ +\ A\,\eta_t^2
\ +\ \frac{b^2}{2c}\,\eta_t
\ +\ \frac{1}{n}\Bigl(2\tilde B_0G^2+C_{\mathrm{leak}}S_{\mathrm{sens}}+C_\ast G\Bigr)\eta_t.
\]
With $\tilde H:=2\tilde B_0G^2+C_{\mathrm{leak}}S_{\mathrm{sens}}+C_\ast G$ and $b=2C_{\mathrm{leak}}C_{\mathrm{dist}}$ this is \eqref{eq:Xi-closed}.
\end{proof}

\begin{lem}[Bounded weighted sum of potentials]\label{lem:sumXi}
For any $\theta\in(\tfrac{1}{2},1)$ and $S_t:=\sum_{s=0}^{t-1}\eta_s$,
\begin{align}
&\sum_{t=0}^{T-1}e^{-\theta c\sum_{s=t+1}^{T-1}\eta_s}\,\bar\Xi_t
\ 
\le\
\frac{2}{1-e^{-(\theta-\tfrac12)c\,\eta_{\min,T}}}\,
e^{-\frac{c}{2}\sum_{s=0}^{T-1}\eta_s}\,\bar\Xi_0
\; \notag
\\
&+\;
\frac{2}{1-e^{-(\theta-\tfrac12)c\,\eta_{\min,T}}}\,
A\sum_{t=0}^{T-1}\eta_t^2\,e^{-\frac{c}{2}\sum_{s=t+1}^{T-1}\eta_s}
\;+\;
\frac{8}{c\bigl(1-e^{-(\theta-\tfrac12)c\,\eta_{\min,T}}\bigr)}
\!\left(\frac{b^2}{2c}+\frac{\tilde H}{n}\right). \notag
\end{align}

In particular, since $c\,\eta_{\min,T}\le\tfrac12$ (by Assumption~\ref{asm:stepsize}) and hence
$1-e^{-x}\ge x/2$ for $x\in[0,\tfrac12]$, we have 
\begin{align}
&\sum_{t=0}^{T-1}e^{-\theta c\sum_{s=t+1}^{T-1}\eta_s}\,\bar\Xi_t
\ \le\
\frac{4}{(\theta-\tfrac12)c\,\eta_{\min,T}}\,
e^{-\frac{c}{2}\sum_{s=0}^{T-1}\eta_s}\,\bar\Xi_0
\; \notag
\\
&+\;
\frac{4}{(\theta-\tfrac12)c\,\eta_{\min,T}}\,
A\sum_{t=0}^{T-1}\eta_t^2\,e^{-\frac{c}{2}\sum_{s=t+1}^{T-1}\eta_s}
\;+\;
\frac{16}{(\theta-\tfrac12)c^{2}\,\eta_{\min,T}}
\!\left(\frac{b^2}{2c}+\frac{\tilde H}{n}\right). \notag
\end{align}

\end{lem}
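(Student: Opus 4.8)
The plan is to unroll the one-step recursion of Lemma~\ref{lem:closeXi} into a closed-form bound for $\bar\Xi_t$, multiply by the exponential weight $e^{-\theta c\sum_{s=t+1}^{T-1}\eta_s}$, sum over $t$, and then collect terms; the only delicate part will be the exponent bookkeeping. \emph{Step 1 (unroll).} Writing $S_t:=\sum_{s=0}^{t-1}\eta_s$ and $P_t:=A\eta_t^2+\bigl(\tfrac{b^2}{2c}+\tfrac{\tilde H}{n}\bigr)\eta_t\ge 0$, Lemma~\ref{lem:closeXi} reads $\bar\Xi_{t+1}\le(1-\tfrac c2\eta_t)\bar\Xi_t+P_t$. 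Since Assumption~\ref{asm:stepsize} gives $\eta_t\le 1/\rho$ and $c\le\rho/2$, so $c\eta_t\le\tfrac12$, I would use $1-\tfrac c2\eta_t\le e^{-\frac c2\eta_t}$ and multiply through by $e^{\frac c2 S_{t+1}}=e^{\frac c2(S_t+\eta_t)}$, turning the recursion into the telescoping inequality $e^{\frac c2 S_{t+1}}\bar\Xi_{t+1}\le e^{\frac c2 S_t}\bar\Xi_t+e^{\frac c2 S_{t+1}}P_t$; summing and dividing back out gives
\[
\bar\Xi_t\ \le\ e^{-\frac c2 S_t}\,\bar\Xi_0\ +\ \sum_{r=0}^{t-1}e^{-\frac c2(S_t-S_{r+1})}\,P_r .
\]

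\emph{Step 2 (weight and sum).} Next I would multiply this by $e^{-\theta c(S_T-S_{t+1})}$ and sum over $t\in\{0,\dots,T-1\}$, splitting into an $\bar\Xi_0$-part and a $P_r$-part. The computation rests on the identity obtained by substituting $S_{t+1}=S_t+\eta_t$,
\[
e^{-\theta c(S_T-S_{t+1})}\,e^{-\frac c2(S_t-S_{r+1})}
= e^{-\frac c2(S_T-S_{r+1})}\; e^{-(\theta-\frac12)c(S_T-S_t)}\; e^{\theta c\eta_t},
\]
together with its $r=t$ specialization $e^{-\theta c(S_T-S_{t+1})}e^{-\frac c2 S_t}=e^{-\frac c2 S_T}e^{-(\theta-\frac12)c(S_T-S_t)}e^{\theta c\eta_t}$. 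Because $\theta-\tfrac12>0$, the middle factor is geometrically summable in $t$: using $S_T-S_t=\sum_{s=t}^{T-1}\eta_s\ge(T-t)\eta_{\min,T}$ I get $\sum_t e^{-(\theta-\frac12)c(S_T-S_t)}\le\sum_{k\ge1}e^{-(\theta-\frac12)c k\eta_{\min,T}}\le\bigl(1-e^{-(\theta-\frac12)c\eta_{\min,T}}\bigr)^{-1}$, while $e^{\theta c\eta_t}\le e^{c\eta_t}\le e^{1/2}\le 2$. Applied to the $\bar\Xi_0$-part this yields the first claimed term $\tfrac{2}{1-e^{-(\theta-\frac12)c\eta_{\min,T}}}e^{-\frac c2 S_T}\bar\Xi_0$; for the $P_r$-part, swapping the order of summation and running the same estimate on the inner sum over $t\ge r+1$ gives
\[
\sum_{t=0}^{T-1}e^{-\theta c(S_T-S_{t+1})}\sum_{r=0}^{t-1}e^{-\frac c2(S_t-S_{r+1})}P_r
\ \le\ \frac{2}{1-e^{-(\theta-\frac12)c\eta_{\min,T}}}\sum_{r=0}^{T-1}e^{-\frac c2\sum_{s=r+1}^{T-1}\eta_s}P_r .
\]

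\emph{Step 3 (evaluate the $P_r$-sum and the ``in particular'' clause).} Substituting $P_r=A\eta_r^2+\bigl(\tfrac{b^2}{2c}+\tfrac{\tilde H}{n}\bigr)\eta_r$, I would keep the $\eta_r^2$ contribution verbatim as $A\sum_r\eta_r^2 e^{-\frac c2\sum_{s=r+1}^{T-1}\eta_s}$ and bound the $\eta_r$ contribution with Lemma~\ref{lem:kernel-corr} at $\gamma=\tfrac c2$: $\sum_r\eta_r e^{-\frac c2\sum_{s=r+1}^{T-1}\eta_s}\le\tfrac{e^{(c/2)\eta_{\max,T}}}{c/2}\le\tfrac4c$ (using $c\eta_{\max,T}\le\tfrac12$). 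Multiplying through by the prefactor $\tfrac{2}{1-e^{-(\theta-\frac12)c\eta_{\min,T}}}$ recovers the remaining two terms of the first display. For the sharpened ``in particular'' bound, I note $(\theta-\tfrac12)c\eta_{\min,T}\le\tfrac14\le\tfrac12$ (as $\theta<1$ and $c\eta_{\min,T}\le\tfrac12$), so the elementary inequality $1-e^{-x}\ge x/2$ on $[0,\tfrac12]$ gives $\tfrac{1}{1-e^{-(\theta-\frac12)c\eta_{\min,T}}}\le\tfrac{2}{(\theta-\frac12)c\eta_{\min,T}}$, and substituting this into each prefactor produces the stated form.

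The main obstacle will be Step 2: cleanly separating the ``leading'' decay $e^{-\frac c2(S_T-\cdot)}$ — which reproduces exactly the kernel sums already handled in Lemma~\ref{lem:kernel-corr} — from the ``summable residual'' $e^{-(\theta-\frac12)c(S_T-S_t)}$ in both the diagonal $\bar\Xi_0$ term and the off-diagonal double sum, and checking that the residual geometric series converges with rate governed by $\eta_{\min,T}$; this is the one place where the hypothesis $\theta>\tfrac12$ is essential. Everything else is a mechanical application of $1-x\le e^{-x}$, Lemma~\ref{lem:kernel-corr}, and $1-e^{-x}\ge x/2$ on $[0,\tfrac12]$.
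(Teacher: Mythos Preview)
Your proposal is correct and follows essentially the same route as the paper's proof: unroll \eqref{eq:Xi-closed} via $1-\tfrac c2\eta_k\le e^{-\frac c2\eta_k}$, multiply by the weight $e^{-\theta c(S_T-S_{t+1})}$, factor each summand into a leading decay $e^{-\frac c2(S_T-S_{r+1})}$ times a residual $e^{-(\theta-\frac12)c(S_T-\cdot)}$, bound the residual by a geometric series controlled by $\eta_{\min,T}$, and then invoke Lemma~\ref{lem:kernel-corr} on the remaining $\eta_r$-kernel sum. The only cosmetic differences are that the paper writes the residual with $S_T-S_{t+1}$ (bounding $e^{\frac c2\eta_t}\le 2$) while you use $S_T-S_t$ (bounding $e^{\theta c\eta_t}\le 2$); both identities are correct and yield the same constants.
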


\begin{proof}
Start from \eqref{eq:Xi-closed} and unroll using $1-\frac{c}{2}\eta_k\le e^{-\frac{c}{2}\eta_k}$:
\[
\bar\Xi_t
\ \le\
e^{-\frac{c}{2}S_t}\bar\Xi_0
+\sum_{k=0}^{t-1}e^{-\frac{c}{2}(S_t-S_{k+1})}\big(A\eta_k^2+q\,\eta_k\big),
\qquad
q:=\tfrac{b^2}{2c}+\tfrac{\tilde H}{n}.
\]
Multiply by the weight $w_t:=e^{-\theta c (S_T-S_{t+1})}$ and sum over $t=0,\dots,T-1$:
\[
\sum_{t=0}^{T-1}w_t\bar\Xi_t
\ \le\
\underbrace{\sum_{t=0}^{T-1}w_t e^{-\frac{c}{2}S_t}}_{\mathsf{I}}\bar\Xi_0
+\ A\underbrace{\sum_{t=0}^{T-1}\sum_{k=0}^{t-1}w_t e^{-\frac{c}{2}(S_t-S_{k+1})}\eta_k^2}_{\mathsf{II}}
+\ q\underbrace{\sum_{t=0}^{T-1}\sum_{k=0}^{t-1}w_t e^{-\frac{c}{2}(S_t-S_{k+1})}\eta_k}_{\mathsf{III}}.
\]
Throughout, Assumption~\ref{asm:stepsize} implies $c\,\eta_t\le\tfrac12$ (since $c\le\rho/2$ and $\eta_t\le 1/\rho$), hence $1-e^{-x}\ge x/2$ for $x\in[0,\tfrac12]$.

\emph{Claim A.} For $\theta\in(\tfrac12,1)$,
\[
\mathsf{I}
=\sum_{t=0}^{T-1} e^{-\theta c(S_T-S_{t+1})}\,e^{-\frac{c}{2}S_t}
\;\le\;
\frac{2\,e^{-\frac{c}{2}S_T}}{1-\exp\!\bigl(-(\theta-\tfrac12)c\,\eta_{\min,T}\bigr)}
\;\le\;
\frac{4}{(\theta-\tfrac12)c\,\eta_{\min,T}}\;e^{-\frac{c}{2}S_T},
\]
where $\eta_{\min,T}:=\min_{0\le t<T}\eta_t$.

\emph{Proof of Claim A.}
Using $S_t=S_{t+1}-\eta_t$ and $S_T=S_{t+1}+(S_T-S_{t+1})$,
\[
e^{-\theta c(S_T-S_{t+1})}\,e^{-\frac{c}{2}S_t}
= e^{-\frac{c}{2}S_T}\,e^{-(\theta-\tfrac12)c(S_T-S_{t+1})}\,e^{\frac{c}{2}\eta_t}.
\]
Since $c\,\eta_t\le\tfrac12$, $e^{\frac{c}{2}\eta_t}\le e^{1/4}\le 2$. Therefore,
\[
\mathsf{I}
\le
2\,e^{-\frac{c}{2}S_T}\sum_{t=0}^{T-1}
e^{-(\theta-\tfrac12)c(S_T-S_{t+1})}.
\]
Because $S_T-S_{t+1}\ge (T-1-t)\,\eta_{\min,T}$, the sum is bounded by a geometric series, giving the first display; the second follows from $1-e^{-x}\ge x/2$ for $x\in[0,1]$.\hfill$\triangle$

\emph{Claim B.}
Let $\alpha:=(\theta-\tfrac12)c>0$. For any fixed $k\in\{0,\dots,T-2\}$,
\[
\sum_{t=k+1}^{T-1} w_t\,e^{-\frac{c}{2}(S_t-S_{k+1})}
\ \le\
\frac{2}{1-e^{-\alpha\,\eta_{\min,T}}}\;
e^{-\frac{c}{2}(S_T-S_{k+1})}
\ \le\
\frac{4}{(\theta-\tfrac12)c\,\eta_{\min,T}}\;
e^{-\frac{c}{2}(S_T-S_{k+1})}.
\]

\emph{Proof of Claim B.}
As above,
\[
w_t\,e^{-\frac{c}{2}(S_t-S_{k+1})}
= e^{-\frac{c}{2}(S_T-S_{k+1})}\,e^{-\alpha(S_T-S_{t+1})}\,e^{\frac{c}{2}\eta_t}
\ \le\ 2\,e^{-\frac{c}{2}(S_T-S_{k+1})}\,e^{-\alpha(S_T-S_{t+1})}.
\]
Let $I_t:=\int_{S_{t+1}}^{S_{t+2}} e^{-\alpha(S_T-u)}\,\mathrm du$. A direct computation gives
\[
I_t=\frac{1}{\alpha}\,e^{-\alpha(S_T-S_{t+1})}\bigl(1-e^{-\alpha\eta_{t+1}}\bigr)
\quad\Rightarrow\quad
e^{-\alpha(S_T-S_{t+1})}
=\frac{\alpha}{1-e^{-\alpha\eta_{t+1}}}\,I_t
\le \frac{\alpha}{1-e^{-\alpha\eta_{\min,T}}}\,I_t,
\]
since $\eta_{t+1}\ge\eta_{\min,T}$. Summing over $t=k+1,\dots,T-1$ and using
$\sum_{t=k+1}^{T-1}I_t
=\int_{S_{k+1}}^{S_T} e^{-\alpha(S_T-u)}\,\mathrm du
\le 1/\alpha$ yields the claim; the explicit bound uses $1-e^{-x}\ge x/2$ with $x=\alpha\eta_{\min,T}\le\tfrac12$.\hfill$\triangle$

\medskip
With Claim A and $1-e^{-x} > x/2$,
\[
\mathsf{I}
\ \le\
\frac{2}{1-e^{-(\theta-\tfrac12)c\,\eta_{\min,T}}}\,
e^{-\frac{c}{2}S_T},
\]
and, applying Claim~B for each fixed $k$ inside the double sum and then summing in $t$,

\begin{align*}
    \mathsf{II}&=\sum_{t=0}^{T-1}\sum_{k=0}^{t-1}w_t e^{-\frac{c}{2}(S_t-S_{k+1})}\eta_k^2 \\
    &=\sum_{k=0}^{T-2}\sum_{t=k+1}^{T-1} w_t e^{-\frac{c}{2}(S_t-S_{k+1})}\eta_k^2 \tag{Fubini}\\
&\leq \sum_{k=0}^{T-2} 
\frac{2}{1-e^{-\alpha\,\eta_{\min,T}}}\;
e^{-\frac{c}{2}(S_T-S_{k+1})} \eta_k^2 \tag{Claim B}
\end{align*}

For $\mathsf{III}$, combine Claim~B with the kernel bound from Lemma \ref{lem:kernel-corr}
\[
\sum_{k=0}^{T-2}\eta_k\,e^{-\frac{c}{2}(S_T-S_{k+1})}
\ \le\ \frac{2\,e^{\frac{c}{2}\eta_{\max,T}}}{c}
\ \le\ \frac{4}{c}\qquad\text{(since $c\,\eta_{\max,T}\le\tfrac12$)},
\]
to obtain
\[
\mathsf{III}
\ \le\ \frac{8}{c\bigl(1-e^{-(\theta-\tfrac12)c\,\eta_{\min,T}}\bigr)}\;q.
\]

Putting the three pieces together and reindexing the middle sum gives the first displayed bound in the lemma. 
Putting the three pieces together and reindexing the middle sum gives the first displayed bound in the lemma. The ``in particular'' version follows by $1-e^{-x}\ge x/2$ with $x=(\theta-\tfrac12)c\,\eta_{\min,T}\le\tfrac12$.
\end{proof}

\subsection{Proof of Main Result}

\begin{proof}[Proof of Theorem~\ref{thm:sgda_stability_noA7}]
\;
\\
From Lemma~\ref{lem:closeXi}, for $c:=\min\{\mu_{\mathrm{PL}}/2,\rho/2\}$ and every $t$,
\begin{equation}\label{eq:Xi-closed-again}
\bar\Xi_{t+1}
\ \le\ \Bigl(1-\tfrac{c}{2}\eta_t\Bigr)\bar\Xi_t
\ +\ A\,\eta_t^2
\ +\ \frac{b^2}{2c}\,\eta_t
\ +\ \frac{\tilde H}{n}\,\eta_t,
\end{equation}
with the constants $A,b,\tilde H$ defined in Lemma~\ref{lem:closeXi}. Unrolling \eqref{eq:Xi-closed-again} and using $1-\frac{c}{2}\eta_k\le e^{-\frac{c}{2}\eta_k}$ yields
\begin{equation}\label{eq:star}
\bar\Xi_T
\ \le\
e^{-\frac{c}{2}\sum_{s=0}^{T-1}\eta_s}\,\bar\Xi_0
\ +\ A\sum_{t=0}^{T-1}\eta_t^2\,e^{-\frac{c}{2}\sum_{s=t+1}^{T-1}\eta_s}
\ +\ e^{\frac{c}{2}\eta_{\max,T}}\!\left(\frac{b^2}{c^2}
\ +\ \frac{2\tilde H}{c n}\right),
\tag{$\star$}
\end{equation}
where $\eta_{\max,T}:=\max_{0\le t<T}\eta_t$.

Lemma~\ref{lem:damp} (obtained from Lemma~\ref{lem:coupled_corrected}\,(D\(_{\rm weak}\))) gives
\begin{equation}
\bar D_T - \frac{C_{\mathrm{hit}}}{n} \le a \cdot \underbrace{\Big(\sum_{t=0}^{T-1}\eta_t^2\,e^{-2(2L-\frac{c}{2})\sum_{s=t+1}^{T-1}\eta_s}\Big)^{\!1/2}}_{\sqrt{S_u}} \cdot \underbrace{\Big(\sum_{t=0}^{T-1}e^{-c\sum_{s=t+1}^{T-1}\eta_s}\,\E[\bar\Xi_t]\Big)^{\!1/2}}_{\sqrt{S_v}}
\end{equation}
where $C_{\mathrm{hit}} := \frac{1}{\lambda}\left(2\tilde B_1G + 2G\Bigl(1+\frac{L}{\rho}\Bigr)\right)$. The potential sum, $S_v$, is bounded by Lemma~\ref{lem:sumXi}. To combine the terms into the final compact form, we bound the products that arise after substitution. Since $S_u$ is a convergent sum, it is bounded by a constant,
\[
C_S\ :=\ \sum_{t=0}^{\infty}\eta_t^2,
\]
which depends only on the stepsize schedule $\{\eta_t\}$. Under the harmonic rule $\eta_t=\frac{c_1}{c_2+t}$, $C_S=c_1^2\sum_{t=0}^{\infty}(c_2+t)^{-2}\le c_1^2\frac{\pi^2}{6}$, and for $c_2>1$ one may also use $C_S\le \frac{c_1^2}{\,c_2-1\,}$. Moreover $S_u\!\to\!0$ as $T\!\to\!\infty$, while $S_v$ need not vanish; in the product we will use only the decaying initialization term and the variance component
\[
S_v^{\mathrm{var}}(T)\ :=\ \sum_{t=0}^{T-1}\eta_t^2\,e^{-\frac{c}{2}\sum_{s=t+1}^{T-1}\eta_s}.
\]
This allows for two key bounds:
\begin{enumerate}[leftmargin=*,label=(\roman*),itemsep=1pt,topsep=1pt]
    \item The product involving the term $e^{-\frac{c}{2} \sum_{s=0}^{T-1} \eta_s} \bar{\Xi}_0$ is bounded by absorbing $S_u$ into the constant: $S_u \cdot e^{-\frac{c}{2}\sum\eta_s}\,\Psi^{\max}_{\alpha,0} \le C_S \cdot e^{-\frac{c}{2}\sum\eta_s}\,\Psi^{\max}_{\alpha,0}$.
    \item The product of variance sums, $S_u \cdot S_v^{\mathrm{var}}(T)$, is bounded by a multiple of their sum: $S_u \cdot S_v^{\mathrm{var}}(T) \le C_S \cdot S_v^{\mathrm{var}}(T) \le C_S \cdot \big(S_u + S_v^{\mathrm{var}}(T)\big)$.
\end{enumerate}
Substituting these bounds, grouping all constants ($a^2, C_S$, etc.) into $C_{\mathrm{var}}$, and relaxing the exponent\footnote{Since $x\mapsto e^{-\gamma x}$ is decreasing in $\gamma>0$, replacing the larger decay parameter by the smaller
$\kappa:=\min\{\,2L-\theta,\tfrac{3}{4}c\,\}$ only increases the weighted sums, hence preserves a valid upper bound; we refer to this monotone weakening as ``relaxing the exponent.''
} in $S_u$ by defining $\kappa = \min\{\frac{3c}{4},\,2L-\frac{c}{2}\}$ yields the compact bound:
\begin{align}
    &\bar D_T \notag
\ \le \\ 
&2\,C_{\mathrm{dist}}\,
\sqrt{
e^{-\,\frac{c}{2}\sum_{s=0}^{T-1}\eta_s}\,\Psi^{\max}_{\alpha,0}
\;+\;
C_{\mathrm{var}}\Bigl(L(1+L/\rho)+\alpha\,\tfrac{L^2}{\rho}\Bigr)\,G^2
\Bigg[
\sum_{t=0}^{T-1}\eta_t^2\,e^{-\,2\kappa\sum_{s=t+1}^{T-1}\eta_s}
\;+\;
\sum_{t=0}^{T-1}\eta_t^2\,e^{-\,\frac{c}{2}\sum_{s=t+1}^{T-1}\eta_s}
\Bigg]
}
\ \label{eq:general-rate-bound}
\\
&+\ \frac{2G}{n}\!\left(\frac{(1+L/\rho)^2}{\sqrt{\mu_{\mathrm{PL}}\mu_{\mathrm{QG}}}}+\frac{1}{\rho}\right)
\ +\ \frac{C_{\mathrm{hit}}}{n}, \notag
\end{align}
with $\kappa=\min\{\frac{3c}{4},\,2L-\frac{c}{2}\}$.

Let $\eta_t=\frac{c_1}{c_2+t}$ with $c_1>0$, $c_2\ge1$, and $c_1<\min\{\frac{1}{2\kappa},\frac{2}{c}\}$ so that the geometric constants below are finite. Then:
\begin{align*}
\sum_{s=0}^{T-1}\eta_s
&= c_1\sum_{s=0}^{T-1}\frac{1}{c_2+s}
    \ \ge\ c_1\log\!\Bigl(\frac{c_2+T}{c_2}\Bigr),
\\[2pt]
e^{-\,\frac{c}{2}\sum_{s=0}^{T-1}\eta_s}
&\le \Bigl(\frac{c_2}{c_2+T}\Bigr)^{\frac{c\,c_1}{2}},
\qquad
e^{-\,\kappa\sum_{s=0}^{T-1}\eta_s}
\le \Bigl(\frac{c_2}{c_2+T}\Bigr)^{\kappa\,c_1},
\\[2pt]
\sum_{t=0}^{T-1}\eta_t^2\,e^{-\,2\kappa\sum_{s=t+1}^{T-1}\eta_s}
&\le \frac{c_1^2}{\bigl(1-2\kappa c_1\bigr)\,(c_2+T)},
\qquad
\sum_{t=0}^{T-1}\eta_t^2\,e^{-\,\frac{c}{2}\sum_{s=t+1}^{T-1}\eta_s}
\le \frac{c_1^2}{\bigl(1-\tfrac{c}{2} c_1\bigr)\,(c_2+T)}.
\end{align*}
Substituting these into the general bound above yields the explicit finite–$T$ rate:
\begin{align}
\varepsilon = \bar D_T
&\le 2\,C_{\mathrm{dist}}\,
\Bigg[
\Bigl(\frac{c_2}{c_2+T}\Bigr)^{\frac{c_1 c}{2}}\Psi^{\max}_{\alpha,0}
+ \frac{C_{\mathrm{var}}\,
        c_1^2\bigl(L(1+L/\rho)+\alpha L^2/\rho\bigr)G^2}{(c_2+T)}
\!\left(
\frac{1}{1-2\kappa c_1}
+\frac{1}{1-\tfrac{c}{2} c_1}
\right)
\Bigg]^{1/2}
\notag\\[2pt]
&\quad
+ \frac{2G}{n}\!\left(\frac{(1+L/\rho)^2}{\sqrt{\mu_{\mathrm{PL}}\mu_{\mathrm{QG}}}}+\frac{1}{\rho}\right)
+ \frac{C_{\mathrm{hit}}}{n}. \label{eq:explicit-rate-kappa}
\end{align}
Thus, for fixed $n$ and feasible $(c_1,c_2)$, the optimization/stochastic term in \eqref{eq:explicit-rate-kappa}
decays at rate
\(
O\!\bigl((c_2+T)^{-1/2}\bigr),
\)
while the initialization bias decays polynomially as
\(
\bigl(\tfrac{c_2}{c_2+T}\bigr)^{\frac{c_1 c}{4}}
\)
inside the square root.
\end{proof}

\begin{proof}[Proof of Theorem~\ref{thm:pop_excess_risk}]
Write
$\widehat{\theta}:=\widehat{\boldsymbol{\theta}}_{2}^{(T)}$,
$\theta_{ERM}:=\boldsymbol{\theta}_{2}^{\operatorname{ERM}}$ and
$\theta^{\ast}:=\boldsymbol{\theta}_{2}^{\ast}$.
Add and subtract the empirical risks of these three parameters:
\[
\begin{aligned}
\mathcal R(\widehat{\theta})-\mathcal R(\theta^{\ast})
&=
\underbrace{\bigl[\mathcal R(\widehat{\theta})
                 -\widehat{\mathcal R}_{n}(\widehat{\theta})\bigr]}_{(\mathrm{A})}
\;+\;
\underbrace{\bigl[\widehat{\mathcal R}_{n}(\widehat{\theta})
                 -\widehat{\mathcal R}_{n}(\theta_{ERM})\bigr]}_{(\mathrm{C})}
\\
&\quad
+\underbrace{\bigl[\widehat{\mathcal R}_{n}(\theta_{ERM})
                 -\widehat{\mathcal R}_{n}(\theta^{\ast})\bigr]}_{(\mathrm{D})\;\le 0}
\;+\;
\underbrace{\bigl[\widehat{\mathcal R}_{n}(\theta^{\ast})
                 -\mathcal R(\theta^{\ast})\bigr]}_{(\mathrm{B})}.
\end{aligned}
\]

Theorem \ref{thm:empRM_gen} gives
\[
\mathbb{E}\bigl[(\mathrm{A})\bigr]
\;\le\;
(1+L/\rho)G\varepsilon_T.
\]

Because \(\theta^{\ast}\) is deterministic,
\(\mathbb{E}[(\mathrm{B})]=0\). From Lemma \ref{lem:sgda_global_conv_theta}, we have
\[
(\mathrm{C})
\;=\;
\widehat{\mathcal R}_{n}(\widehat{\theta})
       -\widehat{\mathcal R}_{n}(\theta_{ERM})
\;\le\;\frac{d_{1}}{d_{2}+T}
\tag{$\dagger$}
\]

\((\mathrm{D})\le 0\) deterministically (by definition of the ERM) and thus can be discarded when taking an upper bound.

Taking expectations and using
\(\mathbb{E}[(\mathrm{B})]=0\) and \((\dagger)\):

\[
\mathbb{E}\bigl[\mathcal R(\widehat{\theta})-\mathcal R(\theta^{\ast})\bigr]
\;\le\;
(1+L/\rho)G\varepsilon_T
\;+\;
\frac{d_{1}}{d_{2}+T}.
\]

\end{proof}

\end{document}